\documentclass[11pt]{article}
\usepackage[colorlinks=true,linkcolor=blue,urlcolor=black,citecolor=blue,breaklinks]{hyperref}
\usepackage{color}
\usepackage{enumerate}
\usepackage{graphicx}
\usepackage{varwidth}
\usepackage{mathrsfs}
\usepackage{mathtools}
\usepackage{caption}
\usepackage{subcaption}
\captionsetup[subfigure]{labelformat=empty}
\usepackage{overpic}
\usepackage{multirow}
\usepackage{fullpage}
\usepackage{authblk}

\usepackage[round]{natbib}
\bibliographystyle{apalike}
\bibpunct{(}{)}{;}{a}{,}{,}

\usepackage{amsmath,amsthm,amssymb,colonequals,etoolbox}
\usepackage{thmtools}

\newcommand{\calW}{\mathcal{W}}
\newcommand{\E}{\mathbb{E}}
\newcommand{\R}{\mathbb{R}}
\newcommand{\calR}{\mathcal{R}}
\newcommand{\calX}{\mathcal{X}}
\newcommand{\diam}{\mathsf{Diam}}
\newcommand{\calM}{\mathcal{M}}
\newcommand{\calC}{\mathcal{C}}

\newcommand{\calD}{\mathcal{D}}
\newcommand{\calF}{\mathcal{F}}
\newcommand{\wh}{\widehat}

\newcommand{\bw}{\mathbf{w}}
\newcommand{\bphi}{\boldsymbol\phi}

\newcommand{\bx}{\mathbf{x}}

\newcommand{\bz}{\mathbf{z}}

\newcommand{\bu}{\mathbf{u}}

\newcommand{\bThet}{\boldsymbol\Theta}
\newcommand{\bthet}{\boldsymbol\theta}

\newcommand{\bH}{\mathbf{H}}

\newcommand{\bI}{\mathbf{I}}

\newcommand{\by}{\mathbf{y}}

\newcommand{\err}{\text{err}}
\DeclareMathOperator*{\argmin}{arg\,min}

\DeclareMathOperator*{\Tr}{Tr}
\newcommand{\bX}{\mathbf{X}}
\newcommand{\ip}[2]{\left\langle #1, #2 \right\rangle}
\newcommand{\T}{\mathsf{T}}

\newcommand{\bregd}[2]{d_\psi\left(#1\ \middle\|\ #2\right)}
\newcommand{\bregdd}[2]{d_{\psi^*}\left(\nabla\psi\left(#1\right)\ \middle\|\ \nabla\psi\left(#2\right)\right)}

\newcommand{\dkl}[2]{d_{KL}\left(#1\ \middle\|\ #2\right)}

\newcommand{\norm}[1]{\left\Vert#1\right\Vert}
\newcommand{\abs}[1]{\left|#1\right|}

\newcommand{\Exp}[1]{\mathbb{E}\left[#1\right]}
\newcommand{\Expsub}[2]{\mathbb{E}_{#1}\left[#2\right]}


\newtheorem{thm}{Theorem}[section]
\newtheorem{lem}{Lemma}[section]
\newtheorem{cor}{Corollary}[section]

\theoremstyle{definition}

\theoremstyle{definition}

\theoremstyle{definition}

\theoremstyle{definition}
\newtheorem{assumption}{Assumption}[section]

\DeclareMathOperator{\arcsinh}{arcsinh}

\title{The role of optimization geometry in single neuron learning}
\author[1]{Nicholas M.\ Boffi}
\author[2]{Stephen Tu}
\author[2,3]{Jean-Jacques E.\ Slotine}
\affil[1]{Courant Institute of Mathematical Sciences, New York University}
\affil[2]{Google Brain Robotics}
\affil[3]{Nonlinear Systems Laboratory, Massachusetts Institute of Technology}
\date{\today}

\begin{document}
\maketitle

\begin{abstract}
Recent numerical experiments have demonstrated that the choice of optimization geometry used during training can impact generalization performance when learning expressive nonlinear model classes such as deep neural networks. 
These observations have important implications for modern deep learning but remain poorly understood due to the difficulty of the associated nonconvex optimization problem.
Towards an understanding of this phenomenon, 
we analyze a family of pseudogradient methods for learning generalized linear models under the square loss -- a simplified problem containing both nonlinearity in the model parameters and nonconvexity of the optimization which admits a single neuron as a special case.
We prove non-asymptotic bounds on the generalization error that sharply characterize how the interplay between the optimization geometry and the feature space geometry sets the out-of-sample performance of the learned model.
Experimentally, selecting the optimization geometry as suggested by our theory leads to improved performance in generalized linear model estimation problems such as nonlinear and nonconvex variants of sparse vector recovery and low-rank matrix sensing.
\end{abstract}

\section{Introduction}
Optimization geometry, whereby the loss gradient is computed with respect to a non-Euclidean metric, is a common tool in modern machine learning. Notable examples of algorithms that use non-Euclidean metrics to improve convergence include mirror descent~\citep{beck_teb, nem_yud, krich}, natural gradient descent~\citep{amari, gunasekar2020mirrorless}, and adaptive gradient methods such as AdaGrad~\citep{adagrad} and Adam~\citep{adam}. 

Recently, \citet{azizan_2} showed empirically that varying the optimization geometry through choice of a mirror descent potential can improve the generalization performance of expressive model classes such as deep neural networks, but a theoretical characterization of when and why this will occur is currently absent. 
One path towards explaining these observations could be to consider the effect of mirror descent in linear regression. However, nonconvexity of the optimization problem and nonlinearity in the parameters are hallmarks of deep networks, and it is not clear that conclusions about linear models lacking these properties will carry over to deep learning. 

The simplest model class containing both these characteristics is the class of generalized linear models (GLMs). GLMs extend linear models by incorporating a single ``layer'' of nonlinearity~\citep{GLMs}, wherein the dependent variables are assumed to be given as a known activation function of a linear predictor of the data. From a modern perspective, a GLM represents a single neuron, and as such can be seen as one of the most elementary models of a neural network that allows for rigorous analysis yet still contains both nonconvexity and nonlinearity. As a result, guarantees for GLMs may provide insight into the theoretical properties of more complex models, an observation that has been exploited by several recent works~\citep{maillard2021landscape, barbier5451, agnostic_neuron}.

In this work, we revisit the GLM-tron of \citet{glmtron}.
Leveraging recent developments in continuous-time optimization and adaptive control theory~\citep{boffi2019higherorder}, we
extend the continuous-time limit of the GLM-tron iteration 
to a mirror descent-like flow that we call the Reflectron. The Reflectron is specified by the choice of a pseudogradient $\xi$ and a strongly convex potential function $\psi$; these two ingredients together define a search direction and a search geometry. As particular cases, the Reflectron recovers the GLM-tron, gradient descent, and mirror descent.

\paragraph{Outline of results} We first prove non-asymptotic generalization error bounds for the continuous-time Reflectron in the full-batch setting. 
Our results highlight how the choice of $\psi$ can improve the \textit{statistical performance} of the model if selected in a way that respects the underlying geometric structure of the feature space. In the realizable setting, we further characterize the learned parameters as minimizing the Bregman divergence under $\psi$ between the initialization and the interpolating manifold.

We subsequently discretize the continuous dynamics via the forward-Euler method to obtain an implementable algorithm with matching guarantees. We further consider a stochastic gradient-like setting for learning GLMs in the realizable and bounded noise settings, and prove $\mathcal{O}\left(1/t\right)$ and $\mathcal{O}\left(1/\sqrt{t}\right)$ bounds for the generalization error, respectively. 

We conclude with experiments highlighting the ability of our theoretical results to capture the importance of optimization geometry in practice. We study two model problems that amount to GLM estimation under the square loss: nonlinear and nonconvex variants of sparse vector recovery and low-rank matrix sensing. By choosing the mirror descent potential as suggested by our analysis, we demonstrate improved generalization performance of the learned model.

\subsection{Related work and significance}

\paragraph{Applications of the GLM-Tron} 
The GLM-tron of~\citet{glmtron} was the first computationally and statistically efficient algorithm for learning both GLMs and Single Index Models (SIMs).
A recent extension known as the BregmanTron~\citep{bregmantron} obtains improved guarantees for the SIM problem by applying Bregman divergences to directly learn the loss function; here, we instead focus on the GLM-tron as an algorithmic primitive.
~\citet{agnostic_neuron} use similar proof techniques to~\citet{glmtron} to analyze gradient descent on the square loss for learning a single neuron. Our work extends their results to the mirror descent and pseudogradient settings to characterize the impact of optimization geometry in a similar class of problems.~\citet{foster2020learning} utilize the GLM-tron for system identification in a particular nonlinear discrete-time dynamics model, and~\citet{alphatron} use a kernelized GLM-tron to provably learn two-hidden-layer neural networks. Similar update laws have independently been developed in the adaptive control literature~\citep{tyukin_paper}, along with mirror descent and momentum variants~\citep{boffi2019higherorder}.

\paragraph{Implicit bias and generalization} 
Modern machine learning frequently takes place in a high-dimensional regime with many more parameters than examples. It is now well-known that deep networks will interpolate noisy data, yet exhibit low generalization error \textit{despite interpolation} when trained on meaningful data~\citep{und_dl}. Defying classical statistical wisdom, an explanation for this apparent paradox has been given in the \textit{implicit bias}~\citep{gunasekar_1} of optimization algorithms and the double descent curve~\citep{belkin, barlett, sahai, hastie2019surprises}. The notion of implicit bias captures the proclivity of a method to converge to a particular kind of interpolating solution -- such as minimum norm -- when many options exist.

Implicit bias has been categorized for gradient-based algorithms on separable classification problems~\citep{gunasekar_1, gunasekar_4}, regression problems~\citep{gunasekar_2}, and multilayer models~\citep{gunasekar_3, gunasekar_5, gunasekar_6}. Approximate results are also available for the implicit bias of gradient-based algorithms when used to train deep networks~\citep{azizan_1}. Moreover, it was shown empirically~\citep{azizan_2} that the choice of mirror descent potential affects the generalization error of deep networks, and a qualitative explanation was provided in terms of changing the implicit bias. Our focus on GLMs allows us to quantify the generalization performance of models trained with different potentials, which provides an analytical and geometric explanation for this observation. While we focus on the square loss (rather than cross entropy), a number of recent works have investigated the possibility of using the square loss for training deep networks for classification~\citep{mse_1, mse_2, mse_3}.

\section{Problem setting and background}
Our problem setting follows the original work of~\citet{glmtron}. Let $\{\bx_i, y_i\}_{i=1}^n$ denote an i.i.d.\ dataset sampled from a distribution $\mathcal{D}$ supported on $\mathcal{X}\times [0, 1]$, $\calX \subseteq \R^d$, where $\Exp{y_i|\bx_i} = u\left(\ip{\bthet}{\bx_i}\right)$ for $\bthet\in\mathbb{R}^d$ a fixed, unknown vector of parameters. $u: \mathbb{R}\rightarrow [0, 1]$ is assumed to be a known, nondecreasing, and $L$-Lipschitz activation function. Our goal is to approximate $\Exp{y_i|\bx_i}$ as measured by the expected square loss. To this end, for a hypothesis $h : \R^d \rightarrow \R$, we define the generalization error $\err(h)$ and the excess risk compared to the Bayes-optimal predictor $\varepsilon(h)$ as
\begin{align}
    \label{eqn:err_1}
    \err(h) &= \Expsub{\bx, y}{\left(h\left(\bx\right) - y\right)^2},\\
    \label{eqn:err_2}
    \varepsilon(h) &= \Expsub{\bx, y}{\left(h(\bx) - u\left(\ip{\bthet}{\bx}\right)\right)^2},
\end{align}
with $\wh{\err}(h)$ and $\wh{\varepsilon}(h)$ their empirical counterparts over the dataset. Towards minimizing $\err(h)$, we present a family of mirror descent-like algorithms for minimizing $\wh{\varepsilon}(h)$ over parametric hypotheses of the form $h(\bx) = u\left(\ip{\wh{\bthet}}{\bx}\right)$. Via standard statistical techniques~\citep{bartlett_gen}, we transfer our guarantees on $\wh{\varepsilon}(h)$ to $\varepsilon(h)$, which in turn implies a small $\err(h)$. The starting point of our analysis is the GLM-tron of~\citet{glmtron}, which is an iterative update law of the form
\begin{align}
    \label{eqn:glm}
    \wh{\bthet}_{t+1} &= \wh{\bthet}_t - \frac{1}{n}\sum_{i=1}^n\left(u\left(\ip{\wh{\bthet}}{\bx_i}\right) - y_i\right)\bx_i,
\end{align}
with initialization $\wh{\bthet}_1 = \mathbf{0}$. Equation~\eqref{eqn:glm} is a pseudogradient-based update law obtained from gradient descent on the square loss $\wh{\err}(h)$ by replacing all appearances of $u'$ by the fixed value $1$. It admits a continuous-time limit for an infinitesimal step size,
\begin{align}
    \label{eqn:glm_cont}
    \frac{d}{dt}\wh{\bthet} &= -\frac{1}{n}\sum_{i=1}^n\left(u\left(\ip{\wh{\bthet}}{\bx_i}\right) - y_i\right)\bx_i,
\end{align}
where (\ref{eqn:glm}) is obtained from (\ref{eqn:glm_cont}) via a forward-Euler discretization with a timestep $\Delta t = 1$. 

\paragraph{Notation.} Throughout this paper, we will use the notation $\frac{d}{dt}\bx = \dot{\bx}$ interchangeably for any time-dependent function $\bx(t)$. Moreover, we will denote
\begin{equation*}
    \mathcal{R}_n\left(\mathcal{F}\right) = \Expsub{\bx_i, \epsilon_i}{\sup_{h\in\calF}\frac{1}{n}\sum_{i=1}^n\epsilon_i h(\bx_i)}
\end{equation*}
the Rademacher complexity of a function class $\calF$ on $n$ samples, and the shorthand $\zeta(h) = \max\{\varepsilon(h), \wh{\varepsilon}(h)\}$ in our generalization error bounds.

\section{Continuous-time theory}
\label{sec:cont}
In this section, we analyze a continuous-time flow that we will discretize to obtain implementable algorithms in Section~\ref{sec:disc}. Our continuous-time analysis sketches the essence of the techniques required to obtain discrete-time guarantees, and provides intuition for our main results while avoiding discretization-specific details. The class of algorithms we consider is captured by the dynamics
\begin{align}
    \label{eqn:ref}
    \frac{d}{dt}\nabla\psi\left(\wh{\bthet}\right) = -\frac{1}{n}\sum_{i=1}^n\left(u\left(\ip{\wh{\bthet}}{\bx_i}\right) - y_i\right)\xi\left(\wh{\bthet}, \bx_i\right)\bx_i.
\end{align}
for $\psi: \calM \rightarrow \mathbb{R}$, $\calM \subseteq \mathbb{R}^d$, and $\xi : \calM \times \calX \rightarrow \R$ with $\xi \geq 0$. To obtain guarantees on the algorithms represented by \eqref{eqn:ref}, we require two primary assumptions.
\begin{assumption}
\label{assmp:sc}
$\psi : \calM \rightarrow \R$ is $\sigma$-strongly convex with respect to a norm $\norm{\cdot}$. Moreover, $\min_{\bw\in\calM}\psi(\bw) = 0$.
\end{assumption}
Note that any $\psi$ with finite minimum can be shifted to satisfy the final requirement of Assumption~\ref{assmp:sc}, as our algorithms only depend on gradients and Bregman divergences of $\psi$.
\begin{assumption}
\label{assmp:link}
The activation function $u : \R\rightarrow [0, 1]$ is known, nondecreasing, and $L$-Lipschitz.
\end{assumption}
The parameters of the hypothesis $h_t$ at time $t$ are computed by applying the inverse gradient of $\psi$, which is guaranteed to exist by strong convexity. The mirror descent generalization of the GLM-tron is obtained from \eqref{eqn:ref} by setting $\xi\left(\bw, \bx\right) = 1$, while mirror descent itself is obtained by setting $\xi\left(\bw, \bx\right) = u'\left(\ip{\bw}{\bx}\right)$.
In order to outline the intuition 
behind our results, we focus exclusively on the case when
$\xi\left(\bw, \bx\right) = 1$ and  
defer the analysis with arbitrary $\xi$ to discrete-time.

\subsection{Statistical guarantees}
\label{sec:stat}
The following theorem gives a statistical guarantee for the Reflectron in continuous-time.
It shows that for any choice of potential function $\psi$, the Reflectron eventually finds a nearly Bayes-optimal predictor.
\begin{thm}
\label{thm:stat}
Suppose that $\{\bx_i, y_i\}_{i=1}^n$ are drawn i.i.d.~from a distribution $\mathcal{D}$ supported on $\mathcal{X}\times[0, 1]$ where $\mathbb{E}\left[y|\bx\right] = u\left(\ip{\bthet}{\bx}\right)$, $u$ satisfies Assumption~\ref{assmp:link}, and $\bthet \in \R^d$ is an unknown vector of parameters. Let $\psi$ satisfy Assumption~\ref{assmp:sc}. Assume that $\norm{\frac{1}{n} \sum_{i=1}^n \left(u\left(\ip{\bthet}{\bx_i}\right) - y_i\right)\bx_i}_* \leq \eta$ where $\Vert\cdot\Vert_*$ denotes the dual norm
to $\norm{\cdot}$. Then for any $\delta \in (0, 1)$, there exists some time $t < \sqrt{\frac{\psi(\bthet)\sigma}{2\eta^2}}$ such that the hypothesis $h_t = u\left(\ip{\wh{\bthet}(t)}{\bx}\right)$ satisfies
\begin{align*}
    \zeta\left(h_t\right) &\leq \sqrt{\frac{8L^2 \eta^2 \psi(\bthet)}{\sigma}} + 4\mathcal{R}_n\left(\mathcal{F}\right) + \sqrt{\frac{8\log(1/\delta)}{n}} \:,
\end{align*}
with probability at least $1-\delta$, where $\wh{\bthet}(0) = \argmin_{\bw \in \calM}\psi(\bw)$, and 
\begin{equation*}
    \mathcal{F} = \left\{\bx \mapsto \ip{\bw}{\bx} : \bw \in \calM, \: \bregd{\bthet}{\bw} \leq \psi(\bthet)\right\}.
\end{equation*}
\end{thm}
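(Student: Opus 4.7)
The plan is to take the Bregman divergence $\Phi(t) = \bregd{\bthet}{\wh{\bthet}(t)}$ as a Lyapunov function for the flow \eqref{eqn:ref}, derive a one-sided ODE inequality bounding $\dot\Phi$ by an ``optimization'' piece and a ``noise'' piece, and then use a barrier/bootstrap argument to conclude that the empirical excess risk $\wh{\varepsilon}(h_t)$ falls below the target $\sqrt{8L^2\eta^2\psi(\bthet)/\sigma}$ at some time $t < T_\star \triangleq \sqrt{\psi(\bthet)\sigma/(2\eta^2)}$. A uniform Rademacher concentration bound over $\calF$ will then transfer this empirical guarantee to the population.

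For the ODE, I would differentiate the definition of the Bregman divergence to obtain the identity $\dot\Phi = -\ip{\bthet - \wh{\bthet}}{\frac{d}{dt}\nabla\psi(\wh{\bthet})}$; substituting the Reflectron flow with $\xi \equiv 1$ and adding and subtracting the ``clean labels'' $u(\ip{\bthet}{\bx_i})$ then splits $\dot\Phi$ into a signal and a noise term. The signal term is controlled by the co-coercivity inequality $(u(a) - u(b))(a - b) \geq (u(a) - u(b))^2/L$ (a direct consequence of monotonicity and $L$-Lipschitz-ness of $u$, Assumption~\ref{assmp:link}), contributing $-\wh{\varepsilon}(h_t)/L$. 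The noise term is handled by Hölder's inequality in the primal--dual pair $\norm{\cdot},\norm{\cdot}_*$ together with the hypothesis $\eta$, contributing $\eta\norm{\bthet - \wh{\bthet}}$, which $\sigma$-strong convexity of $\psi$ (Assumption~\ref{assmp:sc}) converts to $\eta\sqrt{2\Phi/\sigma}$. The resulting master inequality is
\begin{equation*}
    \dot\Phi(t) \leq -\wh{\varepsilon}(h_t)/L + \eta\sqrt{2\Phi(t)/\sigma}.
\end{equation*}

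The initialization $\wh{\bthet}(0) = \argmin\psi$ gives $\nabla\psi(\wh{\bthet}(0)) = \mathbf{0}$ and hence $\Phi(0) = \psi(\bthet)$. I would argue by continuous induction that, as long as $\wh{\varepsilon}(h_t) > \sqrt{8L^2\eta^2\psi(\bthet)/\sigma}$ persists, the invariant $\Phi(t) \leq \psi(\bthet)$ is preserved and the master inequality tightens to $\dot\Phi(t) \leq -\eta\sqrt{2\psi(\bthet)/\sigma} < 0$. Supposing for contradiction that this lower bound on $\wh\varepsilon(h_t)$ holds on all of $[0, T_\star)$, integrating this strict rate of decrease and taking $t \to T_\star$ forces $\Phi(T_\star) \leq 0$, hence $\wh{\bthet}(T_\star) = \bthet$ by strong convexity and $\wh{\varepsilon}(h_{T_\star}) = 0$, contradicting continuity of $\wh{\varepsilon}(h_t)$ along the trajectory. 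Therefore some $t^* \in [0, T_\star)$ achieves $\wh{\varepsilon}(h_{t^*}) \leq \sqrt{8L^2\eta^2\psi(\bthet)/\sigma}$, and the invariant guarantees that the linear predictor $\bx \mapsto \ip{\wh{\bthet}(t^*)}{\bx}$ lies in $\calF$.

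For the population transfer, McDiarmid with symmetrization (justified by boundedness of the squared-loss integrand since $y, u(\cdot) \in [0,1]$) bounds $\sup_{\bw \in \calF}\abs{\varepsilon(h) - \wh{\varepsilon}(h)}$ by a multiple of the loss-class Rademacher complexity plus an order $\sqrt{\log(1/\delta)/n}$ confidence term; two applications of Talagrand's contraction---once for the $2$-Lipschitz squared loss on $[0,1]^2$ and once for the $L$-Lipschitz $u$---reduce this to $\mathcal{R}_n(\calF)$. Applying the uniform bound at $h_{t^*}$ and combining with the empirical bound from the previous paragraph gives $\zeta(h_{t^*}) \leq \sqrt{8L^2\eta^2\psi(\bthet)/\sigma} + 4\mathcal{R}_n(\calF) + \sqrt{8\log(1/\delta)/n}$. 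The main obstacle is the barrier argument: the master inequality is only dissipative \emph{conditional} on $\wh{\varepsilon}$ being above the target, so the two bounds $\Phi \leq \psi(\bthet)$ and $\wh{\varepsilon} > $ threshold must be interleaved via continuous induction; a naive Gronwall-type integration of the master inequality sacrifices the constant that yields the sharp horizon $T_\star$ stated in the theorem.
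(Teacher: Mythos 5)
Your proposal is correct and follows essentially the same route as the paper's proof: the same Bregman-divergence Lyapunov function, the same signal/noise split with the monotone-plus-Lipschitz lower bound giving $-\wh{\varepsilon}(h_t)/L$ and H\"older plus strong convexity giving $\eta\sqrt{2\Phi/\sigma}$, the same either-decrease-or-small-error dichotomy yielding the horizon $\sqrt{\psi(\bthet)\sigma/(2\eta^2)}$, and the same uniform-convergence transfer. Your continuous-induction/contradiction phrasing is simply a more explicit rendering of the paper's ``$t$ cannot exceed $t_f$'' step.
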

\begin{proof}
Consider the rate of change of the Bregman divergence between the parameters for the Bayes-optimal predictor $\bthet$ and the parameter estimates $\wh{\bthet}(t)$, \begin{equation*}
    \frac{d}{dt}\bregd{\bthet}{\wh{\bthet}} = \ip{\wh{\bthet}-\bthet}{ \nabla^2\psi\left(\wh{\bthet}\right)\dot{\wh{\bthet}}}.
\end{equation*} 
Observe that $\frac{d}{dt}\nabla\psi\left(\wh{\bthet}\right) = \nabla^2\psi\left(\wh{\bthet}\right)\dot{\wh{\bthet}}$, so that
\begin{align*}
    &\frac{d}{dt}\bregd{\bthet}{\wh{\bthet}} = \frac{1}{n}\sum_{i=1}^n\left(y_i - u\left(\ip{\bx_i}{ \bthet}\right)\right)\ip{\bx_i}{\wh{\bthet} - \bthet} + \frac{1}{n}\sum_{i=1}^n\left(u\left(\ip{\bx_i}{\bthet}\right) - u\left(\ip{\bx_i}{\wh{\bthet}}\right)\right)\ip{\bx_i}{\wh{\bthet} - \bthet} \:.
\end{align*}
Using that $u$ is $L$-Lipschitz and nondecreasing, we may upper bound the second term by $-\frac{1}{L}\wh{\varepsilon}(h_t)$,
\begin{align}
    \frac{d}{dt}\bregd{\bthet}{\wh{\bthet}} \leq \frac{1}{n}\sum_{i=1}^n\left(y_i - u\left(\ip{\bx_i}{\bthet}\right)\right)\ip{\bx_i}{\wh{\bthet} - \bthet} - \frac{1}{L}\wh{\varepsilon}(h_t) \:.
    \label{eqn:breg_dt}
\end{align}
By assumption, $\Vert\frac{1}{n}\sum_{i=1}^n\left(y_i - u(\left\langle\bx_i, \bthet\right\rangle)\right)\bx_i\Vert_* \leq \eta$. Now, observe that by strong convexity of $\psi$ and by the initialization,
\begin{equation*}
    \norm{\wh{\bthet}(0) - \bthet} \leq \sqrt{\frac{2\bregd{\bthet}{\wh{\bthet}(0)}}{\sigma}} \leq \sqrt{\frac{2\psi(\bthet)}{\sigma}} \:.
\end{equation*}
By induction, assume that $\bregd{\bthet}{\wh{\bthet}(t)} \leq \psi(\bthet)$ at time $t$. Then we have the bound
\begin{equation*}
    \frac{d}{dt}\bregd{\bthet}{\wh{\bthet}} \leq -\frac{1}{L}\wh{\varepsilon}(h_t) + \eta\sqrt{\frac{2\psi(\bthet)}{\sigma}} \:,
\end{equation*}
so that either $\frac{d}{dt}\bregd{\bthet}{\wh{\bthet}} < -\eta\sqrt{\frac{2\psi(\bthet)}{\sigma}}$ or $\wh{\varepsilon}(h_t) \leq 2L\eta\sqrt{\frac{2\psi(\bthet)}{\sigma}}$. In the latter case, we have obtained the desired bound on $\wh{\varepsilon}(h_t)$. Otherwise, $t$ cannot exceed 
\begin{equation*}
    t_f = \frac{\bregd{\bthet}{\wh{\bthet}(0)}}{\sqrt{\frac{2\psi(\bthet)}{\sigma}}\eta} = \sqrt{\frac{\psi(\bthet)\sigma}{2\eta^2}}
\end{equation*}
to satisfy $\wh{\varepsilon}(h_t) \leq 2L\eta\sqrt{\frac{2\psi(\bthet)}{\sigma}}$. Hence there is some $h_t$ with $t < t_f$ such that $\wh{\varepsilon}(h_t) \leq 2L\eta\sqrt{\frac{2\psi(\bthet)}{\sigma}}$.
To transfer this bound on $\wh{\varepsilon}$ to $\varepsilon$, we need to bound $\left|\wh{\varepsilon}(h_t) - \varepsilon(h_t)\right|$. Application of a standard uniform convergence
result (cf.\ Theorem~\ref{thm:ull_loss}) to the square loss\footnote{Note that while the square loss is neither bounded nor Lipschitz in general, it is both over the domain $[0, 1]$ with bound $b = 1$ and Lipschitz constant $L' = 1$.} implies
\begin{equation*}
    \left|\wh{\varepsilon}(h_t) - \varepsilon(h_t)\right| \leq 4\mathcal{R}_n\left(\mathcal{F}\right) + \sqrt{\frac{8\log(1/\delta)}{n}}
\end{equation*}
with probability at least $1-\delta$.
\end{proof}
Because $\varepsilon(h_t)$ = $\err(h_t)$ up to a constant, we can find a good predictor by using a hold-out set to estimate $\err(h_t)$ throughout learning.

The statement of Theorem~\ref{thm:stat} uses a specific initialization strategy to write the generalization error bound in terms of $\psi(\bthet)$; with an arbitrary initialization, $\psi(\bthet)$ can be replaced by $\bregd{\bthet}{\wh{\bthet}(0)}$, and our definition of $\calF$ can be modified accordingly. As the bound depends on $\psi(\bthet)$, $C$, and $\eta$, the potential $\psi$ may be chosen in correspondence with available knowledge on the problem structure to optimize the guarantee on the generalization error. In Corollaries~\ref{cor:pq_1}-\ref{cor:ent}, we provide explicit illustrations of this fact. In the experiments in Section~\ref{sec:experiments}, we show how this can be used for improved estimation over the GLM-tron in problems such as sparse vector and low-rank matrix recovery.

Our proof of Theorem~\ref{thm:stat} is similar to the corresponding proof for the GLM-tron~\citep{glmtron}, but has two primary modifications. First, we consider the Bregman divergence under $\psi$ between the Bayes-optimal parameters and the current parameter estimates, rather than the squared Euclidean distance. Our use of Bregman divergence critically relies on the Bayes-optimal parameters appearing in the first argument. Second, rather than analyzing the \textit{iteration} on $\Vert \wh{\bthet}_t - \bthet\Vert_2^2$ as in the discrete-time case, we analyze the \textit{dynamics} of the Bregman divergence. Taking $\psi = \frac{1}{2}\Vert\cdot\Vert_2^2$ recovers the guarantee of the GLM-tron up to forward Euler discretization-specific details.
\subsection{Implicit regularization}
We now study how the choice of $\psi$ impacts the model learned by~\eqref{eqn:ref}. To do so, we require a realizability assumption on the dataset.
\begin{assumption}
\label{assmp:real}
There exists a fixed $\bthet\in\mathbb{R}^d$ such that $y_i = u\left(\left\langle\bthet, \bx_i\right\rangle\right)$ for all $i = 1, \hdots,n$.
\end{assumption}
In many cases, even the noisy dataset of Section~\ref{sec:stat} may satisfy Assumption~\ref{assmp:real} for a $\bar{\bthet} \neq \bthet$. We now begin by proving convergence of the training error.
\begin{lem}
\label{lem:conv}
Suppose that $\{\bx_i, y_i\}_{i=1}^n$ are drawn i.i.d. from a distribution $\mathcal{D}$ supported on $\mathcal{X}\times [0, 1]$. Let the dataset satisfy Assumption~\ref{assmp:real}, let $u$ satisfy Assumption~\ref{assmp:link}, and let $\psi$ satisfy Assumption~\ref{assmp:sc}. Suppose $\norm{\bx_i}_* \leq C$. Then $\wh{\varepsilon}(h_t) \rightarrow 0$ where $h_t(\bx) = u\left(\left\langle\wh{\bthet}(t), \bx\right\rangle\right)$ and $\wh{\bthet}(0) = \argmin_{\bw\in\calM} \psi(\bw)$. Furthermore, $\min_{t'\in [0, t]}\left\{\wh{\varepsilon}(h_{t'})\right\} \leq \mathcal{O}\left(1/t\right)$.
\end{lem}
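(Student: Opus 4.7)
The plan is to specialize the computation from the proof of Theorem~\ref{thm:stat} to the realizable setting and then integrate the resulting differential inequality. Under Assumption~\ref{assmp:real}, the noise term $y_i - u(\langle \bthet, \bx_i\rangle)$ vanishes identically, so the decomposition derived in the proof of Theorem~\ref{thm:stat} collapses to the clean bound
\begin{equation*}
\frac{d}{dt}\bregd{\bthet}{\wh{\bthet}} \;\leq\; -\frac{1}{L}\,\wh{\varepsilon}(h_t) \;\leq\; 0,
\end{equation*}
where the $L$-Lipschitz and nondecreasing properties of $u$ are used exactly as in the earlier proof to relate $\sum_i (u(\langle \bx_i, \bthet\rangle) - u(\langle \bx_i, \wh{\bthet}\rangle))\langle \bx_i, \wh{\bthet} - \bthet\rangle$ to $-\wh\varepsilon(h_t)/L$. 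This is the workhorse inequality for the whole lemma.

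Next I would integrate from $0$ to $t$, using non-negativity of the Bregman divergence and the initialization $\wh{\bthet}(0) = \argmin_{\bw \in \calM} \psi(\bw)$ (which gives $\bregd{\bthet}{\wh{\bthet}(0)} \leq \psi(\bthet)$ since $\psi \geq 0$ attains its minimum at $\wh{\bthet}(0)$), to obtain
\begin{equation*}
\frac{1}{L}\int_0^t \wh{\varepsilon}(h_s)\,ds \;\leq\; \bregd{\bthet}{\wh{\bthet}(0)} - \bregd{\bthet}{\wh{\bthet}(t)} \;\leq\; \psi(\bthet).
\end{equation*}
The $\mathcal{O}(1/t)$ bound on $\min_{t'\in[0,t]}\wh{\varepsilon}(h_{t'})$ then follows immediately, since $t \cdot \min_{t'\in[0,t]}\wh{\varepsilon}(h_{t'}) \leq \int_0^t \wh{\varepsilon}(h_s)\,ds \leq L\psi(\bthet)$.

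The harder piece is upgrading this minimum-over-$[0,t]$ statement to the pointwise convergence $\wh{\varepsilon}(h_t) \to 0$. The natural tool is Barbalat's lemma: since $\int_0^\infty \wh{\varepsilon}(h_s)\,ds$ is finite, it suffices to verify that $t \mapsto \wh{\varepsilon}(h_t)$ is uniformly continuous. This is where the assumption $\norm{\bx_i}_* \leq C$ enters. The dynamics~\eqref{eqn:ref} give
\begin{equation*}
\norm{\tfrac{d}{dt}\nabla\psi(\wh{\bthet})}_* \;\leq\; \frac{1}{n}\sum_{i=1}^n \abs{u(\langle \wh{\bthet},\bx_i\rangle) - y_i}\norm{\bx_i}_* \;\leq\; C,
\end{equation*}
since the prediction residuals lie in $[-1,1]$. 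Combined with strong convexity of $\psi$ (which bounds $\wh{\bthet}(t)$ inside a sublevel set of $\bregd{\bthet}{\cdot}$ and makes $(\nabla\psi)^{-1}$ locally Lipschitz on that compact set), this controls $\norm{\dot{\wh{\bthet}}}$, and then the chain rule applied to $\wh{\varepsilon}(h_t) = \tfrac{1}{n}\sum_i (u(\langle \wh{\bthet},\bx_i\rangle) - u(\langle \bthet,\bx_i\rangle))^2$, using that $u$ is bounded in $[0,1]$ and $L$-Lipschitz, yields a uniform bound on $\tfrac{d}{dt}\wh{\varepsilon}(h_t)$. Uniform continuity then follows and Barbalat closes the argument.

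The main obstacle I expect is making the uniform continuity argument rigorous without imposing extra smoothness on $\psi$. Strong convexity in a general norm does not automatically give a uniform lower bound on $\nabla^2\psi$ along the trajectory, so the cleanest path is to work directly in the dual: $\nabla\psi(\wh{\bthet}(t))$ moves at speed at most $C$ in $\norm{\cdot}_*$, its primal preimage stays in a bounded set by the Bregman bound, and on that set $(\nabla\psi)^{-1}$ is Lipschitz by strong convexity (via the dual strong smoothness of $\psi^*$). Combined with the Lipschitz property of $u$ and the bound on $\norm{\bx_i}_*$, this yields the required uniform continuity and hence the conclusion $\wh{\varepsilon}(h_t) \to 0$.
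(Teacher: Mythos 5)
Your proposal is correct and follows essentially the same route as the paper: specialize the Bregman-divergence differential inequality \eqref{eqn:breg_dt} to the realizable case, integrate to bound $\int_0^t \wh{\varepsilon}(h_s)\,ds$, invoke Barbalat's lemma (via uniform continuity of $t \mapsto \wh{\varepsilon}(h_t)$) for the pointwise convergence, and read off the $\mathcal{O}(1/t)$ rate from the integral bound. If anything, your treatment of the uniform continuity step is more careful than the paper's, which simply asserts that ``explicit computation shows that $\frac{d}{dt}\wh{\varepsilon}(h_t)$ is bounded''; your dual-space argument via the $1/\sigma$-smoothness of $\psi^*$ is a legitimate way to make that assertion rigorous.
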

\begin{proof}
Under the assumptions, (\ref{eqn:breg_dt}) implies
\begin{align*}
    \frac{d}{dt}\bregd{\bthet}{\wh{\bthet}} &\leq -\frac{1}{L}\wh{\varepsilon}(h_t) \leq 0 \:.
\end{align*}
Integrating both sides of the above gives the bound
\begin{align*}
    \int_0^t\wh{\varepsilon}(h_{t'})dt' &\leq L\bregd{\bthet}{\wh{\bthet}(0)}.
\end{align*}
Explicit computation shows that $\frac{d}{dt}\wh{\varepsilon}(h_t)$ is bounded, so that $\wh{\varepsilon}(h_t)$ is uniformly continuous in $t$. By Barbalat's Lemma (cf. Lemma~\ref{lem:barbalat}), this implies that $\wh{\varepsilon}\rightarrow 0$ as $t\rightarrow\infty$. Now, note that
\begin{align*}
    \inf_{t'\in [0, t]}\left\{\wh{\varepsilon}(h_{t'})\right\}t &= \int_0^t\inf_{t'\in [0, t]}\left\{\wh{\varepsilon}(h_{t'})\right\}dt''\\
    &\leq \int_0^t\wh{\varepsilon}(h_{t'})dt' \leq L\bregd{\bthet}{\wh{\bthet}(0)},
\end{align*}
so that $\inf_{t'\in [0, t]}\left\{\wh{\varepsilon}(h_{t'})\right\} \leq \frac{L\bregd{\bthet}{\wh{\bthet}(0)}}{t}$.
\end{proof}
Lemma~\ref{lem:conv} shows that \eqref{eqn:ref} will converge to an interpolating solution for a realizable dataset, and that the best hypothesis up to time $t$ does so at an $\mathcal{O}\left(1/t\right)$ rate; the proof is given in the appendix.

In general, there may be many possible vectors $\wh{\bthet}$ consistent with the data. The following theorem provides insight into the parameters learned by \eqref{eqn:ref}. Our result is analogous to the characterization of the implicit bias of mirror descent due to~\citet{gunasekar_3}, and uses a continuous-time proof technique inspired by the discrete-time technique in~\citet{azizan_1}. A similar continuous-time proof first appeared in~\citet{boffi2019higherorder} in the context of adaptive control.
\begin{thm}
\label{thm:imp_reg}
Consider the setting of Lemma~\ref{lem:conv}. Let $\mathcal{A} = \{\bar{\bthet}\in\calM: u\left(\left\langle\bar{\bthet}, \bx_i\right\rangle\right) = y_i, i = 1, \hdots,n\}$ be the set of parameters that interpolate the data, and assume that $\wh{\bthet}(t)\rightarrow\wh{\bthet}_{\infty} \in \mathcal{A}$. Further assume that $u(\cdot)$ is invertible. Then $\wh{\bthet}_{\infty} = \argmin_{\bar{\bthet} \in \mathcal{A}}\bregd{\bar{\bthet}}{\wh{\bthet}(0)}$. In particular, if $\wh{\bthet}(0) = \argmin_{\bw\in\calM}\psi(\bw)$, then $\wh{\bthet}_\infty = \argmin_{\bar{\bthet}\in\mathcal{A}}\psi(\bar{\bthet})$.
\end{thm}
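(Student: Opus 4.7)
The plan is a standard KKT/implicit-bias argument, adapted to continuous time. First I would exploit the structure of the flow: rewriting \eqref{eqn:ref} (with $\xi \equiv 1$) as
\begin{equation*}
    \frac{d}{dt}\nabla\psi\left(\wh{\bthet}\right) = -\frac{1}{n}\sum_{i=1}^n r_i(t)\,\bx_i, \qquad r_i(t) := u\left(\ip{\wh{\bthet}(t)}{\bx_i}\right) - y_i,
\end{equation*}
shows that $\frac{d}{dt}\nabla\psi(\wh{\bthet}(t))$ lies in $\mathrm{span}\{\bx_1,\ldots,\bx_n\}$ for every $t$. Integrating from $0$ to $\infty$ and passing the limit inside $\nabla\psi$ (using continuity of $\nabla\psi$ and the assumed convergence $\wh{\bthet}(t)\to\wh{\bthet}_\infty$), I obtain
\begin{equation*}
    \nabla\psi\left(\wh{\bthet}_\infty\right) - \nabla\psi\left(\wh{\bthet}(0)\right) \in \mathrm{span}\{\bx_1,\ldots,\bx_n\}.
\end{equation*}

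Next I would translate the optimization problem using invertibility of $u$: the constraint set becomes the affine variety
\begin{equation*}
    \mathcal{A} = \left\{\bar{\bthet}\in\calM : \ip{\bar{\bthet}}{\bx_i} = u^{-1}(y_i),\ i=1,\ldots,n\right\}.
\end{equation*}
Consider the convex program $\min_{\bar{\bthet}\in\mathcal{A}} \bregd{\bar{\bthet}}{\wh{\bthet}(0)}$. Its gradient in $\bar{\bthet}$ is $\nabla\psi(\bar{\bthet}) - \nabla\psi(\wh{\bthet}(0))$, so the KKT stationarity condition for an affine constraint reads: there exist multipliers $\lambda_i$ with
\begin{equation*}
    \nabla\psi(\bar{\bthet}) - \nabla\psi\left(\wh{\bthet}(0)\right) = \sum_{i=1}^n \lambda_i \bx_i, \qquad \bar{\bthet}\in\mathcal{A}.
\end{equation*}
By the previous step, $\wh{\bthet}_\infty$ satisfies exactly this stationarity condition, and it lies in $\mathcal{A}$ by hypothesis.

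To conclude, I would invoke strict convexity of $\bregd{\cdot}{\wh{\bthet}(0)}$ (which follows from $\sigma$-strong convexity of $\psi$) over the affine set $\mathcal{A}$: the problem has a unique minimizer, and the KKT condition is both necessary and sufficient. Hence $\wh{\bthet}_\infty = \argmin_{\bar{\bthet}\in\mathcal{A}} \bregd{\bar{\bthet}}{\wh{\bthet}(0)}$. For the final claim, if $\wh{\bthet}(0) = \argmin_{\bw\in\calM}\psi(\bw)$ then the first-order condition at $\wh{\bthet}(0)$ makes $\nabla\psi(\wh{\bthet}(0))$ vanish on the feasible directions of $\calM$, so $\bregd{\bar{\bthet}}{\wh{\bthet}(0)} = \psi(\bar{\bthet}) - \psi(\wh{\bthet}(0))$ up to an additive constant and the minimizers coincide.

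The main obstacle I anticipate is not the KKT computation but the technical step of passing to the limit: I need that $\nabla\psi(\wh{\bthet}(t))\to\nabla\psi(\wh{\bthet}_\infty)$ and that the cumulative residual vector $\int_0^\infty r_i(t)\,dt$ is actually well-defined so that the span-containment survives the limit. Continuity of $\nabla\psi$ handles the first issue, and the second follows because the difference $\nabla\psi(\wh{\bthet}(t)) - \nabla\psi(\wh{\bthet}(0))$, which equals $-\frac{1}{n}\sum_i \left(\int_0^t r_i(s)\,ds\right)\bx_i$, has a limit by assumption; the span is closed, so the limit remains in it. One mild caveat is that if the $\bx_i$ are linearly dependent the multipliers are non-unique, but this is harmless since we only need existence.
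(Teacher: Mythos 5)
Your proposal is correct, but it takes a genuinely different route from the paper. The paper never invokes optimality conditions: it differentiates $\bregd{\bar{\bthet}}{\wh{\bthet}(t)}$ in time for an \emph{arbitrary} $\bar{\bthet}\in\mathcal{A}$, uses $\ip{\bar{\bthet}}{\bx_i}=u^{-1}(y_i)$ to show that the resulting rate is independent of $\bar{\bthet}$, and integrates to conclude that $\bregd{\bar{\bthet}}{\wh{\bthet}_\infty}-\bregd{\bar{\bthet}}{\wh{\bthet}(0)}$ is a constant over $\mathcal{A}$; since the first divergence is uniquely minimized at $\bar{\bthet}=\wh{\bthet}_\infty$, so is the second. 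Your argument instead integrates the flow itself to get the span containment $\nabla\psi(\wh{\bthet}_\infty)-\nabla\psi(\wh{\bthet}(0))\in\mathrm{span}\{\bx_i\}$ and recognizes this as the stationarity condition of the convex program $\min_{\bar{\bthet}\in\mathcal{A}}\bregd{\bar{\bthet}}{\wh{\bthet}(0)}$ over the affine variety $\ip{\bar{\bthet}}{\bx_i}=u^{-1}(y_i)$; sufficiency follows from convexity (for any feasible $\bar{\bthet}$, the first-order lower bound $f(\bar{\bthet})\geq f(\wh{\bthet}_\infty)+\sum_i\lambda_i\ip{\bx_i}{\bar{\bthet}-\wh{\bthet}_\infty}=f(\wh{\bthet}_\infty)$ holds regardless of the constraint $\calM$, so you need not worry about boundary multipliers), and uniqueness from strict convexity. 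Both arguments exploit the same structural fact — the update direction lives in the data span, equivalently the dependence on $\bar{\bthet}$ enters only through the constraint values — but the paper's version is slightly more economical (no KKT machinery, no uniqueness discussion), while yours makes the ``limit point is the KKT point of the minimum-divergence problem'' structure explicit in the style of the mirror-descent implicit-bias literature, and your treatment of the limit passage (closedness of the span rather than convergence of each $\int_0^\infty r_i$) is careful and correct. Your reduction of the final claim to $\nabla\psi(\wh{\bthet}(0))$ annihilating feasible directions is at the same level of rigor as the paper's unproved assertion, and is exact when the minimizer of $\psi$ is a critical point.
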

\begin{proof}
Let $\bar{\bthet} \in \mathcal{A}$ be arbitrary. Define the error on example $i$ as $\tilde{y}_i\left(\wh{\bthet}(t)\right) = \left(u\left(\left\langle\wh{\bthet}(t), \bx_i\right\rangle\right) - y_i\right)$. Then,
\begin{align*}
    \frac{d}{dt}\bregd{\bar{\bthet}}{\wh{\bthet}(t)} &= -\frac{1}{n}\sum_{i=1}^n\tilde{y}_i\left(\wh{\bthet}(t)\right)\left\langle\wh{\bthet}(t)-\bar{\bthet}, \bx_i\right\rangle,\\
    &= -\frac{1}{n}\sum_{i=1}^n\tilde{y}_i\left(\wh{\bthet}(t)\right)\left(\ip{\wh{\bthet}(t)}{\bx_i} - u^{-1}\left(y_i\right)\right).
\end{align*}
Above, we used that $\bar{\bthet} \in \mathcal{A}$ and that $u(\cdot)$ is invertible, so that $u\left(\left\langle\bar{\bthet}, \bx_i\right\rangle\right) = y_i$ implies that $\left\langle\bar{\bthet}, \bx_i\right\rangle = u^{-1}(y_i)$. Integrating both sides of the above, we find that 
\begin{align*}
    \bregd{\bar{\bthet}}{\wh{\bthet}_\infty} - \bregd{\bar{\bthet}}{\wh{\bthet}(0)} = -\frac{1}{n}\sum_{i=1}^n\int_0^\infty\tilde{y}_i\left(\wh{\bthet}(t)\right)\left(\left\langle\wh{\bthet}(t), \bx_i\right\rangle - u^{-1}\left(y_i\right)\right)dt.
\end{align*}
The above relation is true for any $\bar{\bthet} \in \mathcal{A}$. Furthermore, the integral on the right-hand side is independent of $\bar{\bthet}$. Hence the $\argmin$ of the two Bregman divergences must be equal, which shows that $\wh{\bthet}_{\infty} = \argmin_{\bar{\bthet}\in\mathcal{A}}\bregd{\bar{\bthet}}{\wh{\bthet}(0)}$.
\end{proof}
Theorem~\ref{thm:imp_reg} elucidates the implicit bias of pseudogradient algorithms captured by \eqref{eqn:ref}. Out of all possible interpolating parameters, \eqref{eqn:ref} finds those that minimize the Bregman divergence between the set of interpolating parameters and the initialization.

\section{Discrete-time algorithms}
\label{sec:disc}%
Equation \eqref{eqn:ref} can be discretized via Forward-Euler to form an implementation with a step size $\lambda > 0$,
\begin{align}
    \label{eqn:refl_disc}
    \nabla\psi\left(\wh{\bphi}_{t+1}\right) &= \nabla\psi\left(\wh{\bthet}_t\right) - \frac{\lambda}{n}\sum_{i=1}^n\left(u\left(\ip{\wh{\bthet}_t}{\bx_i}\right) - y_i\right)\xi\left(\wh{\bthet}, \bx_i\right)\bx_i,\\
    \wh{\bthet}_{t+1} &= \Pi_{\calC}^\psi\left(\wh{\bphi}_{t+1}\right).
    \label{eqn:refl_disc_proj}
\end{align}
In \eqref{eqn:refl_disc_proj}, $\calC$ denotes a convex constraint set and $\Pi_{\calC}^\psi(\bz) = \argmin_{\bx \in \calC\cap\calM}\bregd{\bx}{\bz}$ denotes the Bregman projection. To analyze the iteration \eqref{eqn:refl_disc}~\&~\eqref{eqn:refl_disc_proj}, we need two assumptions on $\xi$.
\begin{assumption}[Adapted from~\citet{agnostic_neuron}]
\label{assmp:frei}
For any $a > 0$ and $b > 0$, there exists a $\gamma > 0$ such that $\inf_{\norm{\bw}\leq a, \norm{\bx}_* \leq b}\xi\left(\bw, \bx\right) \geq \gamma > 0$.
\end{assumption}
For mirror descent, Assumption~\ref{assmp:frei} reduces to a requirement that the derivative of the activation remains nonzero over any compact set.
\begin{assumption}
\label{assmp:xi_bound}
There exists a constant $B > 0$ such that $\xi\left(\bw, \bx\right) \leq B$ for all $\bw \in \calM$, $\bx\in\calX$.
\end{assumption}
For mirror descent, we take $B = L$, while for the mirror descent generalization of GLM-tron, we take $B=1$. We may now state our statistical guarantees.

\begin{thm}
\label{thm:stat_disc}
Suppose that $\{\bx_i, y_i\}_{i=1}^n$ are drawn i.i.d. from a distribution $\mathcal{D}$ supported on $\mathcal{X}\times[0, 1]$ where $\mathbb{E}\left[y|\bx\right] = u\left(\ip{\bthet}{\bx}\right)$, $u$ satisfies Assumptions~\ref{assmp:link}~\&~\ref{assmp:frei}, and $\bthet\in\calC$ is an unknown vector of parameters. Let $\psi$ satisfy Assumption~\ref{assmp:link}, and let $\xi$ satisfy Assumptions~\ref{assmp:frei}~\&~\ref{assmp:xi_bound}. Assume that $\norm{\bx_i}_* \leq C$, $\norm{\bthet} \leq W$, and $\norm{\frac{1}{n} \sum_{i=1}^n \left(u\left(\ip{\bthet}{\bx_i}\right) - y_i\right)\xi\left(\ip{\bthet}{\bx_i}\right)\bx_i}_* \leq \eta$. Let $\gamma$ correspond to $a=C$ and $b = W + \sqrt{\frac{2\psi(\bthet)}{\sigma}}$ in Assumption~\ref{assmp:frei}. Then for $\lambda \leq \frac{\sigma}{2C^2BL}$ there exists some iteration $t < \frac{1}{\lambda}\sqrt{\frac{\sigma\psi(\bthet)}{2\eta^2}}$ such that $h_t = u\left(\ip{\wh{\bthet}_t}{\bx}\right)$ satisfies with probability at least $1-\delta$
\begin{align*}
     \zeta(h_t) &\leq \sqrt{\frac{32L^2\eta^2\psi(\bthet)}{\gamma^2\sigma}}\left(\frac{2C^2LB + 1}{2C^2LB}\right) + 4\mathcal{R}_n\left(\mathcal{F}\right) + \sqrt{\frac{8\log(1/\delta)}{n}},
\end{align*}
where $\wh{\bthet}_1 = \argmin_{\bw\in\calC\cap\calM}\psi(\bw)$, and where $\mathcal{F} = \left\{\bx \mapsto \ip{\bw}{\bx} : \bw \in \calM, \: \bregd{\bthet}{\bw} \leq \psi(\bthet)\right\}$.
\end{thm}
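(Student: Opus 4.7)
The plan is to follow the continuous-time argument of Theorem~\ref{thm:stat}, tracking the Bregman divergence $\bregd{\bthet}{\wh{\bthet}_t}$ but now absorbing a finite-step discretization error. Let $\bg_t = \frac{1}{n}\sum_{i=1}^n(u(\ip{\wh{\bthet}_t}{\bx_i}) - y_i)\xi(\wh{\bthet}_t, \bx_i)\bx_i$ denote the pseudogradient so that the update reads $\nabla\psi(\wh{\bphi}_{t+1}) = \nabla\psi(\wh{\bthet}_t) - \lambda \bg_t$.

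First I would derive the one-step Bregman recursion. The three-point Bregman identity applied to the pre-projection point gives
\begin{equation*}
\bregd{\bthet}{\wh{\bphi}_{t+1}} - \bregd{\bthet}{\wh{\bthet}_t} = \bregd{\wh{\bthet}_t}{\wh{\bphi}_{t+1}} + \lambda\ip{\bg_t}{\bthet - \wh{\bthet}_t},
\end{equation*}
and Bregman-projection non-expansion (using $\bthet\in\calC$) replaces $\wh{\bphi}_{t+1}$ by $\wh{\bthet}_{t+1}$ on the left. The discretization error $\bregd{\wh{\bthet}_t}{\wh{\bphi}_{t+1}}$ is then handled via the duality $\bregd{\wh{\bthet}_t}{\wh{\bphi}_{t+1}} = \bregdd{\wh{\bphi}_{t+1}}{\wh{\bthet}_t}$ together with the $1/\sigma$-smoothness of $\psi^*$ (implied by $\sigma$-strong convexity of $\psi$), which yields $\bregd{\wh{\bthet}_t}{\wh{\bphi}_{t+1}} \leq \frac{\lambda^2}{2\sigma}\|\bg_t\|_*^2$.

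Next I would treat the remaining pieces exactly as in the continuous-time proof. Splitting $\ip{\bg_t}{\bthet - \wh{\bthet}_t}$ by adding and subtracting $u(\ip{\bthet}{\bx_i})$ isolates a noise piece bounded by $\eta\|\bthet - \wh{\bthet}_t\|$ and a signal piece bounded above by $-\frac{\gamma}{L}\wh{\varepsilon}(h_t)$ (using $u$ nondecreasing plus $L$-Lipschitz to give $(u(a)-u(b))(a-b) \geq \frac{1}{L}(u(a)-u(b))^2$, together with Assumption~\ref{assmp:frei} yielding $\xi \geq \gamma$ on the relevant compact set). The same splitting plus Jensen and Assumption~\ref{assmp:xi_bound} produces $\|\bg_t\|_*^2 \leq 2\eta^2 + 2B^2 C^2 \wh{\varepsilon}(h_t)$. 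An induction maintaining the invariant $\bregd{\bthet}{\wh{\bthet}_t} \leq \psi(\bthet)$, combined with strong convexity, gives $\|\bthet - \wh{\bthet}_t\| \leq \sqrt{2\psi(\bthet)/\sigma}$, which simultaneously validates the radius $b = W + \sqrt{2\psi(\bthet)/\sigma}$ used to invoke Assumption~\ref{assmp:frei} and controls the noise term.

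Assembling these bounds produces a recursion of the schematic form $\bregd{\bthet}{\wh{\bthet}_{t+1}} - \bregd{\bthet}{\wh{\bthet}_t} \leq (-\frac{\lambda\gamma}{L} + \frac{\lambda^2 B^2 C^2}{\sigma})\wh{\varepsilon}(h_t) + \lambda\eta\sqrt{2\psi(\bthet)/\sigma} + \frac{\lambda^2\eta^2}{\sigma}$. The step-size choice $\lambda \leq \frac{\sigma}{2C^2BL}$ is exactly what keeps the $\wh{\varepsilon}(h_t)$ coefficient negative while retaining a controlled fraction, and this fraction is what produces the $\frac{2C^2LB+1}{2C^2LB}$ factor in the stated bound. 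A pigeonhole/telescoping argument then forces either some iterate $t < \frac{1}{\lambda}\sqrt{\sigma\psi(\bthet)/(2\eta^2)}$ to satisfy $\wh{\varepsilon}(h_t)$ below the target threshold, or the nonnegative Bregman divergence to decrease by a fixed amount per step beyond its initial $\psi(\bthet)$ budget. The same Rademacher-based uniform-convergence result invoked in Theorem~\ref{thm:stat} then transfers the bound from $\wh{\varepsilon}(h_t)$ to $\varepsilon(h_t)$. The principal obstacle is the discretization term $\bregd{\wh{\bthet}_t}{\wh{\bphi}_{t+1}}$: the $\psi^*$-smoothness route is what makes a clean $\|\bg_t\|_*^2$ upper bound available, and then decomposing $\|\bg_t\|_*^2$ via Assumption~\ref{assmp:xi_bound} and the same add-and-subtract trick as for the signal term is what allows absorption back into $-\frac{\gamma}{L}\wh{\varepsilon}(h_t)$ rather than inflating the final rate.
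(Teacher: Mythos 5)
Your architecture matches the paper's proof step for step: three-point identity at the pre-projection point, generalized Pythagorean inequality to pass to $\wh{\bthet}_{t+1}$, Bregman duality plus $1/\sigma$-smoothness of $\psi^*$ to bound the discretization term by $\frac{\lambda^2}{2\sigma}\norm{\bg_t}_*^2$, the signal/noise split, the induction invariant $\bregd{\bthet}{\wh{\bthet}_t}\leq\psi(\bthet)$, pigeonhole, and the uniform-convergence transfer. You also correctly identify the origin of the factor $\frac{2C^2LB+1}{2C^2LB}$.

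There is, however, one step in your bookkeeping that does not go through with the stated step size. You convert the signal term to $-\frac{\gamma}{L}\wh{\varepsilon}(h_t)$ immediately (using $\xi\geq\gamma$), while bounding the discretization term by $\frac{\lambda^2}{2\sigma}\cdot 2B^2C^2\wh{\varepsilon}(h_t)$ (using $\xi\leq B$ twice). The net coefficient on $\wh{\varepsilon}(h_t)$ is then $-\frac{\lambda\gamma}{L}+\frac{\lambda^2B^2C^2}{\sigma}$, and with $\lambda=\frac{\sigma}{2C^2BL}$ this equals $\frac{\lambda}{L}\left(\frac{B}{2}-\gamma\right)$, which is \emph{positive} whenever $\gamma< B/2$; since $\gamma\leq\inf\xi\leq\sup\xi\leq B$ in general, your absorption requires the strictly smaller step size $\lambda\lesssim\frac{\gamma\sigma}{LB^2C^2}$. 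The paper avoids this by carrying the $\xi$-weighted error $\wh{H}(h_t)=\frac{1}{n}\sum_i\bigl(u(\ip{\wh{\bthet}_t}{\bx_i})-u(\ip{\bthet}{\bx_i})\bigr)^2\xi(\wh{\bthet}_t,\bx_i)$ through the entire recursion: the signal term gives $-\frac{\lambda}{L}\wh{H}(h_t)$ exactly, and the Jensen bound on $\norm{\bg_t}_*^2$ uses $\xi^2\leq B\xi$ (not $\xi^2\leq B^2$) to produce $C^2B\,\wh{H}(h_t)$, so the coefficient becomes $-\frac{\lambda}{L}+\frac{\lambda^2C^2B}{\sigma}\leq-\frac{\lambda}{2L}$ under $\lambda\leq\frac{\sigma}{2C^2BL}$, with no $\gamma$ appearing. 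Only after the pigeonhole bound on $\wh{H}(h_t)$ is Lemma~\ref{lem:frei} invoked to convert to $\wh{\varepsilon}(h_t)\leq\wh{H}(h_t)/\gamma$, which is where the single factor of $\gamma$ in the final bound comes from. Your proof is repaired by the same reordering; as written, the step-size condition you quote does not guarantee descent.
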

Theorem~\ref{thm:stat_disc} shows that, for a suitable choice of step size, the discrete-time iteration \eqref{eqn:refl_disc}~\&~\eqref{eqn:refl_disc_proj} preserves the guarantees of the continuous-time flow \eqref{eqn:ref}.
The proof (and all subsequent proofs) are given in the appendix. We now state several consequences of Theorem~\ref{thm:stat_disc} in standard settings that highlight the impact of the potential on generalization in nonconvex learning.
\begin{cor}[$p/q$ dual norm pairs with $p \in [2, \infty)$]
\label{cor:pq_1}
Let $\norm{\cdot}_* = \norm{\cdot}_p$ with $p\in [2, \infty)$. Then $\psi(\bw) = \frac{1}{2}\norm{\bw}_q^2$ is $(q-1)$-strongly convex with respect to $\norm{\cdot}_q$ where $1/q + 1/p = 1$. The generalization error is bounded as
\begin{align*}
    \zeta(h_t) &\leq \frac{4LWC}{q-1}\left(\frac{\sqrt{2\log(4/\delta)(q-1)} + 1}{\sqrt{n}}\right)\frac{2C^2LB+1}{2C^2LB}+ \frac{4CW}{\sqrt{n(q-1)}}\left(1 + \frac{1}{\sqrt{q-1}}\right) + \sqrt{\frac{8\log(1/\delta)}{n}} \:.
\end{align*}
\end{cor}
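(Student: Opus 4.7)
The strategy is to specialize Theorem~\ref{thm:stat_disc} to the potential $\psi(\bw) = \tfrac{1}{2}\norm{\bw}_q^2$. First I would record the standard fact that this $\psi$ is $(q-1)$-strongly convex with respect to $\norm{\cdot}_q$ (e.g., via duality with the $\norm{\cdot}_p$-smoothness of its Legendre conjugate $\tfrac{1}{2}\norm{\cdot}_p^2$), so that $\sigma = q-1$. Since $\norm{\bthet}_q \leq W$, we have $\psi(\bthet) \leq W^2/2$, and the minimum of $\psi$ is at $\mathbf{0}$, giving the prescribed initialization $\wh{\bthet}_1 = \mathbf{0}$. In the pseudogradient setting $\xi \equiv 1$ targeted by the corollary, Assumption~\ref{assmp:frei} holds trivially with $\gamma = 1$.

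The main probabilistic work is to control the noise quantity $\eta = \norm{\frac{1}{n}\sum_{i=1}^n (u(\ip{\bthet}{\bx_i}) - y_i)\bx_i}_p$. Each scalar factor $u(\ip{\bthet}{\bx_i}) - y_i$ is conditionally mean-zero and bounded in $[-1,1]$, while $\norm{\bx_i}_p \leq C$. I would decompose $\eta$ into its expectation and a deviation from the expectation. Symmetrization followed by the Khintchine--Kahane inequality in $\ell_p$ for $p \geq 2$, combined with the identity $p-1 = 1/(q-1)$, yields $\mathbb{E}[\eta] \leq C/\sqrt{n(q-1)}$. Replacing any single sample changes $\eta$ by at most $2C/n$, so McDiarmid's inequality gives a sub-Gaussian tail of size $C\sqrt{2\log(4/\delta)/n}$. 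Combining, with probability at least $1-\delta/2$,
\begin{equation*}
\eta \;\leq\; \frac{C}{\sqrt{n(q-1)}} + C\sqrt{\frac{2\log(4/\delta)}{n}} \;=\; \frac{C\bigl(1 + \sqrt{2\log(4/\delta)(q-1)}\bigr)}{\sqrt{n(q-1)}}.
\end{equation*}

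Next I bound the Rademacher complexity of $\calF$. Strong convexity gives $\norm{\bw - \bthet}_q \leq \sqrt{2\bregd{\bthet}{\bw}/(q-1)} \leq W/\sqrt{q-1}$ whenever $\bregd{\bthet}{\bw} \leq \psi(\bthet) \leq W^2/2$, so $\norm{\bw}_q \leq W(1 + 1/\sqrt{q-1})$ by the triangle inequality. Applying the standard Rademacher bound for linear predictors under the dual $\ell_q/\ell_p$ pair (again via Khintchine, contributing the $\sqrt{p-1} = 1/\sqrt{q-1}$ factor) yields $\mathcal{R}_n(\calF) \leq \frac{CW}{\sqrt{n(q-1)}}\bigl(1 + 1/\sqrt{q-1}\bigr)$, which matches the second displayed term of the corollary after multiplication by $4$.

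Finally I substitute $\sigma = q-1$, $\psi(\bthet) \leq W^2/2$, $\gamma = 1$, and the $\eta$ bound into the conclusion of Theorem~\ref{thm:stat_disc}, union-bounding the concentration event for $\eta$ (probability $\delta/2$) with the uniform-convergence event (probability $\delta/2$) to obtain overall probability $1-\delta$. The first term in the theorem collapses to $\frac{4LW\eta}{\sqrt{q-1}}\cdot \frac{2C^2LB+1}{2C^2LB}$, and inserting the $\eta$ bound gives the first displayed term of the corollary after simplifying $\sqrt{q-1}\cdot\sqrt{n(q-1)} = \sqrt{n}(q-1)$. The main obstacle is bookkeeping in the $\eta$ derivation so that the Rademacher expectation and the bounded-differences deviation combine cleanly into the advertised form $\sqrt{2\log(4/\delta)(q-1)} + 1$; the Khintchine constant in $\ell_p$ and the $\delta/2$ probability splits are where care is required.
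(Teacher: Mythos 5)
Your proposal is correct and follows essentially the same route as the paper: control $\eta$ by splitting into its expectation (bounded via symmetrization plus a Khintchine/Rademacher-type bound, giving $C/\sqrt{n(q-1)}$) and a bounded-differences deviation (giving $C\sqrt{2\log(4/\delta)/n}$), bound $\mathcal{R}_n(\calF)$ via the inclusion of $\calF$ in a $q$-ball of radius $W(1+1/\sqrt{q-1})$ together with the Kakade--Sridharan--Tewari linear-class bound, and substitute into Theorem~\ref{thm:stat_disc} with $\sigma = q-1$ and $\psi(\bthet)\leq W^2/2$. Your treatment of the union bound and of $\gamma=1$ under $\xi\equiv 1$ is in fact slightly more explicit than the paper's own two-line proof.
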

\begin{cor}[$\infty/1$ dual norm pairs, global setup]
\label{cor:pq_2}
Let $\norm{\cdot} = \norm{\cdot}_1$ and $\norm{\cdot}_* = \norm{\cdot}_\infty$. Then $\psi(\bw) = \frac{1}{2}\norm{\bw}_q^2$ with $q = \frac{\log(d)}{\log(d)-1}$ is $\frac{1}{3\log(d)}$-strongly convex with respect to $\norm{\cdot}_1$. Then, the generalization error can be bounded
\begin{align*}
    &\zeta(h_t) \leq \frac{4CW(1+\sqrt{3\log d})^2}{n^{1/2}} + \sqrt{\frac{8\log(1/\delta)}{n}} + \frac{12LCW\sqrt{3\log(d)}(2C^2LB+1)}{C^2LB}\sqrt{\frac{\log(4d/\delta)}{n}} \:.
\end{align*}
\end{cor}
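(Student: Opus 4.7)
The plan is to instantiate Theorem~\ref{thm:stat_disc} with $\norm{\cdot} = \norm{\cdot}_1$, $\norm{\cdot}_* = \norm{\cdot}_\infty$, and $\psi(\bw) = \frac{1}{2}\norm{\bw}_q^2$ for the prescribed $q$, and to bound each of the problem-specific quantities in that theorem.

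First I would verify Assumption~\ref{assmp:sc}: a classical fact from online learning is that $\frac{1}{2}\norm{\cdot}_q^2$ is $(q-1)$-strongly convex with respect to $\norm{\cdot}_q$ for every $q \in (1, 2]$. H\"older's inequality gives $\norm{\bw}_1 \leq d^{1/p}\norm{\bw}_q$ where $1/p + 1/q = 1$, and the prescribed $q = \log(d)/(\log(d)-1)$ yields $p = \log d$ so that $d^{1/p} = e$ is a universal constant. Combining the two inequalities produces a strong-convexity constant of order $1/\log d$ with respect to $\norm{\cdot}_1$; a careful constant chase (or an appeal to standard moduli-of-convexity estimates for $\ell_q$ near $q=1$) pins it down at the advertised $\sigma = 1/(3\log d)$.

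Second, I would bound each remaining ingredient in the statement of Theorem~\ref{thm:stat_disc}. Since $\norm{\cdot}_q \leq \norm{\cdot}_1$ for $q \geq 1$, we have $\psi(\bthet) \leq W^2/2$. With the pseudogradient choice $\xi \equiv 1$ (the mirror-descent generalization of the GLM-tron), Assumptions~\ref{assmp:frei} and \ref{assmp:xi_bound} hold with $\gamma = B = 1$. To control $\eta$ probabilistically, I would apply Hoeffding's inequality coordinate-by-coordinate to the conditionally mean-zero bounded sum $\frac{1}{n}\sum_i (u(\ip{\bthet}{\bx_i}) - y_i)\bx_i$ and union-bound over its $d$ coordinates, yielding $\eta \leq C\sqrt{2\log(4d/\delta)/n}$ with probability $1-\delta/2$. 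Finally, the Rademacher complexity of $\calF$ follows from Massart's lemma for $\ell_1$-bounded linear classes against $\ell_\infty$-bounded data, where strong convexity applied to the Bregman-ball constraint $\bregd{\bthet}{\bw} \leq \psi(\bthet)$ gives the radius $\norm{\bw}_1 \leq W(1 + \sqrt{3\log d})$.

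Plugging these estimates into Theorem~\ref{thm:stat_disc} and union-bounding over the $\eta$ event and the uniform-convergence event already present in that theorem produces the three summands in the statement after collecting constants. The main obstacle is the strong-convexity calculation: the naive two-step bound $\bregd{\bx}{\by} \geq \frac{q-1}{2}\norm{\bx-\by}_q^2 \geq \frac{q-1}{2e^2}\norm{\bx-\by}_1^2$ only delivers $\sigma = 1/(e^2(\log d - 1))$, which for $d \geq 6$ is slightly looser than the claimed $1/(3\log d)$; recovering the advertised constant requires either a sharper norm-conversion argument or a direct estimate of the Hessian of $\frac{1}{2}\norm{\cdot}_q^2$ near $q = 1$.
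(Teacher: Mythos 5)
Your proposal is correct and follows the same overall route as the paper: instantiate Theorem~\ref{thm:stat_disc}, use $\norm{\bthet}_q \leq \norm{\bthet}_1$ to get $\psi(\bthet) \leq W^2/2$, convert the Bregman-ball constraint into the $\ell_1$ radius $W(1+\sqrt{3\log d})$ via strong convexity, and then bound $\calR_n(\calF)$ and $\eta$. The only differences are in which standard tools you invoke for the last two ingredients. For $\calR_n(\calF)$ the paper pushes the $\ell_1$ ball into a $\norm{\cdot}_q$ ball and applies Theorem~\ref{thm:kakade_linear} with the $1/(3\log d)$-strongly-convex potential, whereas you invoke Massart's lemma for $\ell_1$-bounded linear classes; both give $\widetilde{\mathcal{O}}(CW\log d/\sqrt{n})$. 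For $\eta$ the paper combines the bounded-differences concentration of Lemma~\ref{lem:conc} with the sub-Gaussian-maxima expectation bound of Lemma~\ref{lem:exp} (yielding $\eta = C(\sqrt{2\log(4/\delta)/n} + 4\sqrt{\log(d)/n})$), while your coordinatewise Hoeffding plus union bound gives $C\sqrt{2\log(4d/\delta)/n}$ directly --- in fact your form matches the $\sqrt{\log(4d/\delta)/n}$ appearing in the corollary's statement more transparently. Finally, your caveat about the strong-convexity constant is fair but does not put you behind the paper: the paper's proof of this corollary does not derive $\sigma = 1/(3\log d)$ either, and simply asserts it (it is the constant standardly quoted in the $p$-norm/mirror-descent literature); the naive chain $\bregd{\bx}{\by} \geq \frac{q-1}{2}\norm{\bx-\by}_q^2 \geq \frac{q-1}{2e^2}\norm{\bx-\by}_1^2$ indeed only yields $1/(e^2(\log d - 1))$, which changes nothing in the bound beyond an absolute constant.
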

\begin{cor}[$\infty/1$ dual norm pairs, simplex setup]
\label{cor:ent}
Let $\norm{\cdot} = \norm{\cdot}_1$ and $\norm{\cdot}_* = \norm{\cdot}_\infty$. Take $\psi(\bw) = \dkl{\bw}{\bu}$ where $\bu$ is the discrete uniform distribution in $d$ dimensions and where $d_{KL}$ denotes the KL divergence. Then $\psi(\bw)$ is $1$-strongly convex with respect to $\norm{\cdot}_1$ over the probability simplex and $\psi(\bw) \leq \log(d)$ for any $\bw$. Then,
\begin{align*}
    &\zeta(h_t) \leq 4C\sqrt{\frac{2\log d}{n}} + \sqrt{\frac{8\log(1/\delta)}{n}} + \frac{3LC \sqrt{32\log d}(2C^2LB+1)}{C^2LB}\sqrt{\frac{\log(4d/\delta)}{n}} \:.
\end{align*}
\end{cor}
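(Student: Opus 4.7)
My plan is to derive the corollary by specializing Theorem~\ref{thm:stat_disc} to $\psi(\bw) = \dkl{\bw}{\bu}$ with $\calM = \calC$ equal to the probability simplex in $\R^d$. The first step is to verify the two structural claims stated in the corollary. A direct computation shows $\bregd{\bw_1}{\bw_2} = \dkl{\bw_1}{\bw_2}$ (the uniform reference contributes only an additive constant $\log d$ and a linear term that cancels against the gradient correction), whereupon Pinsker's inequality delivers $1$-strong convexity with respect to $\norm{\cdot}_1$. The bound $\psi(\bw) \leq \log d$ follows from $\dkl{\bw}{\bu} = \log d + \sum_i w_i \log w_i \leq \log d$, since $w_i \log w_i \leq 0$ on the simplex. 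This fixes $\sigma = 1$ and $\psi(\bthet) \leq \log d$ for substitution into the theorem.

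Next I will bound the Rademacher complexity of $\calF = \{\bx \mapsto \ip{\bw}{\bx} : \bw \in \calM,\ \bregd{\bthet}{\bw} \leq \psi(\bthet)\}$. Because every $\bw$ under consideration lies in the simplex, the supremum in $\mathcal{R}_n(\calF)$ is attained at a coordinate vertex, so
\[
    \mathcal{R}_n(\calF) \leq \Expsub{\epsilon}{\max_{1 \leq j \leq d}\frac{1}{n}\sum_{i=1}^n \epsilon_i x_{i,j}} \leq C\sqrt{\frac{2\log d}{n}},
\]
by the standard sub-Gaussian maximal inequality with $\abs{x_{i,j}} \leq C$. This produces the first term $4C\sqrt{2\log d/n}$ of the stated bound after multiplication by $4$.

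The last ingredient is a high-probability control of $\eta$. Since $u(\ip{\bthet}{\bx_i}) - y_i$ is mean-zero conditional on $\bx_i$ and bounded in $[-1,1]$, while $\xi \leq B$ and $\abs{x_{i,j}} \leq C$, each coordinate of the summand defining $\eta$ is mean-zero and bounded in absolute value by $BC$. A coordinate-wise Hoeffding inequality combined with a union bound over the $d$ coordinates at level $\delta/2$ then yields $\eta \leq BC\sqrt{2\log(4d/\delta)/n}$ with probability at least $1-\delta/2$. Substituting this bound, together with $\sigma = 1$ and $\psi(\bthet) \leq \log d$, into the conclusion of Theorem~\ref{thm:stat_disc}, and combining with the uniform-convergence statement at level $\delta/2$ by a final union bound, will produce the claimed inequality. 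The only real difficulty is careful bookkeeping of multiplicative constants: the factor $(2C^2 LB + 1)/(2C^2 LB)$ inherited from Theorem~\ref{thm:stat_disc} interacts with the $LB$ dependence introduced by the $\eta$ bound, and the constant $\gamma$ from Assumption~\ref{assmp:frei} must be tracked consistently to recover the precise leading coefficient.
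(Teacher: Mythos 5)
Your proposal is correct and follows the paper's overall strategy (specialize Theorem~\ref{thm:stat_disc} with $\sigma = 1$ and $\psi(\bthet) \leq \log d$), but both quantitative ingredients are obtained by different means. For the Rademacher complexity, the paper notes $\calF \subseteq \{\bx \mapsto \ip{\bw}{\bx} : \psi(\bw) \leq \log d\}$ and invokes the general strongly-convex-potential bound of Theorem~\ref{thm:kakade_linear} with $W^2 = \log d$; you instead exploit the simplex structure directly, observing that the supremum of a linear functional over the simplex is attained at a vertex and applying the sub-Gaussian maximal inequality over $d$ coordinates. Both yield $C\sqrt{2\log d/n}$; yours is more elementary but specific to the simplex, while the paper's argument is the uniform machinery reused across Corollaries~\ref{cor:pq_1}--\ref{cor:ent}. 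For $\eta$, the paper combines Lemma~\ref{lem:conc} (bounded-differences concentration of the dual norm about its mean) with Lemma~\ref{lem:exp} (a symmetrization bound on the expected $\ell_\infty$ norm), producing the two-term bound $C(\sqrt{2\log(4/\delta)/n} + 4\sqrt{\log(d)/n})$, whereas you use a coordinate-wise Hoeffding inequality with a union bound over the $d$ coordinates, which delivers the single term $\propto \sqrt{\log(4d/\delta)/n}$ that appears verbatim in the stated bound. Two small bookkeeping remarks: your $\eta$ bound carries the factor $B$ from $\xi \leq B$, which is actually the more faithful reading of the hypothesis of Theorem~\ref{thm:stat_disc} (the paper's corollary proofs write $\eta = C(\cdots)$ without $B$, and the stated constant reflects that choice, so your extra $B$ would cancel the $B$ in the denominator of $(2C^2LB+1)/(C^2LB)$); and the factor $\gamma$ from Assumption~\ref{assmp:frei}, which you correctly flag as needing to be tracked, is silently dropped in the corollary as stated. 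Neither affects the validity of your argument.
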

In the above results, the dimensionality dependence of the generalization error has been reduced by judicious choice of $\psi$. In particular, Corollaries~\ref{cor:pq_2} and~\ref{cor:ent} are merely logarithmic in dimension, while a bound for the GLM-tron would be polynomial in dimension.

Similar to Theorem~\ref{thm:stat_disc}, we now show that Lemma~\ref{lem:conv} and Theorem~\ref{thm:imp_reg} are preserved when discretizing \eqref{eqn:ref}. We first state a convergence guarantee.
\begin{lem}
\label{lem:conv_disc}
Suppose that $\{\bx_i, y_i\}_{i=1}^n$ are drawn i.i.d.\ from a distribution $\mathcal{D}$ supported on $\mathcal{X}\times [0, 1]$. Let the dataset satisfy Assumption~\ref{assmp:real} let $u$ satisfy Assumption~\ref{assmp:link}, and let $\psi$ satisfy Assumption~\ref{assmp:sc}. Suppose $\norm{\bx_i}_* \leq C$. Then for $\lambda \leq \frac{2\sigma}{C^2BL}$, $\wh{\varepsilon}(h_t) \rightarrow 0$ where $h_t(\bx) = u\left(\left\langle\wh{\bthet}(t), \bx\right\rangle\right)$ is the hypothesis with parameters output by \eqref{eqn:refl_disc}~\&~\eqref{eqn:refl_disc_proj} at time $t$ with $\wh{\bthet}_1 = \argmin_{\bw\in\calC\cap\calM} \psi(\bw)$. Furthermore, $\min_{t'\in [0, t]}\left\{\wh{\varepsilon}(h_{t'})\right\} \leq \mathcal{O}\left(1/t\right)$.
\end{lem}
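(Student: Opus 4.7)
The proof is the discrete-time analogue of Lemma~\ref{lem:conv}, with $\bregd{\bthet}{\wh{\bthet}_t}$ serving as a Lyapunov function. The plan is to establish a one-step decrease inequality of the form $\bregd{\bthet}{\wh{\bthet}_{t+1}} - \bregd{\bthet}{\wh{\bthet}_t} \leq -c\,\wh{\varepsilon}(h_t)$ for some $c > 0$ under the stated step-size condition, then telescope to obtain summability of $\wh{\varepsilon}(h_t)$, from which both $\wh{\varepsilon}(h_t) \to 0$ and the $\mathcal{O}(1/t)$ rate on $\min_{t' \leq t}\wh{\varepsilon}(h_{t'})$ follow.

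First I would apply the three-point identity for Bregman divergences with $a = \bthet$, $b = \wh{\bthet}_t$, $c = \wh{\bphi}_{t+1}$, giving
\begin{equation*}
\bregd{\bthet}{\wh{\bphi}_{t+1}} - \bregd{\bthet}{\wh{\bthet}_t} = \bregd{\wh{\bthet}_t}{\wh{\bphi}_{t+1}} + \ip{\nabla\psi(\wh{\bthet}_t) - \nabla\psi(\wh{\bphi}_{t+1})}{\bthet - \wh{\bthet}_t}.
\end{equation*}
Writing $S_t = \frac{1}{n}\sum_i (u(\ip{\wh{\bthet}_t}{\bx_i}) - y_i)\,\xi(\wh{\bthet}_t,\bx_i)\,\ip{\bx_i}{\wh{\bthet}_t - \bthet}$, substituting the update rule \eqref{eqn:refl_disc} turns the inner product into $-\lambda S_t$, while $\sigma$-strong convexity of $\psi$ (equivalently, $1/\sigma$-smoothness of $\psi^*$) bounds $\bregd{\wh{\bthet}_t}{\wh{\bphi}_{t+1}} \leq \frac{1}{2\sigma}\norm{\nabla\psi(\wh{\bphi}_{t+1}) - \nabla\psi(\wh{\bthet}_t)}_*^2$. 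Triangle inequality, Cauchy--Schwarz, and $\norm{\bx_i}_* \leq C$ leave a quadratic residual of the form $\lambda^2 C^2 \cdot \frac{1}{n}\sum_i (u(\ip{\wh{\bthet}_t}{\bx_i})-y_i)^2\,\xi_i^2$.

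The crucial step is to tie this residual back to $S_t$ itself. Realizability combined with $L$-Lipschitz continuity and monotonicity of $u$ gives $(u(\ip{\wh{\bthet}_t}{\bx_i}) - y_i)^2 \leq L\,(u(\ip{\wh{\bthet}_t}{\bx_i}) - y_i)\,\ip{\bx_i}{\wh{\bthet}_t - \bthet}$, and multiplying by one copy of $\xi_i \leq B$ from Assumption~\ref{assmp:xi_bound} yields $\frac{1}{n}\sum_i (u(\ip{\wh{\bthet}_t}{\bx_i})-y_i)^2\,\xi_i^2 \leq LB\,S_t$. Substituting back produces the clean progress bound
\begin{equation*}
\bregd{\bthet}{\wh{\bphi}_{t+1}} - \bregd{\bthet}{\wh{\bthet}_t} \leq \lambda S_t\!\left(\frac{\lambda C^2 LB}{2\sigma} - 1\right),
\end{equation*}
which is nonpositive under the stated condition $\lambda \leq \frac{2\sigma}{C^2 BL}$. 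The Bregman projection onto $\calC$ is handled by the Pythagorean inequality: since $\bthet \in \calC$, $\bregd{\bthet}{\wh{\bthet}_{t+1}} \leq \bregd{\bthet}{\wh{\bphi}_{t+1}}$, so the same bound carries over to the projected iterate.

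To close out convergence and the rate, the same Lipschitz/monotone calculation gives the lower bound $S_t \geq \frac{\gamma}{L}\wh{\varepsilon}(h_t)$, where $\gamma > 0$ is the Assumption~\ref{assmp:frei} lower bound on $\xi$ over the compact set containing the iterates (whose boundedness is guaranteed by the nonincreasing Bregman divergence together with strong convexity). With $\lambda$ strictly below $\frac{2\sigma}{C^2 BL}$, telescoping and rearranging yields $\sum_{t'=1}^t \wh{\varepsilon}(h_{t'}) \lesssim \bregd{\bthet}{\wh{\bthet}_1}$, which immediately gives both $\wh{\varepsilon}(h_t) \to 0$ (summable nonnegative sequence) and $\min_{t' \leq t}\wh{\varepsilon}(h_{t'}) \leq \mathcal{O}(1/t)$ via $\min \cdot t \leq \sum$. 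The main obstacle is the algebraic bookkeeping at the smoothness step: naively using $\xi_i^2 \leq B^2$ would inflate the residual to $C^2 B^2$ and require a step size of $\mathcal{O}(\sigma/(C^2 B^2 L))$, losing a factor of $B$; trading one copy of $(u-y_i)$ for $L$ via Lipschitz plus monotonicity --- the same trick that powered the continuous-time proof --- is what preserves the advertised $\mathcal{O}(\sigma/(C^2 BL))$ step-size scaling.
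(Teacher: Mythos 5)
Your proposal is correct and follows essentially the same route as the paper: three-point identity plus the generalized Pythagorean inequality and $1/\sigma$-smoothness of $\psi^*$, the Lipschitz/monotone trade to absorb the quadratic residual (spending only one factor of $B$), telescoping, and finally Assumption~\ref{assmp:frei} with boundedness of the iterates to pass from the $\xi$-weighted error to $\wh{\varepsilon}$. The only cosmetic difference is that you phrase the decrease in terms of the correlation $S_t$ rather than the modified error $\wh{H}(h_t)$ used in the paper; the two are equivalent up to the same Lipschitz inequality and yield identical constants and step-size condition.
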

We conclude by showing that the implicit bias properties of \eqref{eqn:refl_disc}~\&~\eqref{eqn:refl_disc_proj} match those of \eqref{eqn:ref}.
\begin{thm}
\label{thm:imp_reg_disc}
Consider the setting of Lemma~\ref{lem:conv_disc}, and assume that $u(\cdot)$ is invertible. Let $\mathcal{A} = \{\bar{\bthet} \in \calC\cap\calM: u\left(\left\langle\bar{\bthet}, \bx_i\right\rangle\right) = y_i, i = 1, \hdots,n\}$ be the set of parameters that interpolate the data, and assume that $\wh{\bthet}_t\rightarrow\wh{\bthet}_{\infty} \in \mathcal{A}$. Then $\wh{\bthet}_{\infty} = \argmin_{\bar{\bthet} \in \mathcal{A}}\bregd{\bar{\bthet}}{\wh{\bthet}_1}$. In particular, if $\wh{\bthet}_1 = \argmin_{\bw\in\calC\cap\calM}\psi(\bw)$, then $\wh{\bthet}_\infty = \argmin_{\bar{\bthet}\in\mathcal{A}}\psi(\bar{\bthet})$.
\end{thm}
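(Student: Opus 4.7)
The plan is to reproduce the continuous-time argument of Theorem~\ref{thm:imp_reg} in discrete time, by showing that on $\mathcal{A}$ the quantity $\bregd{\bar{\bthet}}{\wh{\bthet}_1} - \bregd{\bar{\bthet}}{\wh{\bthet}_\infty}$ is a $\bar{\bthet}$-independent constant. Combined with strict convexity of $\psi$, which makes $\bregd{\cdot}{\wh{\bthet}_\infty}$ nonnegative and vanish only at $\wh{\bthet}_\infty\in\mathcal{A}$, this will identify $\wh{\bthet}_\infty$ as the unique minimizer of $\bregd{\bar{\bthet}}{\wh{\bthet}_1}$ on $\mathcal{A}$.

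First I would apply the three-point Bregman identity with $(x,y,z) = (\bar{\bthet},\wh{\bthet}_1,\wh{\bthet}_\infty)$ to obtain
\[
\bregd{\bar{\bthet}}{\wh{\bthet}_1} - \bregd{\bar{\bthet}}{\wh{\bthet}_\infty} = \bregd{\wh{\bthet}_\infty}{\wh{\bthet}_1} + \ip{\nabla\psi(\wh{\bthet}_\infty) - \nabla\psi(\wh{\bthet}_1)}{\bar{\bthet}-\wh{\bthet}_\infty}.
\]
Invertibility of $u$ together with $\bar{\bthet},\wh{\bthet}_\infty\in\mathcal{A}$ makes $\bar{\bthet}-\wh{\bthet}_\infty$ orthogonal to every $\bx_i$, so the inner product vanishes as soon as $\Delta := \nabla\psi(\wh{\bthet}_\infty)-\nabla\psi(\wh{\bthet}_1) \in \mathrm{span}\{\bx_1,\ldots,\bx_n\}$. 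Writing $\Delta$ as a telescoping sum over iterations, each increment splits into a mirror-update piece $\nabla\psi(\wh{\bphi}_{t+1}) - \nabla\psi(\wh{\bthet}_t)$, which by~\eqref{eqn:refl_disc} is a linear combination of the $\bx_i$'s and hence already lives in $\mathrm{span}\{\bx_i\}$, and a projection correction $\mathbf{e}_t := \nabla\psi(\wh{\bthet}_{t+1}) - \nabla\psi(\wh{\bphi}_{t+1})$.

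The main obstacle, absent in continuous time, is showing that the projection corrections also respect the data span. First-order optimality of $\wh{\bthet}_{t+1} = \Pi_\calC^\psi(\wh{\bphi}_{t+1})$ gives $\ip{\mathbf{e}_t}{\bw - \wh{\bthet}_{t+1}}\geq 0$ for all $\bw\in\calC\cap\calM$. Because invertibility of $u$ makes the interpolation constraint affine, $\mathcal{A}$ lies in an affine subspace, and any tangent direction may be probed from both sides: testing the inequality at $\bw = \bar{\bthet}$ and $\bw = 2\wh{\bthet}_\infty - \bar{\bthet}$, both of which remain in $\mathcal{A}\subseteq\calC\cap\calM$ when $\mathcal{A}$ agrees with its affine hull near $\wh{\bthet}_\infty$, produces two inequalities whose sum and difference force $\ip{\mathbf{e}_t}{\bar{\bthet}-\wh{\bthet}_\infty} = 0$. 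This pins each $\mathbf{e}_t$, and hence $\Delta$, into $\mathrm{span}\{\bx_i\}$.

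With the span condition in hand, the three-point identity collapses to $\bregd{\bar{\bthet}}{\wh{\bthet}_1} = \bregd{\bar{\bthet}}{\wh{\bthet}_\infty} + \bregd{\wh{\bthet}_\infty}{\wh{\bthet}_1}$ on $\mathcal{A}$, so the conclusion $\wh{\bthet}_\infty = \argmin_{\bar{\bthet}\in\mathcal{A}}\bregd{\bar{\bthet}}{\wh{\bthet}_1}$ follows from strict convexity of $\psi$. For the specialization to $\wh{\bthet}_1 = \argmin_{\bw\in\calC\cap\calM}\psi(\bw)$, expanding $\bregd{\bar{\bthet}}{\wh{\bthet}_1} = \psi(\bar{\bthet}) - \psi(\wh{\bthet}_1) - \ip{\nabla\psi(\wh{\bthet}_1)}{\bar{\bthet}-\wh{\bthet}_1}$ and using first-order optimality of $\wh{\bthet}_1$ (in particular $\nabla\psi(\wh{\bthet}_1) = 0$ at an interior minimum) reduces the minimization of $\bregd{\bar{\bthet}}{\wh{\bthet}_1}$ on $\mathcal{A}$ to that of $\psi(\bar{\bthet})$, completing the argument.
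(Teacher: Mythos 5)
Your overall strategy --- establishing an exact conservation identity showing that $\bregd{\bar{\bthet}}{\wh{\bthet}_1} - \bregd{\bar{\bthet}}{\wh{\bthet}_\infty}$ is constant over $\mathcal{A}$ --- is the natural discrete analogue of Theorem~\ref{thm:imp_reg}, and your decomposition of $\nabla\psi(\wh{\bthet}_\infty)-\nabla\psi(\wh{\bthet}_1)$ into mirror-update increments (which do lie in $\mathrm{span}\{\bx_1,\ldots,\bx_n\}$ by \eqref{eqn:refl_disc}) plus projection corrections correctly isolates the one genuinely new difficulty. However, the step that disposes of the projection corrections does not work. The variational inequality $\ip{\mathbf{e}_t}{\bw-\wh{\bthet}_{t+1}}\geq 0$ is anchored at $\wh{\bthet}_{t+1}$, whereas your two test points $\bar{\bthet}$ and $2\wh{\bthet}_\infty-\bar{\bthet}$ are reflections about $\wh{\bthet}_\infty$: adding the two inequalities yields only $\ip{\mathbf{e}_t}{\wh{\bthet}_\infty-\wh{\bthet}_{t+1}}\geq 0$, and subtracting two inequalities of the same sign yields nothing, so you cannot conclude $\ip{\mathbf{e}_t}{\bar{\bthet}-\wh{\bthet}_\infty}=0$. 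To run a reflection argument you would have to reflect about $\wh{\bthet}_{t+1}$ itself, but $\wh{\bthet}_{t+1}\notin\mathcal{A}$ in general, so $2\wh{\bthet}_{t+1}-\bar{\bthet}$ has no reason to lie in $\calC\cap\calM$; and even $2\wh{\bthet}_\infty-\bar{\bthet}$ can leave $\calC$ (take $\calC$ an $\ell_1$-ball with $\bar{\bthet}$ on its boundary). The underlying obstruction is that $\mathbf{e}_t$ lies in the normal cone of $\calC$ at $\wh{\bthet}_{t+1}$, which is in general not orthogonal to the affine hull of $\mathcal{A}$, so the exact equality you are after is simply false once a nontrivial projection is present.

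The paper avoids this entirely by not seeking an equality: it applies the generalized Pythagorean inequality \eqref{eqn:pythag_gen} to absorb the projection step into the one-sided bound $\bregd{\bar{\bthet}}{\wh{\bthet}_{t+1}}\leq\bregd{\bar{\bthet}}{\wh{\bphi}_{t+1}}$, telescopes the resulting recursion \eqref{eqn:breg_inc_simp}, and observes that after substituting $\ip{\bx_i}{\bar{\bthet}}=u^{-1}(y_i)$ the accumulated terms are independent of $\bar{\bthet}$, so the only $\bar{\bthet}$-dependence on the right-hand side is through $\bregd{\bar{\bthet}}{\wh{\bthet}_1}$. Your argument would go through exactly as written in the unconstrained case $\calC=\calM=\R^d$, where there is no projection correction and the conservation law is exact (this is the Azizan--Hassibi computation); to keep the projection you should either follow the paper's inequality route or restrict to constraint sets whose normal cones along the trajectory are contained in $\mathrm{span}\{\bx_i\}$. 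A smaller point: in the final specialization you invoke $\nabla\psi(\wh{\bthet}_1)=0$, which requires the constrained minimizer of $\psi$ over $\calC\cap\calM$ to be an interior critical point; this holds for the potentials used in the paper but is an additional hypothesis worth stating.
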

Taken together, the results in this section show that the continuous guarantees are preserved by discretization, though the analysis requires care of higher-order terms that vanish in the continuous limit.

\section{Stochastic optimization}
\label{sec:online}%
In this section, we provide guarantees for the iteration
\begin{align}
    \label{eqn:refl_online}
    \nabla\psi\left(\wh{\bphi}_{t+1}\right) &= \nabla\psi\left(\wh{\bthet}_t\right) - \lambda \left(u\left(\ip{\wh{\bthet}_t}{\bx_t}\right) - y_t\right)\xi\left(\wh{\bthet}_t, \bx_t\right)\bx_t,\\
    \wh{\bthet}_{t+1} &= \Pi_{\calC}^\psi\left(\wh{\bphi}_{t+1}\right),
    \label{eqn:refl_online_proj}
\end{align}
which is similar to stochastic gradient descent. We first consider the bounded noise setting, where we conclude a $\mathcal{O}(1/\sqrt{t})$ convergence rate of the generalization error.
\begin{thm}
\label{thm:online_noise}
Suppose that $\{\bx_t, y_t\}_{t=1}^\infty$ are drawn i.i.d. from a distribution $\mathcal{D}$ supported on $\mathcal{X}\times[0, 1]$ where $\mathbb{E}\left[y|\bx\right] = u\left(\left\langle\bthet, \bx\right\rangle\right)$, $\bthet \in \calC$ is an unknown vector of parameters, and $u$ satisfies Assumption~\ref{assmp:link}. Assume that $\calC$ is compact, and let $R = \diam(\calC)$ as measured in the norm $\norm{\cdot}$. Suppose that $\psi$ satisfies Assumption~\ref{assmp:sc}, and that $\norm{\bx_t}_* \leq C$ for all $t$. Fix a horizon $T$, and choose $\lambda < \min\left\{\frac{2\sigma}{C^2LB}, \frac{1}{\sqrt{T}}\right\}$. Then for any $\delta \in (0, 1)$, with probability at least $1 - \delta$,
\begin{align*}
    \min_{t < T}\varepsilon(h_t) &\leq \mathcal{O}\left(\frac{L}{\sqrt{T}\gamma}\left(\psi(\bthet)+ \sqrt{CBR\log(6/\delta)} + \frac{C^2B^2}{\sigma}\right)\right)
\end{align*}
where $h_t$ is the hypothesis output by \eqref{eqn:refl_online}~\&~\eqref{eqn:refl_online_proj} at iteration $t$ with $\wh{\bthet}_1 = \argmin_{\bthet\in\calC}\psi(\bthet)$, and $\gamma$ corresponds to $a = C$ and $b = R + \norm{\bthet}$ in Assumption~\ref{assmp:frei}.
\end{thm}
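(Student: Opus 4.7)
The plan is to adapt the Bregman-divergence Lyapunov argument of Theorem~\ref{thm:stat} to the stochastic setting, controlling both the one-sample gradient noise and the gap between instantaneous and population excess risk via martingale concentration. Writing $g_t = (u(\ip{\wh{\bthet}_t}{\bx_t}) - y_t)\xi(\wh{\bthet}_t,\bx_t)\bx_t$ and $\xi_t = \xi(\wh{\bthet}_t, \bx_t)$, the starting point is a one-step Bregman contraction: combining the three-point identity for $d_\psi$, the generalized Pythagorean inequality for $\Pi_{\calC}^\psi$, and the $1/\sigma$-smoothness of $\psi^*$ dual to Assumption~\ref{assmp:sc} yields
$d_\psi(\bthet \| \wh{\bthet}_{t+1}) \leq d_\psi(\bthet \| \wh{\bthet}_t) - \lambda \ip{g_t}{\wh{\bthet}_t - \bthet} + \frac{\lambda^2}{2\sigma} \norm{g_t}_*^2.$

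I would then perform a signal/noise decomposition $\ip{g_t}{\wh{\bthet}_t - \bthet} = \Delta_t - N_t$ obtained by adding and subtracting $u(\ip{\bthet}{\bx_t})$, where the signal is $\Delta_t = (u(\ip{\wh{\bthet}_t}{\bx_t}) - u(\ip{\bthet}{\bx_t}))\,\xi_t \ip{\bx_t}{\wh{\bthet}_t - \bthet}$ and the noise is $N_t = (y_t - u(\ip{\bthet}{\bx_t}))\,\xi_t \ip{\bx_t}{\wh{\bthet}_t - \bthet}$. By $L$-Lipschitzness and monotonicity of $u$ combined with the bound $\xi_t \geq \gamma$ from Assumption~\ref{assmp:frei} applied with $a = C$ and $b = R + \norm{\bthet}$ (valid since $\wh{\bthet}_t \in \calC$), one obtains $\Delta_t \geq (\gamma/L)\varepsilon_t$, where $\varepsilon_t := (u(\ip{\wh{\bthet}_t}{\bx_t}) - u(\ip{\bthet}{\bx_t}))^2$ is the instantaneous excess square-loss. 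Since $\mathbb{E}[y_t \mid \bx_t] = u(\ip{\bthet}{\bx_t})$ and $\wh{\bthet}_t$ is $\mathcal{F}_{t-1}$-measurable, $\{N_t\}$ is a martingale difference sequence with $|N_t| \leq BCR$.

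Telescoping the one-step inequality from $t = 1$ to $T$, using $d_\psi(\bthet \| \wh{\bthet}_1) = \psi(\bthet)$ from the initialization, $d_\psi(\bthet \| \wh{\bthet}_{T+1}) \geq 0$, and $\norm{g_t}_* \leq BC$, gives
$\frac{\lambda \gamma}{L}\sum_{t=1}^T \varepsilon_t \leq \psi(\bthet) + \frac{\lambda^2 T B^2 C^2}{2\sigma} + \lambda\sum_{t=1}^T N_t.$
I would then apply Azuma-Hoeffding to the martingale sum $\sum_t N_t$ (per-step bound $BCR$), and separately to $M_t := \varepsilon(h_t) - \varepsilon_t$ (per-step bound $1$, for converting instantaneous to population excess risk). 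Dividing by $\lambda T\gamma/L$, using $\min_t \varepsilon(h_t) \leq \frac{1}{T}\sum_t \varepsilon(h_t)$, and choosing $\lambda = 1/\sqrt{T}$ balances the $\psi(\bthet)/(\lambda T)$ and $\lambda B^2 C^2/\sigma$ terms to produce the claimed $\mathcal{O}(1/\sqrt{T})$ rate, with the $\log(6/\delta)$ arising from a union bound over the two concentration events. The auxiliary step-size restriction $\lambda < 2\sigma/(C^2 L B)$ likely enters through a variance-type refinement in which an $\varepsilon_t$-proportional portion of $\norm{g_t}_*^2$ is absorbed into $\Delta_t$, analogous to the full-batch descent argument of Lemma~\ref{lem:conv_disc}.

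The hardest part will be coordinating the two martingale concentrations into a single clean high-probability event and sharpening the crude $\norm{g_t}_*^2 \leq B^2 C^2$ bound enough to recover the $C^2 B^2/\sigma$ scaling (rather than an $R$-dependent one) in the final expression. Boundedness of the iterates is automatic from compactness of $\calC$ and the Bregman projection step, which makes Assumption~\ref{assmp:frei} directly applicable with the stated constants $a = C$ and $b = R + \norm{\bthet}$.
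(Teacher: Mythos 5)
Your proposal is correct and follows essentially the same Lyapunov--martingale skeleton as the paper's proof: the one-step Bregman inequality obtained from the three-point identity, the generalized Pythagorean theorem, and $1/\sigma$-smoothness of $\psi^*$; the signal/noise split of $\ip{g_t}{\wh{\bthet}_t - \bthet}$ with $\Delta_t \geq (\gamma/L)\varepsilon_t$ and $|N_t| \leq BCR$; Azuma on the noise martingale; telescoping against $\bregd{\bthet}{\wh{\bthet}_1} = \psi(\bthet)$; and a final martingale transfer from the conditional risk $\varepsilon_t$ to $\varepsilon(h_t)$. You deviate in two minor, harmless ways. First, you bound $\norm{g_t}_*^2 \leq B^2C^2$ crudely, whereas the paper expands $\left(u\left(\ip{\bx_t}{\wh{\bthet}_t}\right) - y_t\right)^2$ into signal, noise, and cross terms and absorbs the signal part into the negative drift -- this is exactly where the restriction $\lambda < 2\sigma/(C^2LB)$ enters, confirming your closing guess, and it is also what generates the paper's second martingale $D_t^{(2)}$ and its three-way union bound behind the $\log(6/\delta)$. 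Your crude bound already produces the stated $\frac{L}{\sqrt{T}\gamma}\cdot\frac{C^2B^2}{\sigma}$ term and simply renders that step-size restriction unnecessary, so no refinement is actually needed. Second, for the transfer $\varepsilon_t \to \varepsilon(h_t)$ you use Azuma with increment bound $1$, while the paper invokes the martingale Bernstein bound (Lemma~\ref{lem:martingale}) with conditional variance proportional to $\varepsilon(h_t)$; both give the $\mathcal{O}(1/\sqrt{T})$ rate here (Bernstein is only essential for the fast rate of Theorem~\ref{thm:online_real}), though your resulting additive $\sqrt{\log(1/\delta)/T}$ term does not carry the $L/\gamma$ prefactor of the stated bound -- a cosmetic mismatch absorbed by the $\mathcal{O}(\cdot)$.
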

We now consider the realizable setting, where we obtain fast $\mathcal{O}(1/t)$ rates.
\begin{thm}
\label{thm:online_real}
Suppose that $\{\bx_t, y_t\}_{t=1}^{\infty}$ are drawn i.i.d. from a distribution $\mathcal{D}$ supported on $\mathcal{X}$. Let Assumption~\ref{assmp:real} be satisfied with $\bthet\in\calC$ an unknown vector of parameters, let $u$ satisfy Assumption~\ref{assmp:link}, let $\psi$ satisfy Assumption~\ref{assmp:sc}, and assume that $\norm{\bx_t}_* \leq C$ for all $t$. Fix $\lambda < \frac{2\sigma}{LC^2B}$. Then for any $\delta \in (0, 1)$, for all $T \geq 1$, with probability at least $1 - \delta$
\begin{equation*}
    \min_{t < T} \varepsilon(h_t) \leq \mathcal{O}\left(\frac{L^2 C^2B \psi(\bthet)\log(1/\delta)}{\sigma T\gamma}\right),
\end{equation*}
where $h_t$ is the hypothesis output by \eqref{eqn:refl_online}~\&~\eqref{eqn:refl_online_proj} at iteration $t$ and where $\gamma$ corresponds to $a = C$ and $b = \norm{\bthet} + \sqrt{\frac{2\psi(\bthet)}{\sigma}}$ in Assumption~\ref{assmp:frei}.
\end{thm}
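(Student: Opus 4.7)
The plan is to follow the standard stochastic mirror descent template specialized to the GLM pseudogradient, combined with a self-bounding martingale concentration to preserve the fast $\mathcal{O}(1/T)$ rate.

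First, I would derive a one-step Bregman divergence recursion. The three-point identity together with the nonexpansiveness of $\Pi_\calC^\psi$ in its first argument gives
\begin{align*}
    \bregd{\bthet}{\wh{\bthet}_{t+1}} - \bregd{\bthet}{\wh{\bthet}_t} \leq \bregd{\wh{\bthet}_t}{\wh{\bphi}_{t+1}} - \lambda\, \xi(\wh{\bthet}_t, \bx_t)\, \tilde y_t\, \ip{\bx_t}{\wh{\bthet}_t - \bthet},
\end{align*}
where $\tilde y_t = u(\ip{\wh{\bthet}_t}{\bx_t}) - y_t$. In the realizable setting, monotonicity and $L$-Lipschitzness of $u$ imply $\tilde y_t\, \ip{\bx_t}{\wh{\bthet}_t - \bthet} \geq \tfrac{1}{L}\ell_t$ with $\ell_t = \tilde y_t^2$. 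On the other side, $\sigma$-strong convexity of $\psi$ gives $\bregd{\wh{\bthet}_t}{\wh{\bphi}_{t+1}} \leq \tfrac{1}{2\sigma}\|\nabla\psi(\wh{\bthet}_t) - \nabla\psi(\wh{\bphi}_{t+1})\|_*^2$; bounding $\|\bx_t\|_* \leq C$ and using $\xi(\wh{\bthet}_t, \bx_t)^2 \leq B\,\xi(\wh{\bthet}_t, \bx_t)$ yields $\bregd{\wh{\bthet}_t}{\wh{\bphi}_{t+1}} \leq \tfrac{\lambda^2 B C^2}{2\sigma}\xi(\wh{\bthet}_t, \bx_t)\,\ell_t$. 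The step size condition $\lambda \leq \tfrac{\sigma}{LBC^2}$ then produces the clean contraction
\begin{align*}
    \bregd{\bthet}{\wh{\bthet}_{t+1}} \leq \bregd{\bthet}{\wh{\bthet}_t} - \frac{\lambda\, \xi(\wh{\bthet}_t, \bx_t)}{2L}\, \ell_t.
\end{align*}

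Second, I would use this pathwise decrease to conclude by induction that $\bregd{\bthet}{\wh{\bthet}_t} \leq \psi(\bthet)$ for all $t$, so strong convexity pins $\|\wh{\bthet}_t\| \leq \|\bthet\| + \sqrt{2\psi(\bthet)/\sigma}$. This is exactly the constant $b$ used in the theorem statement, so Assumption~\ref{assmp:frei} yields $\xi(\wh{\bthet}_t, \bx_t) \geq \gamma$ uniformly along the trajectory. Telescoping the descent inequality gives the deterministic bound $\sum_{t=1}^T \ell_t \leq \tfrac{2L\psi(\bthet)}{\lambda \gamma}$.

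Third, I need to transfer the summed in-sample losses $\sum_t \ell_t$ to the summed population excess risks $\sum_t \varepsilon(h_t)$ without losing the $1/T$ rate. Since $\wh{\bthet}_t$ is $\mathcal{F}_{t-1}$-measurable and $(\bx_t, y_t)$ is drawn fresh, $X_t := \varepsilon(h_t) - \ell_t$ is a bounded martingale difference with conditional variance $\E[X_t^2\mid\mathcal{F}_{t-1}] \leq \E[\ell_t^2\mid\mathcal{F}_{t-1}] \leq \varepsilon(h_t)$ (using $\ell_t \in [0,1]$). Freedman's inequality then gives $\sum_t \varepsilon(h_t) \leq \sum_t \ell_t + \sqrt{2\log(1/\delta)\sum_t \varepsilon(h_t)} + \tfrac{\log(1/\delta)}{3}$ with probability $\geq 1-\delta$; solving the resulting quadratic in $\sqrt{\sum_t \varepsilon(h_t)}$ and using $\min_{t<T}\varepsilon(h_t) \leq \tfrac{1}{T}\sum_t \varepsilon(h_t)$ with $\lambda$ chosen of order $\sigma/(LBC^2)$ produces the stated bound.

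The main obstacle is this final concentration step: a direct Hoeffding-type bound would only produce an $\mathcal{O}(1/\sqrt{T})$ rate and destroy the benefit of realizability. The key is to exploit the self-bounding property $\text{Var}(\ell_t\mid\mathcal{F}_{t-1}) \leq \varepsilon(h_t)$ via Freedman's inequality so that the deviation term can be reabsorbed into the quantity being bounded. A minor auxiliary care is verifying that the induction in step two is genuinely pathwise (which it is, because the per-step recursion holds almost surely on every realization), so that Assumption~\ref{assmp:frei} can be invoked at every iterate without further probabilistic overhead.
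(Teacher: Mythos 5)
Your proposal is correct and follows essentially the same route as the paper: the same one-step Bregman recursion (three-point identity plus the generalized Pythagorean theorem, strong convexity to absorb the quadratic term under the step-size condition), the same pathwise telescoping and norm bound enabling Assumption~\ref{assmp:frei}, and the same self-bounding martingale Bernstein argument (the paper uses Lemma~\ref{lem:martingale} where you invoke Freedman's inequality, but both exploit $\E[D_t^2\mid\calF_{t-1}]\lesssim\varepsilon(h_t)$ to reabsorb the deviation and preserve the $1/T$ rate). The only differences are cosmetic constants in the step-size parametrization.
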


\section{Experiments}
\label{sec:experiments}
\begin{figure*}[ht]
     \centering
     \begin{subfigure}[b]{0.3275\textwidth}
         \centering
         \begin{overpic}[width=\textwidth]{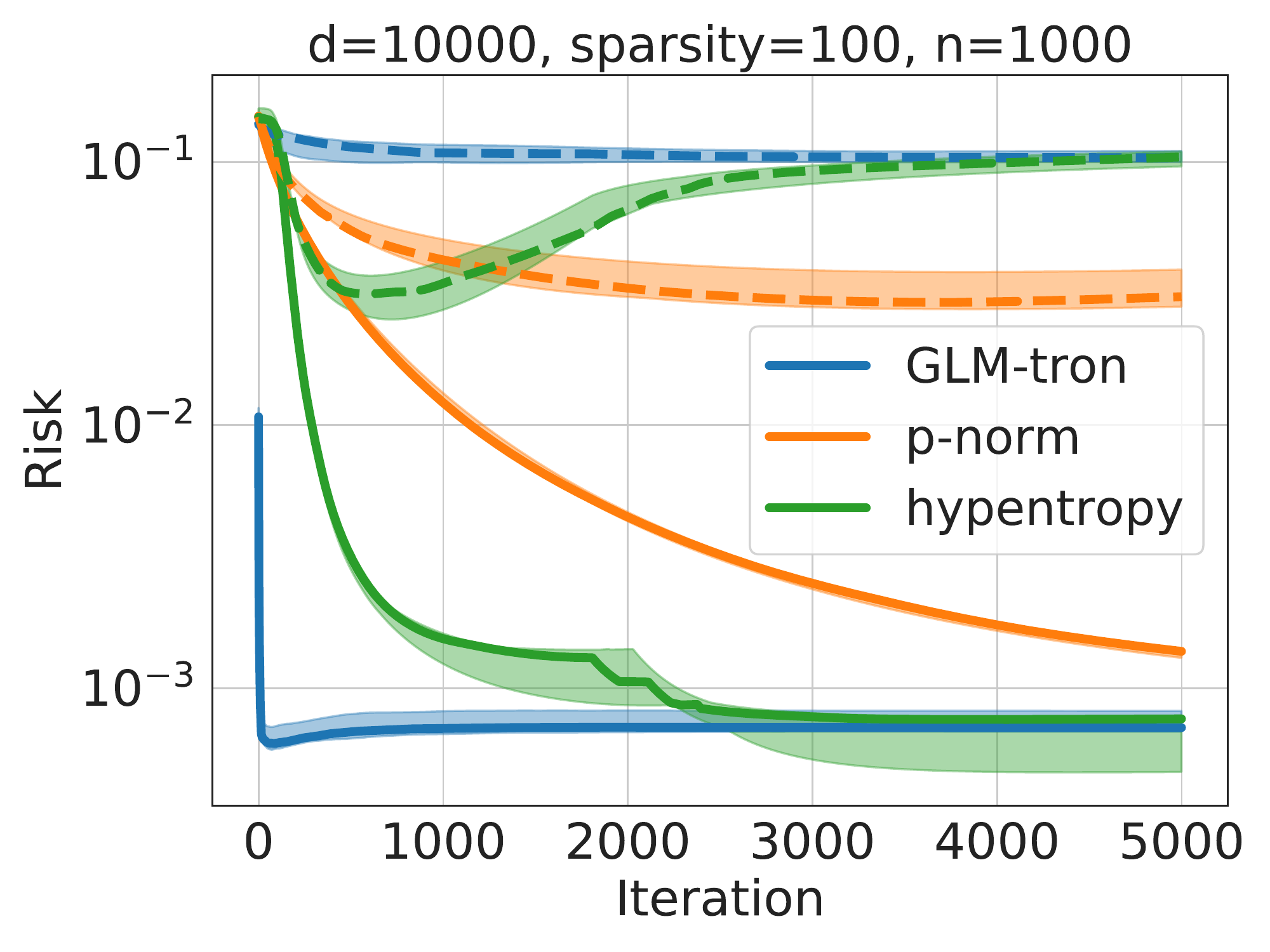}
         \put(5, 70){\textbf{a}}
         \end{overpic}
         \caption{\label{fig:sparse_vector_training_curve}}
     \end{subfigure}
     \hfill
     \begin{subfigure}[b]{0.3275\textwidth}
         \centering
         \begin{overpic}[width=\textwidth]{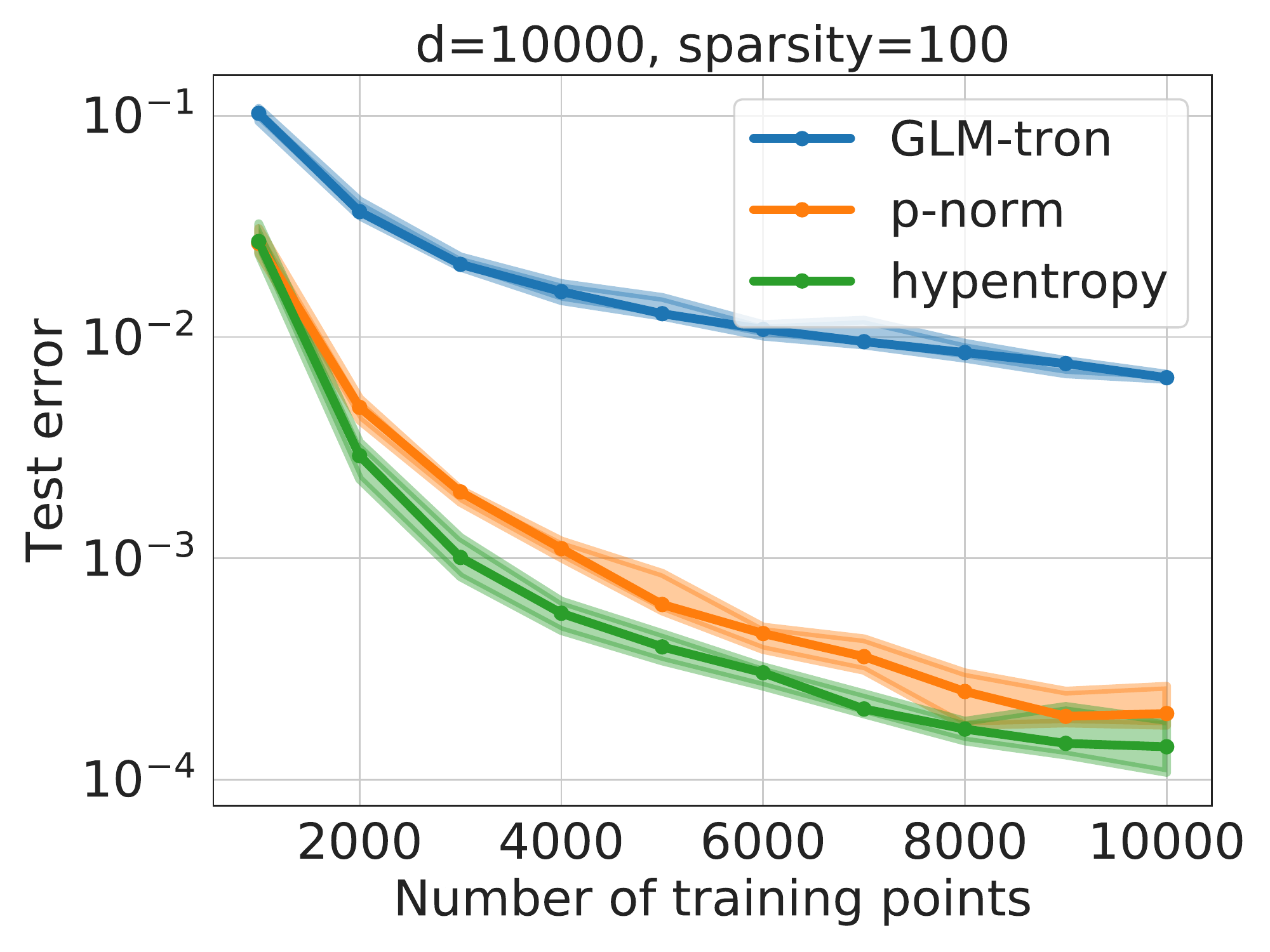}
         \put(5, 70){\textbf{b}}
         \end{overpic}
         \caption{\label{fig:sparse_vector_fixed_d}}
     \end{subfigure}
     \hfill
     \begin{subfigure}[b]{0.3275\textwidth}
         \centering
         \begin{overpic}[width=\textwidth]{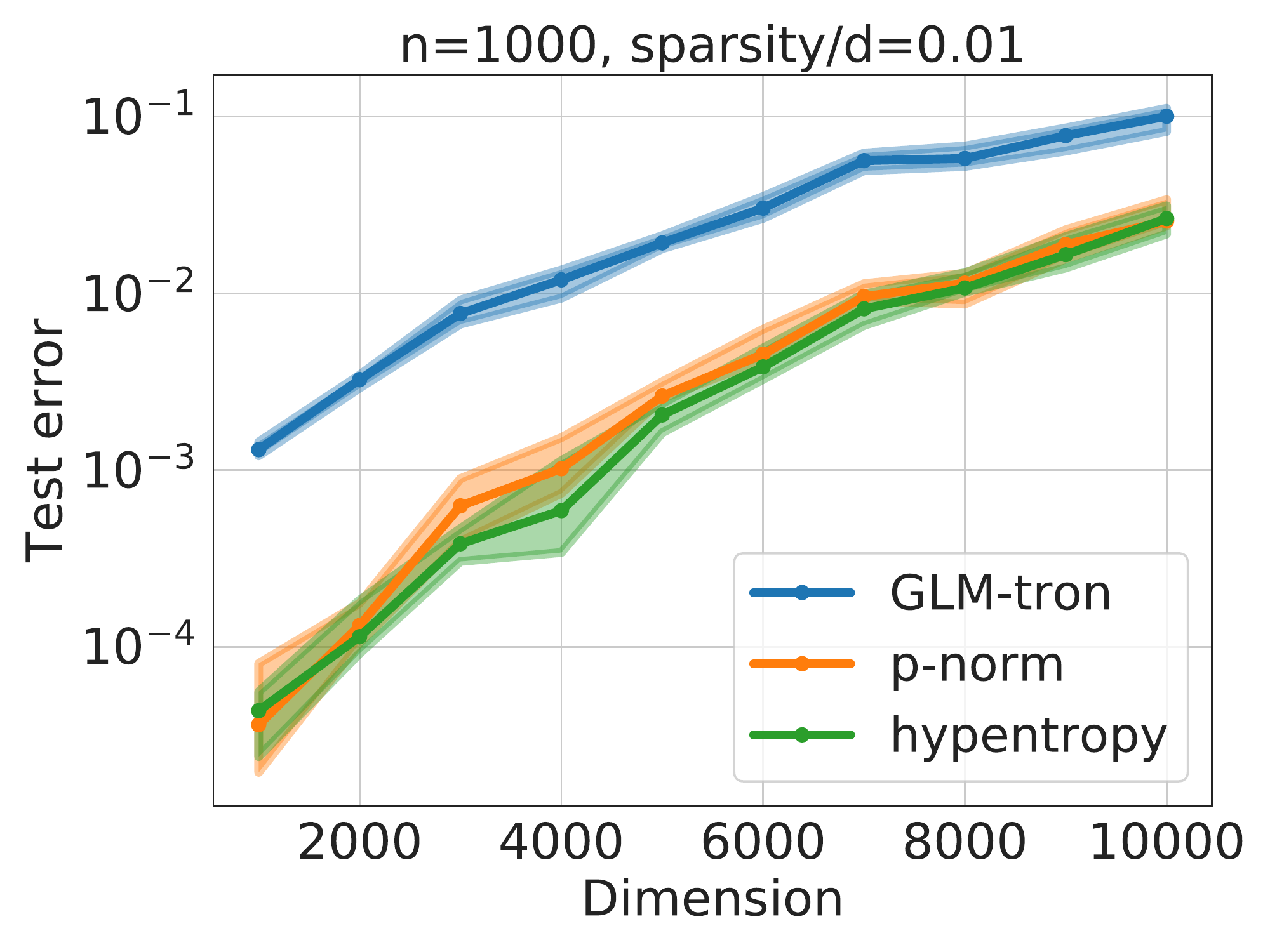}
         \put(5, 70){\textbf{c}}
         \end{overpic}
         \caption{\label{fig:sparse_vector_fixed_n}}
     \end{subfigure}
     \begin{subfigure}[b]{0.3275\textwidth}
         \centering
         \begin{overpic}[width=\textwidth]{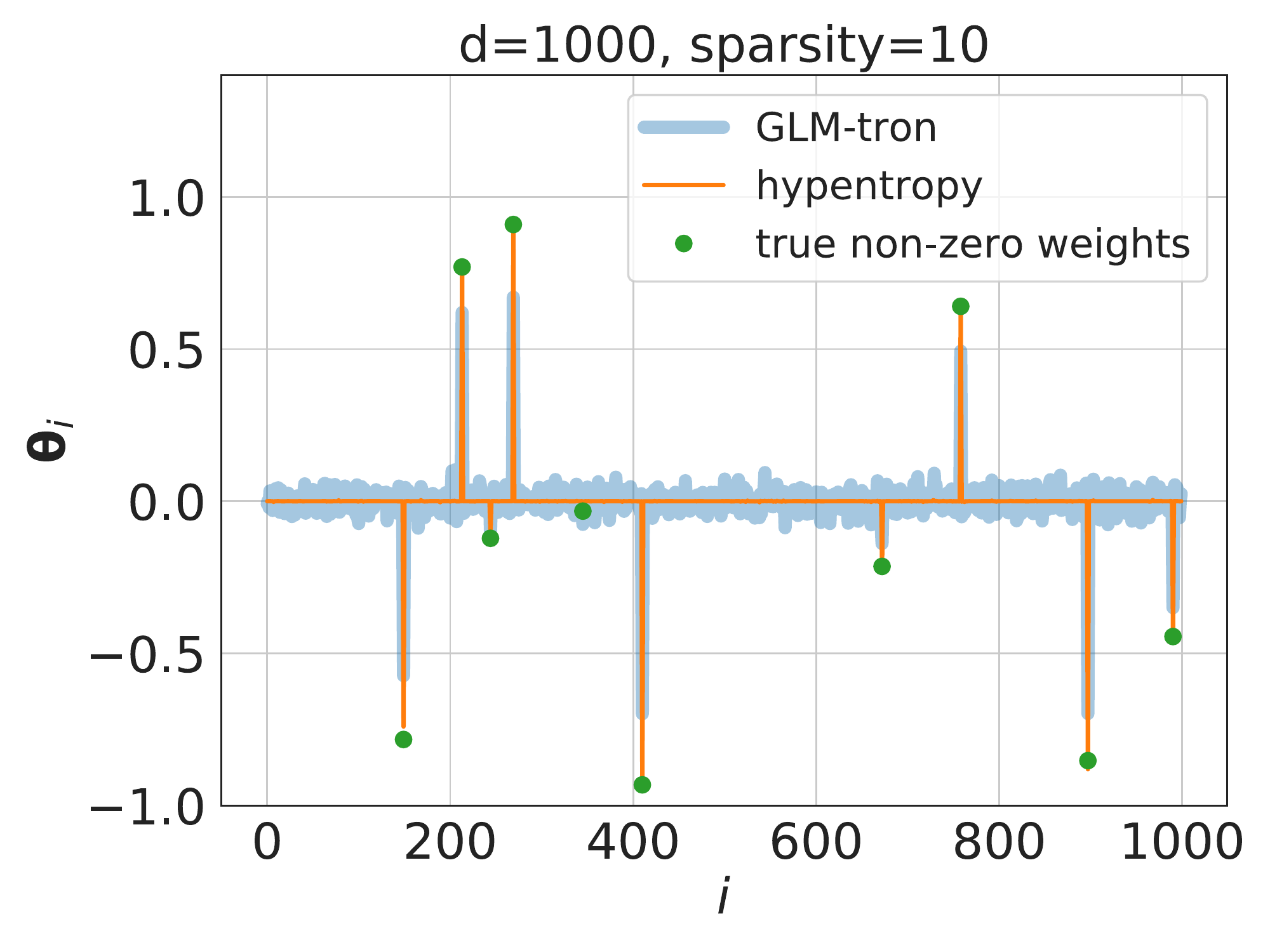}
         \put(5, 70){\textbf{d}}
         \end{overpic}
         \caption{\label{fig:sparse_vector_weights}}
     \end{subfigure}
     \hfill
     \begin{subfigure}[b]{0.3275\textwidth}
         \centering
         \begin{overpic}[width=\textwidth]{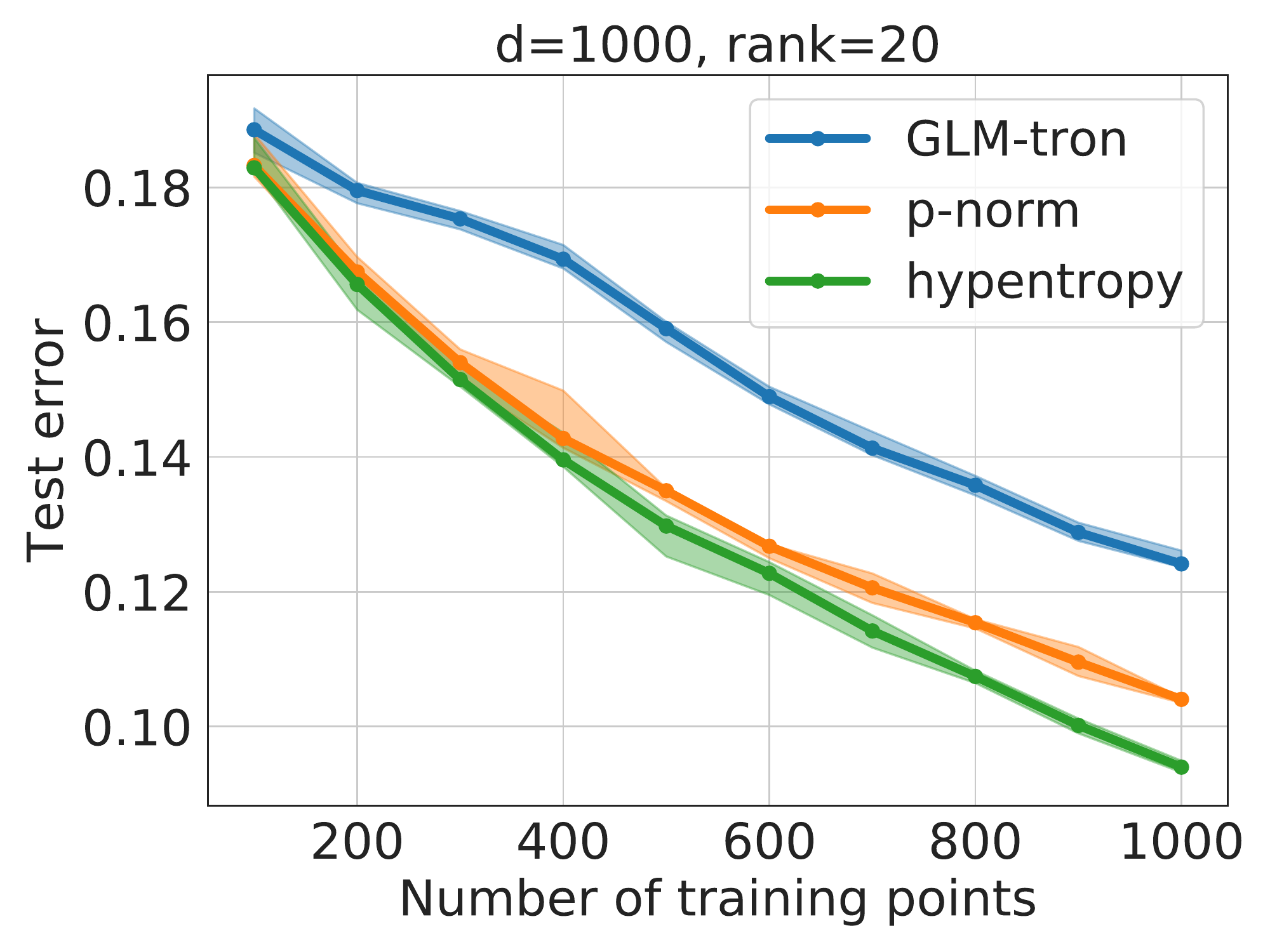}
         \put(5, 70){\textbf{e}}
         \end{overpic}
         \caption{\label{fig:low_rank_fixed_d}}
     \end{subfigure}
     \hfill
     \begin{subfigure}[b]{0.3275\textwidth}
         \centering
         \begin{overpic}[width=\textwidth]{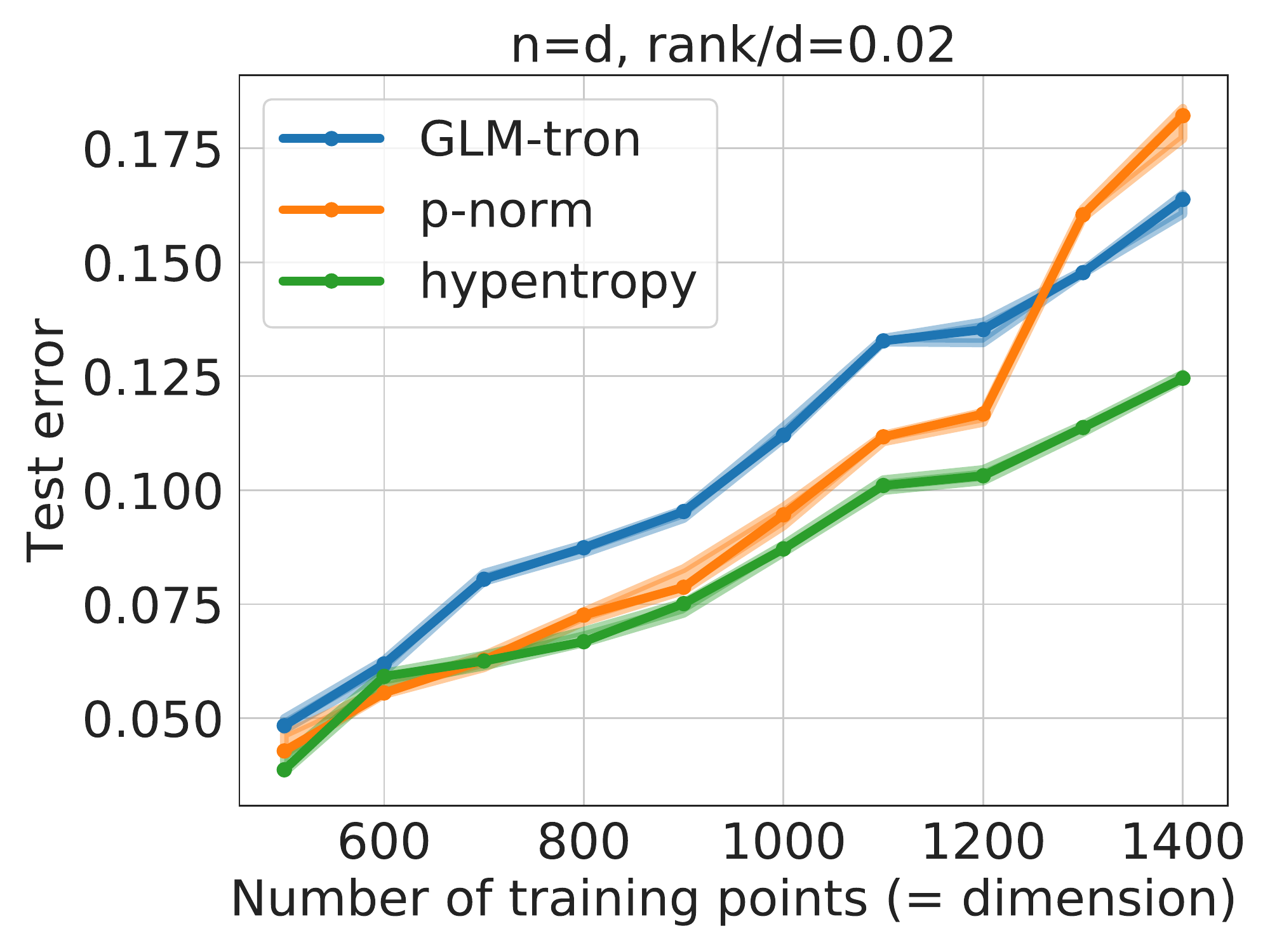}
         \put(5, 70){\textbf{f}}
         \end{overpic}
         \caption{\label{fig:low_rank_fixed_ratio}}
     \end{subfigure}
    \caption{(a) Risk curves. Solid indicates training error and dashed indicates holdout error. (b) Test error with $d$ and $s$ fixed as $n$ varies. (c) Test error with $n$ and $s/d$ fixed as $d$ varies. (d) Weights learned without explicit projection. (e) Test error with $d$ and $r$ fixed as $n$ varies. (f) Test error with fixed $r/d$ as $n=d$ varies.}
    \label{fig:main_fig}
\end{figure*}
We now illustrate our theoretical results in two concrete problem settings. We first study a scalar-valued output problem where the Bayes-optimal parameter vector is sparse. We then consider a vector-valued system identification problem where the Bayes-optimal parameter matrix is low-rank.

We compare three variants of the Reflectron with three different choices of potential. The first is the GLM-tron,
which is equivalent to the use of the Euclidean
potential $\psi_2(\bx) = \frac{1}{2} \norm{\bx}^2_2$.
The second is the $p$-norm algorithm~\citep{p_norm},
which uses the potential $\psi_p(\bx) = \frac{1}{2} \norm{\bx}_p^2$
for $p \in [1, \infty]$.
The third variant is the hypentropy 
algorithm~\citep{ghai20hypentropy}, which generalizes the setup considered in Corollary~\ref{cor:ent} beyond the probability simplex and uses the potential $\psi_\beta(\bx) = \sum_{i=1}^{d} (\bx_i \arcsinh(\bx_i/\beta) - \sqrt{\bx_i^2 + \beta^2})$ for $\beta \in (0, \infty)$. 
A complete description of the experimental setup is given in the appendix.

\subsection{Sparse vector GLMs}
\label{sec:exp:sparse_vector}
In this setting, the learner receives measurements 
$y_i = \sigma(\ip{\bthet}{\bx_i}) + w_i$ with $\bx_i \sim \mathrm{Unif}([-1, 1]^d)$, $w_i \sim \mathrm{Unif}([-\sigma_w, \sigma_w]^d)$, and where $\sigma(\cdot)$ is the sigmoid activation. $\bthet$ is taken to be an $s$-sparse vector with $s \ll d$, and we compare the GLM-tron with \textit{explicit} $\ell_1$ projection to the Reflectron with matching \textit{explicit} $\ell_1$ projection and the \textit{implicit} regularization due to the either the hypentropy or $p$-norm potentials.

The learner has knowledge that $\bthet$ is sparse, as well as access to the upper bounds $\norm{\bthet}_p \leq W_p$. In the experiments, we set $W_p = 2\norm{\bthet}_p$. Let $\mathbb{B}_p(r)$ denote the closed $\ell_p$-ball in $\R^d$ of radius $r$ centered at the origin. For the GLM-tron, we set $\calC = \mathbb{B}_1(W_1)$ and the projection onto $\calC$ is Euclidean. For the $p$-norm algorithm, we set $\calC = \mathbb{B}_p(W_p)$ and apply a Bregman projection onto $\calC$. For hypentropy, we set $\calC = \mathbb{B}_1(W_1)$ and again use the Bregman projection onto $\calC$. We compare each algorithm in two experimental regimes.

In the first regime, the ambient dimension $d=10000$ and sparsity $s=100$ are fixed, and we study the performance of each algorithm as a function of the number of data points $n$.
For each pair of $(n, \mathrm{alg})$, we run the full-batch pseudogradient algorithm for
$5000$ iterations
over a grid of hyperparameters, and we tune the step size $\lambda$ and the $p$ value for the $p$-norm algorithm (resp.\ $\beta$ for hypentropy). 
As suggested by Theorem~\ref{thm:stat_disc}, we use a holdout set of size $n_{\mathrm{hold}} = 500$ to select the parameters with lowest validation error over $5000$ iterations. 
Each algorithm is run for $5$ trials and the configuration that achieves the lowest median test error (over the $5$ trials) is shown in the figures. The size of the test set is $n_{\mathrm{test}} = 1000$, and the error bars correspond to the min/median/max over the $5$ trials.

Figure~\ref{fig:sparse_vector_training_curve} shows the training error
and holdout error of the best configuration for each algorithm with 
$n=1000$. Each algorithm overfits, and the holdout set is necessary to find the predictor with lowest generalization error.
Figure~\ref{fig:sparse_vector_fixed_d} shows the resulting test error of each algorithm. For each value of $n$, both the $p$-norm and hypentropy algorithms have lower test error when compared to the GLM-tron, in line with the generalization error predictions of Theorem~\ref{thm:stat_disc}.

In the second regime, the number of data points is fixed at $n=1000$ while the ambient dimension $d$ is varied for fixed $s/d = 0.01$. Figure~\ref{fig:sparse_vector_fixed_n} shows the test error for each algorithm, which increases with the ambient dimension $d$ in all cases. As in Figure~\ref{fig:sparse_vector_fixed_d}, for fixed $d$, both the $p$-norm and hypentropy algorithms have lower test error than the GLM-tron. Taken together, Figures~\ref{fig:main_fig}(a)-(c) validate the claims of Theorem~\ref{thm:stat_disc}.

To verify the predictions of Theorem~\ref{thm:imp_reg_disc}, we remove the explicit projection onto $\calC$ for both the GLM-tron and hypentropy and visualize the structure of the learned parameter vector in Figure~\ref{fig:sparse_vector_weights} ($d = 1000$, $s=10$, and $n=1000$). Figure~\ref{fig:sparse_vector_weights} shows that hypentropy recovers a much sparser solution than the GLM-tron despite the lack of an explicit projection onto the $\ell_1$-ball. In particular, $971$ coordinates have absolute value greater than $0.001$ for the parameters found by the GLM-tron, while there are only $56$ for hypentropy. Moreover, the qualitative structure of the parameter vector found by hypentropy is much closer to that of the true parameters, and quantitatively $\norm{\wh{\bthet}_{\mathrm{glm}} - \bthet}_1 = 24.609$ while $\norm{\wh{\bthet}_{\mathrm{hyp}} - \bthet}_1 = 0.421$. The parameters found by the $p$-norm algorithm have similar structure to those found by hypentropy and are omitted for visual clarity.
\subsection{Low rank system identification}
\label{sec:exp:low_rank}
We now consider a nonlinear system identification problem similar to~\citet{foster2020learning}, where the system dynamics are given by a vector-valued GLM and the parameters may be identified using a spectral variant of the Reflectron. In this setting, the learner observes $n$ trajectories $\{\bx^i_t\}_{t=0, i=1}^{T, n}$ from the discrete-time dynamical system $\bx_{t+1}^i = \rho \bx_t^i + \sigma(\bThet \bx_t^i) + w_t^i$. The system is initialized from $\bx_0^i \sim \mathrm{Unif}([-1,1]^d)$, the process noise is given by $w_t^i \sim \mathrm{Unif}([-\sigma_w, \sigma_w]^d)$, $\sigma(\cdot)$ is the element-wise sigmoid activation, and $\bThet$ is a $d \times d$ matrix with $r = \mathrm{rank}(\bThet) \ll d$. This model is motivated by applications in computational neuroscience, where the system state can be interpreted as a vector of firing rates in a recurrent neural network, and the learned parameters represent the synaptic weights~\citep{rutishauser15}. In the experiments, we fix $\rho = 0.9$, $T=5$, and $\sigma_w = 0.1$.

The generalization error for an estimate $\wh{\bThet}$ is
$\varepsilon(\wh{\bThet}) = \frac{1}{2T} \sum_{t=0}^{T-1} \Expsub{\bx_0}{ \norm{ \bx_{t+1} - \rho \bx_t - \sigma(\wh{\bThet} \bx_t)}^2  }$, which measures the ability of the learned connectivity to correctly predict a new random trajectory in a mean square sense. We search for $\wh{\bThet}$ by minimizing the empirical loss
$\wh{\varepsilon}(\wh{\bThet}) = \frac{1}{2nT} \sum_{i=1, t=0}^{n, T-1} \norm{ \bx_{t+1}^i - \rho \bx_t^i - \sigma(\wh{\bThet} \bx_t^i) }^2$.

Similar to Section~\ref{sec:exp:sparse_vector}, the learner has knowledge that $\bThet$ is low-rank, and we compare how the implicit bias of each method impacts its generalization performance. Let $\lambda(\mathbf{M})$ denote the vector of singular values of a matrix $\mathbf{M}$. For both the GLM-tron and hypentropy algorithms, we project onto 
$\calC = \left\{ \wh{\bThet} \in \R^{d \times d} : \norm{\lambda(\wh{\bThet})}_1 \leq 2 \norm{\lambda(\bThet)}_1 \right\}$. For the $p$-norm algorithm, we project onto $\calC = \left\{ \wh{\bThet} \in \R^{d \times d} : \norm{\lambda(\wh{\bThet})}_p \leq 2 \norm{\lambda(\bThet)}_p \right\}$. Hyperparameters are tuned just as in the sparse vector setting.

In Figure~\ref{fig:low_rank_fixed_d}, the ambient dimension
and rank are fixed to be $d=1000$ and $r=20$, and we study the impact of the number of trajectories $n$ on the generalization error. Both the $p$-norm and hypentropy algorithms achieve lower test error than the GLM-tron algorithm for fixed $n$. In Figure~\ref{fig:low_rank_fixed_ratio}, the ambient dimension $d$ is varied with fixed $r/d = 0.02$, and the number of trajectories is held equal to the dimension $n=d$. A heuristic explanation of this scaling is provided in Appendix~\ref{app:heur}. As the dimension increases, the gap between the test error for the GLM-tron and hypentropy increases. This trend also holds for the $p$-norm algorithm for $n < 1200$, which begins to become brittle to the choice of hyperparameter for large $n$. As in the previous section, these results validate the predictions of Theorem~\ref{thm:stat_disc}, now with vector-valued outputs.

\section{Conclusions and future directions}
In this work, we studied the effect of optimization geometry on the statistical performance of generalized linear models trained with the square loss.
We obtained strong non-asymptotic guarantees that identify how the interplay between optimization geometry and feature space geometry can reduce dimensionality dependence of both the training and generalization errors.
We demonstrated the validity of our theoretical results on sparse vector and low-rank matrix recovery problems, where it was shown that pairing the optimization geometry with the feature space geometry as suggested by our analysis consistently led to improved out-of-sample performance. 

Single neurons and GLMs highlight important aspects of more complex deep models, so that our work provides insight into the observations by~\citet{azizan_2} that the choice of mirror descent potential affects the generalization performance of deep networks. Moreover, our results provide a quantitative characterization of this effect.

There are a number of natural directions for future work. A first goal is to classify the typical feature space geometry for neural networks on standard datasets. Given such a classification, a well-tailored potential function could be developed to improve generalization performance. A second question is whether there are pseudogradient methods suitable for multilayer architectures, and if they could lead to improved performance or a simpler analysis relative to gradient descent.

\bibliography{refs}

\begin{thebibliography}{}

\bibitem[Amari, 1998]{amari}
Amari, S. (1998).
\newblock Natural gradient works efficiently in learning.
\newblock {\em Neural Computation}, 10(2):251--276.

\bibitem[Azizan et~al., 2019]{azizan_1}
Azizan, N., Lale, S., and Hassibi, B. (2019).
\newblock Stochastic mirror descent on overparameterized nonlinear models:
  Convergence, implicit regularization, and generalization.
\newblock {\em arXiv:1906.03830}.

\bibitem[Azizan et~al., 2021]{azizan_2}
Azizan, N., Lale, S., and Hassibi, B. (2021).
\newblock Stochastic mirror descent on overparameterized nonlinear models.
\newblock {\em IEEE Transactions on Neural Networks and Learning Systems}.

\bibitem[Barbier et~al., 2019]{barbier5451}
Barbier, J., Krzakala, F., Macris, N., Miolane, L., and Zdeborov{\'a}, L.
  (2019).
\newblock Optimal errors and phase transitions in high-dimensional generalized
  linear models.
\newblock {\em Proceedings of the National Academy of Sciences},
  116(12):5451--5460.

\bibitem[Bartlett et~al., 2020]{barlett}
Bartlett, P.~L., Long, P.~M., Lugosi, G., and Tsigler, A. (2020).
\newblock Benign overfitting in linear regression.
\newblock {\em Proceedings of the National Academy of Sciences},
  117(48):30063--30070.

\bibitem[Bartlett and Mendelson, 2002]{bartlett_gen}
Bartlett, P.~L. and Mendelson, S. (2002).
\newblock Rademacher and gaussian complexities: Risk bounds and structural
  results.
\newblock {\em Journal of Machine Learning Research}, 3:463--482.

\bibitem[Beck and Teboulle, 2003]{beck_teb}
Beck, A. and Teboulle, M. (2003).
\newblock Mirror descent and nonlinear projected subgradient methods for convex
  optimization.
\newblock {\em Operations Research Letters}, 31(3):167--175.

\bibitem[Belkin et~al., 2019]{belkin}
Belkin, M., Hsu, D., Ma, S., and Mandal, S. (2019).
\newblock Reconciling modern machine-learning practice and the classical
  bias{\textendash}variance trade-off.
\newblock {\em Proceedings of the National Academy of Sciences},
  116(32):15849--15854.

\bibitem[Beygelzimer et~al., 2011]{beygel}
Beygelzimer, A., Langford, J., Li, L., Reyzin, L., and Schapire, R. (2011).
\newblock Contextual bandit algorithms with supervised learning guarantees.
\newblock In {\em International Conference on Artificial Intelligence and
  Statistics}.

\bibitem[Boffi and Slotine, 2021]{boffi2019higherorder}
Boffi, N.~M. and Slotine, J.-J.~E. (2021).
\newblock Implicit regularization and momentum algorithms in nonlinearly
  parameterized adaptive control and prediction.
\newblock {\em Neural Computation}, 33(3):590--673.

\bibitem[Demirkaya et~al., 2020]{mse_1}
Demirkaya, A., Chen, J., and Oymak, S. (2020).
\newblock Exploring the role of loss functions in multiclass classification.
\newblock In {\em 2020 54th Annual Conference on Information Sciences and
  Systems (CISS)}.

\bibitem[Duchi et~al., 2011]{adagrad}
Duchi, J., Hazan, E., and Singer, Y. (2011).
\newblock Adaptive subgradient methods for online learning and stochastic
  optimization.
\newblock {\em Journal of Machine Learning Research}, 12(61):2121--2159.

\bibitem[Duchi et~al., 2008]{duchi08projection}
Duchi, J., Shalev-Shwartz, S., Singer, Y., and Chandar, T. (2008).
\newblock Efficient projections onto the $\ell_1$-ball for learning in high
  dimensions.
\newblock In {\em International Conference on Machine Learning}.

\bibitem[Foster et~al., 2020]{foster2020learning}
Foster, D.~J., Rakhlin, A., and Sarkar, T. (2020).
\newblock Learning nonlinear dynamical systems from a single trajectory.
\newblock In {\em Learning for Dynamics and Control}.

\bibitem[Frei et~al., 2020]{agnostic_neuron}
Frei, S., Cao, Y., and Gu, Q. (2020).
\newblock Agnostic learning of a single neuron with gradient descent.
\newblock In {\em Neural Information Processing Systems}.

\bibitem[Gentile, 2003]{p_norm}
Gentile, C. (2003).
\newblock The robustness of the $p$-norm algorithms.
\newblock {\em Machine Learning}, 53(3):265--299.

\bibitem[Ghai et~al., 2020]{ghai20hypentropy}
Ghai, U., Hazan, E., and Singer, Y. (2020).
\newblock Exponentiated gradient meets gradient descent.
\newblock In {\em 31st International Conference on Algorithmic Learning
  Theory}.

\bibitem[Goel and Klivans, 2019]{alphatron}
Goel, S. and Klivans, A. (2019).
\newblock Learning neural networks with two nonlinear layers in polynomial
  time.
\newblock In {\em Conference on Learning Theory}.

\bibitem[Gunasekar et~al., 2018a]{gunasekar_3}
Gunasekar, S., Lee, J.~D., Soudry, D., and Srebro, N. (2018a).
\newblock Characterizing implicit bias in terms of optimization geometry.
\newblock In {\em International Conference on Machine Learning}.

\bibitem[Gunasekar et~al., 2018b]{gunasekar_2}
Gunasekar, S., Lee, J.~D., Soudry, D., and Srebro, N. (2018b).
\newblock Implicit bias of gradient descent on linear convolutional networks.
\newblock In {\em Neural Information Processing Systems}.

\bibitem[Gunasekar et~al., 2017]{gunasekar_6}
Gunasekar, S., Woodworth, B., Bhojanapalli, S., Neyshabur, B., and Srebro, N.
  (2017).
\newblock Implicit regularization in matrix factorization.
\newblock In {\em Neural Information Processing Systems}.

\bibitem[Gunasekar et~al., 2021]{gunasekar2020mirrorless}
Gunasekar, S., Woodworth, B., and Srebro, N. (2021).
\newblock Mirrorless mirror descent: A natural derivation of mirror descent.
\newblock In {\em International Conference on Artificial Intelligence and
  Statistics}.

\bibitem[Han et~al., 2022]{mse_3}
Han, X.~Y., Papyan, V., and Donoho, D.~L. (2022).
\newblock Neural collapse under mse loss: Proximity to and dynamics on the
  central path.
\newblock In {\em International Conference on Learning Representations}.

\bibitem[Hastie et~al., 2019]{hastie2019surprises}
Hastie, T., Montanari, A., Rosset, S., and Tibshirani, R.~J. (2019).
\newblock Surprises in high-dimensional ridgeless least squares interpolation.
\newblock {\em arXiv:1903.08560}.

\bibitem[Hui and Belkin, 2021]{mse_2}
Hui, L. and Belkin, M. (2021).
\newblock Evaluation of neural architectures trained with square loss vs
  cross-entropy in classification tasks.
\newblock In {\em International Conference on Learning Representations}.

\bibitem[Ji and Telgarsky, 2020]{telgarsky}
Ji, Z. and Telgarsky, M. (2020).
\newblock Polylogarithmic width suffices for gradient descent to achieve
  arbitrarily small test error with shallow relu networks.
\newblock In {\em International Conference on Learning Representations}.

\bibitem[Kakade et~al., 2011]{glmtron}
Kakade, S.~M., Kalai, A.~T., Kanade, V., and Shamir, O. (2011).
\newblock Efficient learning of generalized linear and single index models with
  isotonic regression.
\newblock In {\em Neural Information Processing Systems}.

\bibitem[Kakade et~al., 2009]{kakade2009_linear}
Kakade, S.~M., Sridharan, K., and Tewari, A. (2009).
\newblock On the complexity of linear prediction: Risk bounds, margin bounds,
  and regularization.
\newblock In {\em Neural Information Processing Systems}.

\bibitem[Kingma and Ba, 2015]{adam}
Kingma, D.~P. and Ba, J.~L. (2015).
\newblock Adam: A method for stochastic optimization.
\newblock In {\em International Conference on Learning Representations}.

\bibitem[Krichene et~al., 2015]{krich}
Krichene, W., Bayen, A., and Bartlett, P.~L. (2015).
\newblock Accelerated mirror descent in continuous and discrete time.
\newblock In {\em Neural Information Processing Systems}.

\bibitem[Maillard et~al., 2021]{maillard2021landscape}
Maillard, A., Arous, G.~B., and Biroli, G. (2021).
\newblock Landscape complexity for the empirical risk of generalized linear
  models.
\newblock {\em arxiv:1912.02143}.

\bibitem[McCullagh and Nelder, 1989]{GLMs}
McCullagh, P. and Nelder, J.~A. (1989).
\newblock {\em Generalized Linear Models, Second Edition}.
\newblock CRC Press.

\bibitem[Muthukumar et~al., 2019]{sahai}
Muthukumar, V., Vodrahalli, K., and Sahai, A. (2019).
\newblock Harmless interpolation of noisy data in regression.
\newblock In {\em 2019 IEEE International Symposium on Information Theory
  (ISIT)}.

\bibitem[Nacson et~al., 2018]{gunasekar_4}
Nacson, M.~S., Lee, J.~D., Gunasekar, S., Savarese, P. H.~P., Srebro, N., and
  Soudry, D. (2018).
\newblock Convergence of gradient descent on separable data.
\newblock In {\em International Conference on Artificial Intelligence and
  Statistics}.

\bibitem[Nemirovski and Yudin, 1983]{nem_yud}
Nemirovski, A. and Yudin, D. (1983).
\newblock {\em Problem Complexity and Method Efficiency in Optimization}.
\newblock Wiley.

\bibitem[Nock and Menon, 2020]{bregmantron}
Nock, R. and Menon, A.~K. (2020).
\newblock Supervised learning: No loss no cry.
\newblock In {\em International Conference on Machine Learning}.

\bibitem[Rutishauser et~al., 2015]{rutishauser15}
Rutishauser, U., Slotine, J.-J.~E., and Douglas, R. (2015).
\newblock Computation in dynamically bounded asymmetric systems.
\newblock {\em PLOS Computational Biology}, 11(1):1--22.

\bibitem[Slotine and Li, 1991]{slot_li_book}
Slotine, J.-J.~E. and Li, W. (1991).
\newblock {\em Applied Nonlinear Control}.
\newblock Prentice Hall.

\bibitem[Soudry et~al., 2018]{gunasekar_1}
Soudry, D., Hoffer, E., Nacson, M.~S., Gunasekar, S., and Srebro, N. (2018).
\newblock The implicit bias of gradient descent on separable data.
\newblock {\em Journal of Machine Learning Research}, 19(1):2822--2878.

\bibitem[Tyukin et~al., 2007]{tyukin_paper}
Tyukin, I.~Y., Prokhorov, D.~V., and {van Leeuwen}, C. (2007).
\newblock Adaptation and parameter estimation in systems with unstable target
  dynamics and nonlinear parametrization.
\newblock {\em IEEE Transactions on Automatic Control}, 52(9):1543--1559.

\bibitem[Woodworth et~al., 2020]{gunasekar_5}
Woodworth, B., Gunasekar, S., Lee, J.~D., Moroshko, E., Savarese, P., Golan,
  I., Soudry, D., and Srebro, N. (2020).
\newblock Kernel and rich regimes in overparametrized models.
\newblock In {\em Conference on Learning Theory}.

\bibitem[Zhang et~al., 2017]{und_dl}
Zhang, C., Bengio, S., Hardt, M., Recht, B., and Vinyals, O. (2017).
\newblock Understanding deep learning requires rethinking generalization.
\newblock In {\em International Conference on Learning Representations}.

\end{thebibliography}
\clearpage
\onecolumn
\appendix

\section{Details on experimental setup}
\label{sec:app:experiments}

\subsection{Projections}

\paragraph{Euclidean projection onto $\ell_1$-ball.}

For the GLM-tron algorithm, we use the following projection step
after every iteration:
\begin{align*}
    \argmin_{\bx : \norm{\bx}_1 \leq R} \norm{\bx - \by} \:. 
\end{align*}
The algorithm used to compute this is described in Figure 1 of \citet{duchi08projection}.

\paragraph{$\ell_p$ projection onto $\ell_p$-ball.}

For the $p$-norm algorithm, we use the following Bregman projection:
\begin{align*}
    \argmin_{\bx : \norm{\bx}_p \leq R} d_{\psi_p}(\bx, \by) \:.
\end{align*}
The solution is $\by$ for $\norm{\by}_p \leq R$ and
$\frac{\by}{\norm{\by}_p} R$ otherwise.
Note that we did not implement
the Bregman projection
\begin{align*}
    \argmin_{\bx : \norm{\bx}_1 \leq R} d_{\psi_p}(\bx, \by) \:.
\end{align*}
since we are not aware of an efficient  
(nearly linear time in dimension) algorithm for doing so.

\paragraph{Hypentropy Bregman projection onto $\ell_1$-ball.}

For the hypentropy algorithm, we use the following Bregman projection:
\begin{align*}
    \argmin_{\bx : \norm{\bx}_1 \leq R} d_{\psi_\beta}(\bx, \by) \:.
\end{align*}
To implement this projection,
we use the following bisection search algorithm communicated to us by Udaya Ghai, which was also used in~\citet{ghai20hypentropy}. Define the shrinkage function $s_\theta^\beta : \R^d \rightarrow \R^d$ as:
\begin{align*}
    s_\theta^\beta(\bx) = \mathrm{sign}(\bx) \max\left\{ \frac{\theta( \sqrt{\bx^2 + \beta^2} + \abs{\bx} )}{2} - \frac{ \sqrt{\bx^2 + \beta^2} - \abs{\bx} }{2\theta} , 0 \right\} \:,
\end{align*}
where the operations above are all elementwise.
One can show that there must exist a $\theta \in (0, 1]$ such that:
\begin{align*}
    s_\theta^\beta(\by) = \argmin_{\bx : \norm{\bx}_1 \leq R} d_{\psi_\beta}(\bx, \by) \:.
\end{align*}
From the above considerations, we can use bisection to search for a $\theta \in (0, 1]$ such that $$\norm{s_\theta^\beta(\by)}_1 = R.$$

\subsection{Hyperparameter values}

In this section, we list the hyperparameters that were gridded over for each figure.

\paragraph{Parameters for Figure~\ref{fig:sparse_vector_training_curve}}

\begin{center}
\begin{tabular}{ |c|c| } 
 \hline
Parameter & Values \\
\hline
 $\lambda$ & $\{1.0, 0.1, 0.01, 0.001\}$ \\
 \hline
 $\beta$ & $\{1.0, 10^{-1}, 10^{-2}, 10^{-3}, 10^{-4}\}$ \\
 \hline
\end{tabular}
\end{center}

\paragraph{Parameters for Figure~\ref{fig:sparse_vector_fixed_d}}

\begin{center}
\begin{tabular}{ |c|c| } 
 \hline
Parameter & Values \\
\hline
 $\lambda$ & $\{1.0, 0.5, 0.1, 0.05, 0.01, 0.005, 0.001\}$ \\
 \hline
 $\beta$ & $\{1.0, 10^{-1}, 10^{-2}, 10^{-3}, 10^{-4}\}$ \\
 \hline
 $p$ & $\{1.1, 1.2, 1.3, 1.4, 1.5\}$ \\
 \hline
\end{tabular}
\end{center}

\paragraph{Parameters for Figure~\ref{fig:sparse_vector_fixed_n}}
Same parameters as Figure~\ref{fig:sparse_vector_fixed_d}.

\paragraph{Parameters for Figure~\ref{fig:low_rank_fixed_d}}
\begin{center}
\begin{tabular}{ |c|c| } 
 \hline
Parameter & Values \\
\hline
 $\lambda$ & $\{0.1, 0.05, 0.01, 0.005, 0.001, 0.0005, 0.0001\}$ \\
 \hline
 $\beta$ & $\{1.0, 10^{-1}, 10^{-2}, 10^{-3}, 10^{-4}\}$ \\
 \hline
 $p$ & $\{1.1, 1.2, 1.3, 1.4, 1.5\}$ \\
 \hline
\end{tabular}
\end{center}

\paragraph{Parameters for Figure~\ref{fig:low_rank_fixed_ratio}}
Same parameters as Figure~\ref{fig:low_rank_fixed_d}.

\subsection{Heuristic argument for keeping $n=d$ in Figure~\ref{fig:low_rank_fixed_ratio}}
\label{app:heur}

Recall that the empirical loss is
\begin{align*}
    \wh{\varepsilon}(\wh{\bThet}) = \frac{1}{2nT} \sum_{t=0}^{T-1}\sum_{i=1}^{n} \norm{ \bx_{t+1}^i - \rho \bx_t^i - \sigma(\wh{\bThet} \bx_t^i) }^2 \:,
\end{align*}
while the pseudogradient $g(\wh{\bThet})$ is
\begin{align*}
    g(\wh{\bThet}) = \frac{1}{nT} \sum_{t=0}^{T-1}\sum_{i=1}^{n} (\sigma( \wh{\bThet} \bx_t^i  ) - \bx_{t+1}^i + \rho \bx_t^i  ) (\bx_t^i)^\T \:.
\end{align*}
A key term in the statistical bound for the Reflectron is the dual norm of the pseudogradient $g(\wh{\bThet})$. For the GLM-tron, this is the Frobenius norm $\norm{g(\wh{\bThet})}_F$, while for hypentropy this is the operator norm $\norm{g(\wh{\bThet})}$. The $p$-norm case is similiar to hypentropy for the purpose of this discussion, and we omit the details.

Estimating these norms in general is non-trivial due to both the nonlinearity of the
activation function and the time-dependence of the trajectory. Instead, we consider a simpler problem based on random matrices to heuristically understand relevant scalings with $n$ and $d$. In particular, we set $T=1$ and consider the $d \times d$ matrix $\bH$ defined as:
\begin{align*}
    \bH = \frac{1}{n} \sum_{i=1}^{n} \bx_i \bx_i^\T \:, \:\: \bx_i \sim N(\mathbf{0}, \bI) \:.
\end{align*}
Above, each of the $\bx_i$'s are independent. Let $\bX \in \R^{n\times d}$ be a matrix with $i$-th row given by $\bx_i$; then we have $\bH = \frac{1}{n} \bX^\T \bX$.
We first estimate a bound on $\E \norm{\bH}_F$ via Jensen's inequality
\begin{align*}
    \E \norm{\bH}_F &\leq \sqrt{ \E \norm{\bH}_F^2 } = \sqrt{\frac{1}{n^2} \Tr\left(\E \sum_{i,j} \bx_i \bx_i^\T \bx_j \bx_j^\T\right)} = \sqrt{\frac{1}{n^2} \sum_{i,j} \E \ip{\bx_i}{\bx_j}^2 } \\
    &= \sqrt{\frac{1}{n^2} \left(n \E\norm{\bx_1}^4 + n(n-1) \E \ip{\bx_1}{\bx_2}^2\right)} \\
    &= \sqrt{\frac{1}{n^2} \left(n(d^2 + 2d) + n(n-1) d\right)} \\
    &= \sqrt{ \frac{d^2}{n} + \left(1 + \frac{1}{n}\right)d} \\
    &\asymp \sqrt{d} + \frac{d}{\sqrt{n}} \:.
\end{align*}
On the other hand, $\norm{\bX} \lesssim \sqrt{n} + \sqrt{d}$ w.h.p.
Therefore,
\begin{align*}
  \norm{\bH} = \frac{1}{n} \norm{\bX}^2 \lesssim \frac{1}{n} (\sqrt{n} + \sqrt{d})^2 \asymp 1 + \frac{d}{n} \:.
\end{align*}
Now consider setting $n \asymp d$. Then as $n \to \infty$, we have that $\norm{\bH} \lesssim 1$ while $\norm{\bH}_F$ tends to $\infty$.

\section{Preliminary results}
In this section, we present some results required for our proofs.

The following theorem gives a bound on the Rademacher complexity of a linear predictor, where the weights in the linear function class admit a bound in terms of a strongly convex potential function.
\begin{thm}[\citet{kakade2009_linear}]
\label{thm:kakade_linear}
Let $S$ be a closed convex set and let $\calX = \{\bx : \norm{\bx}_* \leq C\}$. Let $\psi : S \rightarrow \R$ be $\sigma$-strongly convex with respect to $\norm{\cdot}$ such that $\inf_{\bw \in S} \psi(\bw) = 0$. Define $\calW = \{\bw \in S : \psi(\bw) \leq W^2\}$, and let $\calF_{\calW} = \{\bx \mapsto \ip{\bw}{\bx} : \bw \in \calW\}$ Then,
\begin{equation*}
    \calR_n(\calF_\calW) \leq CW\sqrt{\frac{2}{\sigma n}}
\end{equation*}
\end{thm}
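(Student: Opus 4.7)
The plan is to follow the standard duality argument for bounding the Rademacher complexity of a linear class parameterized by a strongly convex potential. First I would unwind the definition: letting $\bz = \frac{1}{n}\sum_{i=1}^n \epsilon_i \bx_i$, the quantity of interest becomes $\E_\epsilon \sup_{\bw\in\calW}\ip{\bw}{\bz}$. The key analytic tool is the Fenchel--Young inequality applied to the conjugate pair $(\psi,\psi^*)$: for any $\lambda > 0$, $\ip{\bw}{\bz} \leq \frac{\psi(\bw)}{\lambda} + \frac{\psi^*(\lambda\bz)}{\lambda}$. Taking the supremum over $\bw\in\calW$ and using $\psi(\bw)\leq W^2$ gives the pointwise bound $\sup_{\bw\in\calW}\ip{\bw}{\bz} \leq \frac{W^2}{\lambda} + \frac{\psi^*(\lambda\bz)}{\lambda}$, so all that remains is to control $\E_\epsilon \psi^*\bigl(\tfrac{\lambda}{n}\sum_i \epsilon_i \bx_i\bigr)$.

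The main lemma I would prove is that $\sigma$-strong convexity of $\psi$ with respect to $\norm{\cdot}$ implies $(1/\sigma)$-smoothness of $\psi^*$ with respect to $\norm{\cdot}_*$ (standard Fenchel duality), together with $\psi^*(0) = -\inf_\bw \psi(\bw) = 0$ from the normalization in Assumption~\ref{assmp:sc}. Using these, I would show by induction on $k$ that $\E_{\epsilon_1,\ldots,\epsilon_k}\psi^*\bigl(\sum_{i=1}^k \epsilon_i \bv_i\bigr) \leq \frac{1}{2\sigma}\sum_{i=1}^k \norm{\bv_i}_*^2$ for any vectors $\bv_i$. The inductive step applies the smoothness inequality $\psi^*(\bu + \bv) \leq \psi^*(\bu) + \ip{\nabla\psi^*(\bu)}{\bv} + \frac{1}{2\sigma}\norm{\bv}_*^2$ at $\bu = \sum_{i<k}\epsilon_i\bv_i$ and $\bv = \epsilon_k \bv_k$, uses $\norm{\epsilon_k \bv_k}_* = \norm{\bv_k}_*$, and averages the linear cross term to zero via $\E[\epsilon_k]=0$. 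Instantiating with $\bv_i = (\lambda/n)\bx_i$ and $\norm{\bx_i}_*\leq C$ yields $\E_\epsilon \psi^*\bigl(\tfrac{\lambda}{n}\sum_i \epsilon_i \bx_i\bigr) \leq \frac{\lambda^2 C^2}{2\sigma n}$.

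Combining the two bounds gives $\calR_n(\calF_\calW) \leq \frac{W^2}{\lambda} + \frac{\lambda C^2}{2\sigma n}$ for every $\lambda > 0$. A one-variable minimization, attained at $\lambda^\star = W\sqrt{2\sigma n}/C$, produces the desired $CW\sqrt{2/(\sigma n)}$. The only nontrivial step is the inductive lemma on $\psi^*$; the crucial observation that makes it go through is that the linear term in the smoothness expansion integrates to zero under the Rademacher measure, so only the quadratic remainder accumulates. Everything else — dualization, passing the supremum under the expectation, and the scalar optimization in $\lambda$ — is bookkeeping.
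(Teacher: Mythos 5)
Your proposal is correct and is precisely the standard Fenchel-duality argument of \citet{kakade2009_linear}, which the paper imports by citation rather than reproving: the pointwise Fenchel--Young bound, the inductive smoothness-of-$\psi^*$ lemma with the linear term averaging to zero under the Rademacher signs, and the final optimization over $\lambda$ all check out, including the optimal $\lambda^\star = W\sqrt{2\sigma n}/C$ yielding $CW\sqrt{2/(\sigma n)}$. No gaps.
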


The following theorem is useful for bounding the Rademacher complexity of the generalized linear models considered in this work, as well as for bounding the generalization error in terms of the Rademacher complexity of a function class.
\begin{thm}[\citet{bartlett_gen}]
\label{thm:rad_glm}
Let $\phi : \mathbb{R} \rightarrow \mathbb{R}$ be $L_\phi$-Lipschitz, and assume that $\phi(0) = 0$. Let $\mathcal{F}$ be a class of functions. Then $\mathcal{R}_n(\phi\circ\mathcal{F}) \leq 2 L_\phi \mathcal{R}_n(\mathcal{F})$.
\end{thm}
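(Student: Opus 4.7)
This is the classical Rademacher contraction inequality; the plan is to peel off the nonlinearity $\phi$ one coordinate at a time. Starting from
\[
\calR_n(\phi\circ\calF) = \Expsub{\bx_i,\epsilon_i}{\sup_{h\in\calF}\tfrac{1}{n}\sum_{i=1}^n \epsilon_i\,\phi(h(\bx_i))},
\]
I would condition on all Rademacher variables except $\epsilon_1$ and on the data. Writing $B(h) = \sum_{i \geq 2}\epsilon_i\, \phi(h(\bx_i))$, the inner expectation reduces via the standard ``sup--sup'' identity to
\[
\E_{\epsilon_1}\sup_h\bigl[\epsilon_1\phi(h(\bx_1)) + B(h)\bigr] = \tfrac12\sup_{h,h'}\bigl[\phi(h(\bx_1)) - \phi(h'(\bx_1)) + B(h) + B(h')\bigr].
\]

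The key step is invoking the Lipschitz bound $|\phi(a) - \phi(b)| \leq L_\phi|a - b|$. Because the tail $B(h) + B(h')$ is symmetric under the swap $h \leftrightarrow h'$, the supremum over unconstrained pairs equals the supremum restricted to pairs with $h(\bx_1) \geq h'(\bx_1)$, which permits replacing the absolute value by a signed difference and obtaining $\tfrac12\sup_{h,h'}\bigl[L_\phi(h(\bx_1) - h'(\bx_1)) + B(h) + B(h')\bigr]$. Running the sup--sup identity in reverse recasts this as $\E_{\epsilon_1}\sup_h[L_\phi\epsilon_1 h(\bx_1) + B(h)]$; the nonlinearity has been removed from the first coordinate at the cost of a multiplicative $L_\phi$. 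Iterating for $i = 2, \ldots, n$ peels $\phi$ off every term, yielding a bound of the form $L_\phi \calR_n(\calF)$. The hypothesis $\phi(0) = 0$ is essential here: it ensures $|\phi(t)| \leq L_\phi|t|$ and prevents each peeling step from accruing an uncontrolled additive offset.

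The stated bound carries an extra factor of $2$; in the original Bartlett--Mendelson argument this arises from routing through the absolute-value variant $\widetilde\calR_n(\calF) = \E\sup_h|\tfrac{1}{n}\sum_i \epsilon_i h(\bx_i)|$, for which the contraction holds with constant $L_\phi$, and paying a factor of $2$ to translate back to the signed complexity $\calR_n(\calF)$ used in the paper. The main technical delicacy is the symmetry step that removes the absolute value: one must verify that the symmetric appearance of $B(h) + B(h')$ in $(h,h')$ really does allow the restriction to ordered pairs, and that the peeling remains monotone when iterated across the conditional expectations in any order. Once these bookkeeping points are handled, the rest of the argument is algebraic.
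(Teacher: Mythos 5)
The paper does not prove this statement; it is imported verbatim from Bartlett and Mendelson as a preliminary result, so there is no internal proof to compare against. Your sketch is the standard Ledoux--Talagrand contraction argument, and its skeleton is correct: the sup--sup identity for $\E_{\epsilon_1}$, the symmetrization that lets you replace the antisymmetric term $\phi(h(\bx_1))-\phi(h'(\bx_1))$ by its absolute value and then by $L_\phi\left(h(\bx_1)-h'(\bx_1)\right)$ on ordered pairs, reversal of the identity, and coordinate-by-coordinate peeling. Two remarks. First, your claim that $\phi(0)=0$ is \emph{essential} to the peeling is not quite right for the definition of $\calR_n$ used in this paper, which has no absolute value around the sum: the peeling step you describe never uses $\phi(0)$, and any constant offset $\phi(0)$ contributes $\E[\epsilon_i\phi(0)]=0$ after taking expectations. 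As you correctly note later, both the factor $2$ and the role of $\phi(0)=0$ arise only when routing through the absolute-value variant of Rademacher complexity used by Bartlett--Mendelson; with the signed definition in this paper the sharper bound $\calR_n(\phi\circ\calF)\leq L_\phi\calR_n(\calF)$ holds, and the stated $2L_\phi$ is simply a (valid) relaxation of it. Second, the proposal is a sketch rather than a complete proof --- the iteration over $i=2,\dots,n$ under the nested conditional expectations, and the justification that the suprema involved are handled correctly (e.g.\ via $\varepsilon$-approximate maximizers when the sup is not attained), are asserted rather than carried out --- but these are routine and the argument as outlined goes through.
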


The following theorem allows for a bound on the generalization error if bounds on the empirical risk and the Rademacher complexity of the function class are known.
\begin{thm}[\citet{bartlett_gen}]
\label{thm:ull_loss}
Let $\{\bx_i, y_i\}_{i=1}^n$ be an i.i.d. sample from a distribution $P$ over $\mathcal{X}\times\mathcal{Y}$ and let $\mathcal{L} : \mathcal{Y}' \times \mathcal{Y} \rightarrow \mathbb{R}$ be an $L$-Lipschitz and $b$-bounded loss function in its first argument. Let $\mathcal{F} = \{f \mid f : \mathcal{X} \rightarrow \mathcal{Y}'\}$ be a class of functions. For any positive integer $n \geq 0$ and any scalar $\delta \geq 0$,
\begin{equation*}
    \sup_{f \in \mathcal{F}}\left|\frac{1}{n}\sum_{i=1}^n\mathcal{L}(f(\bx_i), y_i) - \mathbb{E}_{(\bx, y)\sim P}\left[\mathcal{L}(f(\bx), y)\right]\right| \leq 4 L \mathcal{R}_n(\mathcal{F}) + 2 b \sqrt{\frac{2}{n}\log\left(\frac{1}{\delta}\right)}
\end{equation*}
with probability at least $1 - \delta$ over the draws of the $\{\bx_i, y_i\}$.
\end{thm}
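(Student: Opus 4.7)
The plan is to follow the classical three-step template for uniform concentration of empirical processes: bounded-differences concentration, symmetrization, and contraction.

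First, I would define the data-dependent quantity
\[
\Phi(S) \;=\; \sup_{f \in \mathcal{F}} \left| \frac{1}{n} \sum_{i=1}^{n} \mathcal{L}(f(\bx_i), y_i) \;-\; \mathbb{E}_{(\bx,y)\sim P}\!\left[\mathcal{L}(f(\bx), y)\right] \right|,
\]
where $S = \{(\bx_i, y_i)\}_{i=1}^n$. Because $\mathcal{L}$ is $b$-bounded, replacing a single sample $(\bx_i, y_i)$ by an independent copy changes each empirical mean inside the supremum by at most $2b/n$ and does not affect the population expectation; hence $|\Phi(S) - \Phi(S')| \le 2b/n$. McDiarmid's bounded-differences inequality then yields, with probability at least $1-\delta$,
\[
\Phi(S) \;\le\; \mathbb{E}[\Phi(S)] \;+\; 2b\sqrt{\frac{2\log(1/\delta)}{n}}.
\]
The constant $2b$ (rather than the tighter $b$) comes from absorbing a factor from the one-sided to two-sided deviation.

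Second, I would bound $\mathbb{E}[\Phi(S)]$ via the standard symmetrization trick. Introducing an independent ghost sample $S' = \{(\bx_i', y_i')\}$ and Rademacher variables $\epsilon_i \in \{\pm 1\}$, one rewrites the expectation using $\mathbb{E}[\mathcal{L}(f(\bx),y)] = \mathbb{E}_{S'}[\tfrac{1}{n}\sum_i \mathcal{L}(f(\bx_i'), y_i')]$, pushes the sup inside the $S'$-expectation via Jensen, and introduces the Rademacher variables via the symmetry of the pair $(\mathcal{L}(f(\bx_i),y_i), \mathcal{L}(f(\bx_i'),y_i'))$. This yields
\[
\mathbb{E}[\Phi(S)] \;\le\; 2\,\mathbb{E}\!\left[\sup_{f\in\mathcal{F}} \frac{1}{n}\sum_{i=1}^{n} \epsilon_i\,\mathcal{L}(f(\bx_i), y_i)\right],
\]
where the absolute value is handled by noting that the sup is preserved by the $\epsilon_i \mapsto -\epsilon_i$ symmetry.

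Third, I would apply the Ledoux–Talagrand contraction principle, conditionally on $\{(\bx_i, y_i)\}$: since $\mathcal{L}(\cdot, y)$ is $L$-Lipschitz in its first argument uniformly in $y$, the Rademacher average over the composed class satisfies
\[
\mathbb{E}\!\left[\sup_{f\in\mathcal{F}} \frac{1}{n}\sum_{i=1}^{n} \epsilon_i\,\mathcal{L}(f(\bx_i), y_i)\right] \;\le\; L \cdot \mathcal{R}_n(\mathcal{F}).
\]
Chaining this with the symmetrization bound gives $\mathbb{E}[\Phi(S)] \le 2L\,\mathcal{R}_n(\mathcal{F})$, and combining with the McDiarmid bound yields the stated inequality up to a factor of two (i.e.\ $4L\,\mathcal{R}_n(\mathcal{F})$) to cover the two-sided version of contraction that applies once the absolute value is present in $\Phi$.

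The main obstacle is the careful bookkeeping of constants: contraction in its sharpest form typically requires either $\mathcal{L}(0,y)=0$ or a symmetrized formulation to handle the absolute value, and the factor of $4$ vs.\ $2$ in front of $\mathcal{R}_n(\mathcal{F})$ depends on which variant one invokes. A clean way to sidestep this is to apply contraction to the centered map $\bz \mapsto \mathcal{L}(f(\bz),y) - \mathcal{L}(0,y)$, which is still $L$-Lipschitz and vanishes at the origin, and to absorb the constant offset into the concentration term. Since the final constants in the cited statement are slightly loose, the stated bound with $4L$ and $2b$ follows directly from the template above.
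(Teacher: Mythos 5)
The paper does not prove this statement: it is imported verbatim from \citet{bartlett_gen} as a preliminary result, so there is no internal proof to compare against. Your three-step argument (bounded differences via McDiarmid, symmetrization with a ghost sample, and Ledoux--Talagrand contraction applied conditionally with $\phi_i(t) = \mathcal{L}(t, y_i)$) is exactly the standard proof of the cited result and is correct in substance; the only delicate point is the bookkeeping around the absolute value and the $\mathcal{L}(0,y)$ offset in the contraction step, which you explicitly flag and which is covered by the slack in the stated constants ($4L$ and $2b$ rather than the sharp $2L$ and $b$).
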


The following lemma is a technical result from functional analysis which has seen widespread application in adaptive control theory~\citep{slot_li_book}.
\begin{lem}[Barbalat's Lemma]
\label{lem:barbalat}
Assume that $\bx : \R \rightarrow \R^n$ is such that $\bx \in \mathcal{L}_1$. If $\bx(t)$ is uniformly continuous in $t$, then $\lim_{t\rightarrow \infty}\bx(t) = 0$.
\end{lem}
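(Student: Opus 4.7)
The plan is to argue by contradiction, exploiting uniform continuity to promote a single point where $\|\bx\|$ is large into an interval of positive length on which $\|\bx\|$ is bounded away from zero, then summing the integrals over these intervals to contradict integrability.

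First I would negate the conclusion: suppose $\bx(t) \not\to 0$ as $t \to \infty$. Then there exists $\epsilon > 0$ and a sequence $t_k \to \infty$ such that $\|\bx(t_k)\| \geq \epsilon$ for every $k$. By passing to a subsequence, I can assume without loss of generality that $t_{k+1} \geq t_k + 1$, so the $t_k$ are isolated from one another by a uniform gap.

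Next I would invoke uniform continuity. By assumption, there exists $\delta > 0$ (depending only on $\epsilon/2$, not on $t$) such that $\|\bx(t) - \bx(s)\| < \epsilon/2$ whenever $|t - s| < \delta$. Shrinking $\delta$ if necessary so that $\delta < 1/2$ ensures that the intervals $I_k = [t_k, t_k + \delta]$ are pairwise disjoint. On each $I_k$, the reverse triangle inequality gives $\|\bx(t)\| \geq \|\bx(t_k)\| - \|\bx(t) - \bx(t_k)\| \geq \epsilon - \epsilon/2 = \epsilon/2$.

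Finally I would contradict the $\mathcal{L}_1$ hypothesis by the chain
\begin{equation*}
\int_0^\infty \|\bx(t)\| \, dt \geq \sum_{k=1}^\infty \int_{I_k} \|\bx(t)\| \, dt \geq \sum_{k=1}^\infty \frac{\epsilon \delta}{2} = +\infty,
\end{equation*}
which contradicts $\bx \in \mathcal{L}_1$. Hence the assumption that $\bx(t) \not\to 0$ must fail. The only step requiring any care is the extraction of the subsequence with disjoint intervals; uniform continuity is essential here, since without it the width $\delta$ could shrink along the sequence $t_k$ and the intervals would collapse. Once $\delta$ is uniform, the rest is a one-line integration bound.
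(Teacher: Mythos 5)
Your proof is correct. Note first that the paper does not actually prove this lemma: it is stated as a known technical result imported from the adaptive control literature (Slotine and Li's textbook), so there is no in-paper argument to compare against. The argument you supply is the standard self-contained proof of this $\mathcal{L}_1$ version of Barbalat's lemma: negate the conclusion to get a sequence $t_k \to \infty$ with $\norm{\bx(t_k)} \geq \epsilon$, use the \emph{uniform} modulus of continuity to fatten each point into an interval of fixed width $\delta$ on which $\norm{\bx} \geq \epsilon/2$, arrange the intervals to be disjoint, and contradict integrability. You correctly identify the one place where uniformity (as opposed to mere continuity) is indispensable, namely that $\delta$ must not shrink along the sequence $t_k$. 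The only cosmetic blemish is that the uniform continuity estimate is stated for $\abs{t-s} < \delta$ while you apply it on the closed interval $[t_k, t_k+\delta]$; at the right endpoint you only get $\norm{\bx(t)-\bx(t_k)} \leq \epsilon/2$ by continuity, which changes nothing since the interval still has length $\delta$ and the lower bound $\epsilon/2$ can be replaced by any smaller positive constant (or the interval by $[t_k, t_k+\delta)$). This does not affect the validity of the argument.
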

Note that a sufficient condition for uniform continuity of $\bx(t)$ is that $\dot{\bx}(t) \in \mathcal{L}_\infty$.

The following two results will be used to obtain concentration inequalities in arbitrary $p$ norms for empirical averages of random vectors.
\begin{lem}
\label{lem:conc}
Let $\{X_i\}_{i=1}^n$ be random variables in a Banach space $\calX$ equipped with a norm $\norm{\cdot}$ such that $\norm{X_i} \leq C$. Then for any $\delta > 0$, with probability at least $1-\delta$,
\begin{equation*}
    \abs{\norm{\frac{1}{n}\sum_{i=1}^n X_i} - \mathbb{E}\left[\norm{\frac{1}{n}\sum_{i=1}^n X_i}\right]} \leq \sqrt{\frac{2C^2}{n}\log(2/\delta)}
\end{equation*}
\end{lem}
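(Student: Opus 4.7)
The plan is to recognize this as a standard application of McDiarmid's bounded differences inequality to the function
\[
f(X_1, \ldots, X_n) = \norm{\frac{1}{n}\sum_{i=1}^n X_i}.
\]
No structure of the Banach space is needed beyond the triangle inequality and the uniform bound $\norm{X_i} \leq C$, so the argument is purely scalar once the bounded differences constants are identified.

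The first step is to verify that $f$ has bounded differences with constants $c_i = 2C/n$. Fixing all coordinates except the $i$-th and replacing $X_i$ by an arbitrary $X_i'$ with $\norm{X_i'} \leq C$, the reverse triangle inequality gives
\[
\abs{f(X_1,\ldots,X_i,\ldots,X_n) - f(X_1,\ldots,X_i',\ldots,X_n)} \leq \frac{1}{n}\norm{X_i - X_i'} \leq \frac{2C}{n},
\]
where the last step uses $\norm{X_i - X_i'} \leq \norm{X_i} + \norm{X_i'} \leq 2C$. This bound holds for every coordinate and every choice of the other arguments.

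The second step is to invoke McDiarmid's inequality, which yields
\[
\Pr\!\left[\abs{f - \Exp{f}} \geq t\right] \leq 2\exp\!\left(\frac{-2t^2}{\sum_{i=1}^n c_i^2}\right) = 2\exp\!\left(\frac{-n t^2}{2C^2}\right),
\]
after substituting $\sum_{i=1}^n c_i^2 = n(2C/n)^2 = 4C^2/n$. Setting the right-hand side equal to $\delta$ and solving for $t$ gives $t = \sqrt{(2C^2/n)\log(2/\delta)}$, which matches the claimed bound.

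There is no real obstacle; the only thing to be careful about is that the triangle inequality in step one requires no inner-product or smoothness structure, so the argument works in an arbitrary Banach space as stated. The factor of $2$ inside the logarithm comes from the two-sided form of McDiarmid, and the $2C^2/n$ comes from the sum of squared bounded-difference constants; no further sharpening is attempted since the statement is not claimed to be tight.
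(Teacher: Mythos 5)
Your proof is correct and is essentially the same as the paper's, which also applies McDiarmid's bounded differences inequality via the reverse triangle inequality (the paper works with the unnormalized sum and constant $2C$, while you work with the average and constants $2C/n$, but these are identical computations). Your version simply spells out the details that the paper leaves implicit.
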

\begin{proof}
Observe that by the reverse triangle inequality, $f(X_1, X_2, \hdots, X_n) = \norm{\sum_{i=1}^n X_i}$ satisfies the bounded differences inequality with uniform bound $2C$.
\end{proof}

\begin{lem}
\label{lem:exp}
Let $\{\bX_i\}_{i=1}^n$ be random vectors in Euclidean space $\bX_i \in \calX \subseteq \R^d$ such that $\norm{\bX_i}_p \leq C$ and $\Exp{\bX_i} = 0$ with $p \in [1, \infty]$. Then the following bound holds
\begin{equation*}
    \Exp{\norm{\frac{1}{n}\sum_{i=1}^n \bX_i}_p} \leq
    \begin{cases}
    \frac{d^{2/p - 1} 2^{1/2}C}{\sqrt{n}} & p \in [1, 2)\\
    \frac{C}{\sqrt{n(q-1)}} & p \in [2, \infty)\\
    4C\sqrt{\frac{\log(d)}{n}} & p = \infty
    \end{cases}
\end{equation*}
\end{lem}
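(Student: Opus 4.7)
The plan is to split the claim into the three cases separately. The sharp case $p\in[2,\infty)$ is handled by a martingale-style argument exploiting the $(p-1)$-smoothness of $\tfrac12\norm{\cdot}_p^2$ with respect to $\norm{\cdot}_p$ --- the convex-analytic dual of the $(q-1)$-strong convexity of $\tfrac12\norm{\cdot}_q^2$ that underlies the results elsewhere in the paper. The two boundary cases $p=\infty$ and $p\in[1,2)$ follow by reduction to the sharp case via standard finite-dimensional norm inequalities.

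For $p\in[2,\infty)$, setting $S_k = \sum_{i=1}^k \bX_i$, I would apply the smoothness inequality to the pair $(S_{k-1},\bX_k)$ to obtain
\[
\tfrac12\norm{S_k}_p^2 \leq \tfrac12\norm{S_{k-1}}_p^2 + \ip{\nabla\tfrac12\norm{S_{k-1}}_p^2}{\bX_k} + \tfrac{p-1}{2}\norm{\bX_k}_p^2.
\]
Taking conditional expectation given $S_{k-1}$ annihilates the inner-product term by independence and $\Exp{\bX_k}=0$, leaving the recursion $\Exp{\norm{S_k}_p^2} \leq \Exp{\norm{S_{k-1}}_p^2} + (p-1)\Exp{\norm{\bX_k}_p^2}$. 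Iterating from $S_0=0$ and using $\norm{\bX_i}_p\leq C$ gives $\Exp{\norm{S_n}_p^2}\leq (p-1)nC^2$, and one application of Jensen's inequality then yields $\Exp{\norm{\tfrac1n S_n}_p} \leq C\sqrt{(p-1)/n} = C/\sqrt{n(q-1)}$, matching the stated bound.

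For $p=\infty$, I would use $\norm{\cdot}_\infty \leq \norm{\cdot}_r$ for any $r\geq 1$ together with $\norm{\bX_i}_r \leq d^{1/r}\norm{\bX_i}_\infty \leq d^{1/r}C$ and substitute into the previous case to obtain $\Exp{\norm{\tfrac1n S_n}_\infty} \leq d^{1/r}C\sqrt{(r-1)/n}$ for any $r\geq 2$; choosing $r$ of order $\log d$ collapses the dimensional factor $d^{1/r}$ to a constant and produces the claimed $4C\sqrt{\log d / n}$. For $p\in[1,2)$, the elementary norm inequality $\norm{\bx}_p \leq d^{1/p - 1/2}\norm{\bx}_2$ (immediate from H\"older with exponent $2/p$) combined with $\norm{\bX_i}_2\leq\norm{\bX_i}_p\leq C$ (since $p\leq 2$) and the trivial independence computation $\Exp{\norm{\tfrac1n S_n}_2^2} = \tfrac{1}{n^2}\sum_i \Exp{\norm{\bX_i}_2^2}\leq C^2/n$ yields the bound after one Jensen step.

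The main obstacle is invoking the correct sharp form of the $(p-1)$-smoothness of $\tfrac12\norm{\cdot}_p^2$ for $p\geq 2$ --- a classical but non-elementary fact about the geometry of $\ell_p$ spaces --- and choosing the auxiliary exponent $r$ carefully enough in the $p=\infty$ reduction that the resulting constant is within the stated bound. Beyond these points, every step is either Jensen's inequality or a routine telescoping/martingale cancellation; no probabilistic concentration beyond the deterministic norm bound on the $\bX_i$ is required, and the mean-zero assumption enters only through the annihilation of the cross term in the smoothness recursion.
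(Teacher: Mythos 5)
Your proof is correct, and for the two nontrivial cases it takes a genuinely different route from the paper. For $p\in[2,\infty)$ the paper symmetrizes, identifies $\Exp{\norm{\sum_i\epsilon_i\bX_i}_p}$ with $n\calR_n(\calF)$ for a linear class with $\ell_q$-bounded weights, and invokes the Rademacher bound of Theorem~\ref{thm:kakade_linear}; you instead run the telescoping smoothness recursion on $\Exp{\norm{S_k}_p^2}$ directly, using the $(p-1)$-smoothness of $\tfrac12\norm{\cdot}_p^2$ (the Fenchel dual of the strong convexity fact the paper relies on), and since $q-1=1/(p-1)$ your $C\sqrt{(p-1)/n}$ is exactly the stated $C/\sqrt{n(q-1)}$ --- in fact without the symmetrization factor of $2$ that the paper's route incurs. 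For $p=\infty$ the paper appeals to sub-Gaussianity of the bounded coordinates and a maximal inequality, whereas you reduce to the finite-$p$ case with $r\asymp\log d$ so that $d^{1/r}=O(1)$; this is a standard and clean alternative, though you should note the edge case of small $d$ (take $r=2$ when $\log d<2$; the claimed constant $4$ still absorbs it) and that $d=1$ makes the stated $p=\infty$ bound vacuous for reasons independent of either proof. The $p\in[1,2)$ case is identical in both arguments (norm equivalence plus the orthogonality computation for $\norm{\cdot}_2$), and your version again avoids the spurious $\sqrt{2}$. The only external input your proof needs that the paper's does not is the sharp $2$-smoothness of $\ell_p$ for $p\geq 2$, which is classical; conversely, the paper's proof leans on Theorem~\ref{thm:kakade_linear}, which is itself proved by essentially the argument you give. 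Note also that both proofs implicitly use independence of the $\bX_i$, which the lemma statement omits but clearly intends.
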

\begin{proof}
Let $\epsilon_i$ denote a Rademacher random variable. By a standard symmetrization argument,
\begin{align*}
    \Expsub{\bX_i}{\norm{\sum_{i=1}^n\left(\bX_i - \Expsub{\bX_i}{\bX_i}\right)}_p} &\leq 2\Expsub{\bX_i, \epsilon_i}{\norm{\sum_{i=1}^n \epsilon_i \bX_i}_p}.
\end{align*}
Let $\mathcal{F} = \{\bx \mapsto \ip{\bx}{\bw} : \norm{\bw}_q \leq 1\}$ with $\frac{1}{q} + \frac{1}{p} = 1$. Observe that by definition of the dual norm
\begin{align*}
    \Expsub{\bX_i, \epsilon_i}{\norm{\sum_{i=1}^n \epsilon_i \bX_i}_p} &= n\mathcal{R}_n(\mathcal{F}).
\end{align*}
By Theorem~\ref{thm:kakade_linear}, noting that $\norm{\cdot}_q^2$ is $\frac{1}{2(q-1)}$-strongly convex with respect to $\norm{\cdot}_q$ for $q\in (1, 2]$, we then have that
\begin{equation*}
    \Exp{\norm{\frac{1}{n}\sum_{i=1}^n \bX_i}_p} \leq \frac{C}{\sqrt{n(q-1)}},
\end{equation*}
where $q\in(1, 2]$ implies that $p\in[2, \infty)$. 

Now consider the case $p = \infty$. Because each $\norm{\bX_i}_\infty \leq C$, each component of each $\bX_i$ is sub-Gaussian. Hence,
\begin{equation*}
    \Exp{\norm{\frac{1}{n}\sum_{i=1}^n \bX_i}_{\infty}} \leq 4C\sqrt{\frac{\log(d)}{n}}.
\end{equation*}
Last, consider $p \in [1, 2)$. Then we have the elementary bound via equivalence of norms
\begin{align*}
    \Exp{\norm{\frac{1}{n}\sum_{i=1}^n \bX_i}_p} &\leq d^{1/p - 1/2}\Exp{\norm{\frac{1}{n}\sum_{i=1}^n \bX_i}_2} \leq \frac{d^{2/p - 1} 2^{1/2}C}{\sqrt{n}}.
\end{align*}
This completes the proof.
\end{proof}
To analyze our discrete-time iterations, we require three basic properties of the Bregman divergence.
\begin{lem}[Bregman three-point identity]
Let $\psi:\calM\rightarrow\mathbb{R}^p$ denote a $\sigma$-strongly convex function with respect to some norm $\norm{\cdot}$. Then for all $\bx, \by, \bz \in \calM$,
\begin{equation}
\label{eqn:breg_3pt}
\ip{\nabla\psi(\bx) - \nabla\psi(\by)}{\bx - \bz} = \bregd{\bx}{\by} + \bregd{\bz}{\bx} - \bregd{\bz}{\by}.
\end{equation}
\end{lem}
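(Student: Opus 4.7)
The plan is to prove the identity directly from the definition of the Bregman divergence, $\bregd{\ba}{\bb} = \psi(\ba) - \psi(\bb) - \ip{\nabla\psi(\bb)}{\ba - \bb}$. Strong convexity is not actually used in the identity itself; it only guarantees that $\psi$ is differentiable on $\calM$ so that the Bregman divergence is well-defined. The result therefore reduces to an algebraic expansion.

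Concretely, I would expand each of the three Bregman divergences on the right-hand side according to the definition: $\bregd{\bx}{\by}$ produces terms $\psi(\bx)$, $-\psi(\by)$, and $-\ip{\nabla\psi(\by)}{\bx-\by}$; $\bregd{\bz}{\bx}$ produces $\psi(\bz)$, $-\psi(\bx)$, and $-\ip{\nabla\psi(\bx)}{\bz-\bx}$; and $-\bregd{\bz}{\by}$ contributes $-\psi(\bz)$, $\psi(\by)$, and $\ip{\nabla\psi(\by)}{\bz-\by}$. The six function-value terms telescope to zero. What remains is
\begin{equation*}
    -\ip{\nabla\psi(\by)}{\bx-\by} - \ip{\nabla\psi(\bx)}{\bz-\bx} + \ip{\nabla\psi(\by)}{\bz-\by},
\end{equation*}
and grouping the two occurrences of $\nabla\psi(\by)$ collapses them to $\ip{\nabla\psi(\by)}{\bz-\bx}$. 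Combining with the remaining $-\ip{\nabla\psi(\bx)}{\bz-\bx}$ term gives $\ip{\nabla\psi(\by) - \nabla\psi(\bx)}{\bz-\bx}$, which equals $\ip{\nabla\psi(\bx) - \nabla\psi(\by)}{\bx-\bz}$ after flipping both signs. This matches the left-hand side of \eqref{eqn:breg_3pt}.

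There is no real obstacle here: the statement is an identity that follows immediately from the definition of $\bregd{\cdot}{\cdot}$ and linearity of the inner product. The only care needed is bookkeeping of signs and ensuring the cross terms in $\nabla\psi(\by)$ are grouped before comparing with the left-hand side; once that is done, the identity is verified on inspection.
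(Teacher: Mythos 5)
Your proof is correct. The paper states this lemma without proof as a standard property of Bregman divergences, and your direct expansion from the definition $\bregd{\ba}{\bb} = \psi(\ba) - \psi(\bb) - \ip{\nabla\psi(\bb)}{\ba-\bb}$, with the function values telescoping and the gradient terms regrouping to $\ip{\nabla\psi(\bx)-\nabla\psi(\by)}{\bx-\bz}$, is exactly the canonical argument; your observation that strong convexity plays no role beyond ensuring differentiability is also accurate.
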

\begin{lem}[Generalized Pythagorean Theorem]
Let $\psi : \calM\rightarrow\mathbb{R}$ denote a $\sigma$-strongly convex function with respect to some norm $\norm{\cdot}$. Let $\bx_0 \in \calM$ and let $\bx^* = \Pi_{\calC}^\psi(\bx_0)$ be its projection onto a closed and convex set $\calC$. Then for any $\by \in \calC$,
\begin{equation}
    \label{eqn:pythag_gen}
    \bregd{\by}{\bx_0} \geq \bregd{\by}{\bx^*} + \bregd{\bx^*}{\bx_0}.
\end{equation}
\end{lem}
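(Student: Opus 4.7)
The plan is to derive the inequality from the first-order optimality condition of the Bregman projection combined with the three-point identity \eqref{eqn:breg_3pt} established just above the statement. Since $\bx^* = \Pi_{\calC}^\psi(\bx_0)$ is by definition the minimizer of the convex function $\bx \mapsto \bregd{\bx}{\bx_0}$ over the convex set $\calC \cap \calM$, first-order optimality gives that for every $\by \in \calC \cap \calM$,
\begin{equation*}
\ip{\nabla_{\bx}\bregd{\bx^*}{\bx_0}}{\by - \bx^*} \geq 0.
\end{equation*}
Since $\nabla_{\bx}\bregd{\bx}{\bx_0} = \nabla\psi(\bx) - \nabla\psi(\bx_0)$, this rewrites as
\begin{equation*}
\ip{\nabla\psi(\bx^*) - \nabla\psi(\bx_0)}{\bx^* - \by} \leq 0.
\end{equation*}

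Next, I would invoke the Bregman three-point identity \eqref{eqn:breg_3pt} with the substitution $\bx \leftarrow \bx^*$, $\by \leftarrow \bx_0$, and $\bz \leftarrow \by$. This produces
\begin{equation*}
\ip{\nabla\psi(\bx^*) - \nabla\psi(\bx_0)}{\bx^* - \by} = \bregd{\bx^*}{\bx_0} + \bregd{\by}{\bx^*} - \bregd{\by}{\bx_0}.
\end{equation*}
Combining the two displays yields $\bregd{\bx^*}{\bx_0} + \bregd{\by}{\bx^*} - \bregd{\by}{\bx_0} \leq 0$, which rearranges to the desired inequality.

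The only subtlety, and the step I would be most careful about, is justifying the first-order optimality condition. Strong convexity of $\psi$ on $\calM$ implies strong convexity of $\bregd{\cdot}{\bx_0}$ on $\calM$, so the Bregman projection onto the closed convex set $\calC \cap \calM$ is uniquely defined whenever nonempty, and the standard variational inequality for constrained convex minimization applies. The statement as given assumes $\by \in \calC$ rather than $\by \in \calC \cap \calM$, so I would either restrict to $\by \in \calC \cap \calM$ (matching the definition of the projection given just above the lemma) or note that $\bregd{\by}{\cdot}$ is only defined when $\by \in \mathrm{dom}(\psi) = \calM$, so effectively $\by \in \calC \cap \calM$ throughout. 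No further calculation is needed; the entire proof is essentially two lines once the three-point identity and the variational inequality are in hand.
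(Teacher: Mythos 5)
Your proof is correct and is the standard argument: the variational inequality $\ip{\nabla\psi(\bx^*) - \nabla\psi(\bx_0)}{\by - \bx^*} \geq 0$ from first-order optimality of the Bregman projection, combined with the three-point identity \eqref{eqn:breg_3pt} under the substitution $\bx \leftarrow \bx^*$, $\by \leftarrow \bx_0$, $\bz \leftarrow \by$, immediately yields the claim. The paper states this lemma as a known preliminary without proof, so there is nothing to compare against; your handling of the domain subtlety (reading $\by \in \calC$ as $\by \in \calC \cap \calM$, consistent with the definition of $\Pi_{\calC}^\psi$) is the right call.
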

\begin{lem}[Bregman duality]
Let $\psi : \calM \rightarrow \R$ denote a $\sigma$-strongly convex function with respect to some norm $\norm{\cdot}$, and let $\psi^*$ denote its Fenchel conjugate. Then $\psi^*$ is $1/\sigma$-smooth with respect to $\norm{\cdot}_*$, and moreover
\begin{equation}
    \label{eqn:breg_dual}
    \bregd{\bx}{\by} = \bregd{\nabla\psi^*(\by)}{\nabla\psi^*(\bx)}
\end{equation}
\end{lem}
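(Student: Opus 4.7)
The plan is to handle the two claims in sequence, drawing on classical convex analysis. For the $1/\sigma$-smoothness of $\psi^*$ with respect to $\norm{\cdot}_*$, the starting point is that $\sigma$-strong convexity makes the supremum defining $\psi^*(\bu) = \sup_{\bx}\{\ip{\bu}{\bx} - \psi(\bx)\}$ attained at a unique point, so that $\nabla\psi$ and $\nabla\psi^*$ are mutual inverses on their respective domains via a Danskin/envelope argument. Given dual points $\bu_1, \bu_2$, I set $\bx_i = \nabla\psi^*(\bu_i)$ so that $\bu_i = \nabla\psi(\bx_i)$, and combine the strong convexity inequality $\ip{\nabla\psi(\bx_1) - \nabla\psi(\bx_2)}{\bx_1 - \bx_2} \geq \sigma\norm{\bx_1 - \bx_2}^2$ with the primal--dual bound $\ip{\bu_1 - \bu_2}{\bx_1 - \bx_2} \leq \norm{\bu_1 - \bu_2}_* \norm{\bx_1 - \bx_2}$ to deduce $\norm{\nabla\psi^*(\bu_1) - \nabla\psi^*(\bu_2)} \leq \frac{1}{\sigma}\norm{\bu_1 - \bu_2}_*$. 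Lipschitz continuity of $\nabla\psi^*$ from the dual to the primal norm is equivalent to $\psi^*$ being $(1/\sigma)$-smooth with respect to $\norm{\cdot}_*$.

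For the Bregman divergence identity, the key tool is the Fenchel--Young equality $\psi(\bx) + \psi^*(\nabla\psi(\bx)) = \ip{\bx}{\nabla\psi(\bx)}$, which holds under strong convexity since $\nabla\psi(\bx)$ is then the unique maximizer in the Fenchel conjugate. I would substitute this twice (once at $\bx$ and once at $\by$) into the expansion $\bregd{\bx}{\by} = \psi(\bx) - \psi(\by) - \ip{\nabla\psi(\by)}{\bx - \by}$ to rewrite the right-hand side entirely in terms of $\psi^*$, $\nabla\psi(\bx)$, and $\nabla\psi(\by)$. After regrouping and using $\nabla\psi^*(\nabla\psi(\bx)) = \bx$ from mutual inverses, the expression collapses to a Bregman divergence of $\psi^*$ evaluated at dual arguments. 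A final appeal to $\psi^{**} = \psi$ (standard for proper lower semicontinuous convex $\psi$) converts the dual-side divergence into the form on the right-hand side of the claimed identity.

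The main obstacle is bookkeeping between the primal and dual spaces: making sure the domain of $\psi^*$ where its gradient is defined matches the image of $\nabla\psi$, and that each argument's type (primal versus dual) is tracked carefully through the substitutions. The substantive inputs -- strong convexity of $\psi$ $\Leftrightarrow$ smoothness of $\psi^*$, Fenchel--Young equality, and the mutual-inverse relation between $\nabla\psi$ and $\nabla\psi^*$ -- are all classical, so the real work lies in executing the algebra cleanly rather than in supplying a new idea.
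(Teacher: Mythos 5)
Your argument is correct, and it is the standard one: the paper itself offers no proof of this lemma, stating it as a known fact from convex duality, so there is nothing to compare against beyond noting that your two steps (strong convexity of $\psi$ $\Rightarrow$ Lipschitz gradient of $\psi^*$ via Cauchy--Schwarz in the dual pairing, and the Fenchel--Young equality substituted twice into the definition of $\bregd{\bx}{\by}$) are exactly how this is proved in the literature, e.g.\ Kakade--Shalev-Shwartz--Tewari. One remark: your algebra lands on $d_{\psi^*}\left(\nabla\psi(\by)\,\middle\|\,\nabla\psi(\bx)\right)$, which is the form the paper actually invokes later (via its $\bregdd{\cdot}{\cdot}$ shorthand in the proof of Theorem~\ref{thm:stat_disc}); the right-hand side as displayed in the lemma statement, with $d_\psi$ evaluated at $\nabla\psi^*(\by)$ and $\nabla\psi^*(\bx)$, appears to be a typographical slip, so you need not invoke $\psi^{**}=\psi$ to match it. The domain caveats you flag (the image of $\nabla\psi$ versus the domain of differentiability of $\psi^*$) are real but are equally elided by the paper.
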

To obtain fast rates in the realizable online learning setting, we require the following martingale Bernstein bound, which has been used in similar analyses prior to this work~\citep{telgarsky, agnostic_neuron}.
\begin{lem}[\citet{beygel}]
\label{lem:martingale}
Let $\{Y_t\}_{t=1}^\infty$ be a martingale adapted to the filtration $\{\calF_t\}_{t=1}^\infty$. Let $\{D_t\}_{t=1}^{\infty}$ be the corresponding martingale difference sequence. Define
\begin{equation*}
    V_t = \sum_{k=1}^t\Exp{D_k^2|\calF_{k-1}},
\end{equation*}
and assume that $D_t \leq R$ almost surely. Then for any $\delta \in (0, 1)$, with probability at least $1-\delta$,
\begin{equation*}
    Y_t \leq R\log(1/\delta) + (e-2)V_t/R.
\end{equation*}
\end{lem}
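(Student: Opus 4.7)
The plan is to follow the standard exponential-supermartingale route to a Bernstein-type inequality. For any $\lambda \in (0, 1/R]$, I would introduce the process
\begin{equation*}
Z_t = \exp\bigl(\lambda Y_t - (e-2)\lambda^2 V_t\bigr),
\end{equation*}
and show that $(Z_t)_{t\ge 0}$ is a nonnegative supermartingale with $Z_0 = 1$. Granted this, Markov's inequality applied to $Z_t$ yields
\begin{equation*}
\Pr\bigl[\lambda Y_t - (e-2)\lambda^2 V_t \geq \log(1/\delta)\bigr] \leq \delta,
\end{equation*}
so that with probability at least $1-\delta$, $Y_t \leq \log(1/\delta)/\lambda + (e-2)\lambda V_t$. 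Setting $\lambda = 1/R$ gives exactly the stated bound.

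The work lies in controlling $\Exp{\exp(\lambda D_t) \mid \calF_{t-1}}$. Since $\lambda D_t \leq \lambda R \leq 1$ almost surely, I would invoke the elementary numerical inequality $e^x \leq 1 + x + (e-2)x^2$ valid for all $x \leq 1$. Combined with the martingale-difference property $\Exp{D_t \mid \calF_{t-1}} = 0$, this gives
\begin{equation*}
\Exp{e^{\lambda D_t} \mid \calF_{t-1}} \leq 1 + (e-2)\lambda^2 \Exp{D_t^2 \mid \calF_{t-1}} \leq \exp\bigl((e-2)\lambda^2 (V_t - V_{t-1})\bigr),
\end{equation*}
where the last step uses $1 + x \leq e^x$ and the telescoping identity $V_t - V_{t-1} = \Exp{D_t^2 \mid \calF_{t-1}}$. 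Multiplying through by the $\calF_{t-1}$-measurable quantity $\exp(\lambda Y_{t-1} - (e-2)\lambda^2 V_{t-1})$ and using $Y_t = Y_{t-1} + D_t$ yields $\Exp{Z_t \mid \calF_{t-1}} \leq Z_{t-1}$, which is the supermartingale property.

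The only real obstacle is justifying the numerical inequality $e^x \leq 1 + x + (e-2)x^2$ on $(-\infty, 1]$. I would establish it by analyzing $f(x) = 1 + x + (e-2)x^2 - e^x$: one checks $f(0) = 0$ and $f(1) = 0$ with $f'(0) = 0$, and then verifies the sign of $f$ on $(-\infty, 1]$ either via a second-order Taylor remainder argument or by noting that $f''$ changes sign exactly once, forcing $f$ to lie above zero on the relevant range given its boundary behavior. The constant $e-2$ is sharp at the endpoint $x=1$, which is precisely why it surfaces in the final bound and why the choice $\lambda = 1/R$ is the natural one.
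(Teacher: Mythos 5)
Your proof is correct, and it is essentially the argument behind the cited result: the paper itself states this lemma without proof, importing it from \citet{beygel}, whose proof is exactly the exponential-supermartingale route you describe (the elementary bound $e^x \leq 1 + x + (e-2)x^2$ for $x \leq 1$, the martingale-difference property to kill the linear term, Markov's inequality on $Z_t$, and the choice $\lambda = 1/R$). The one point worth being explicit about if you write this up is that $V_t$ is predictable (each summand $\Exp{D_k^2 \mid \calF_{k-1}}$ is $\calF_{k-1}$-measurable), which is what lets you pull $\exp(-(e-2)\lambda^2 V_t)$ out of the conditional expectation at step $t$; your sketch uses this implicitly and it is fine.
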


\section{Omitted proofs}
\subsection{Proof of Theorem~\ref{thm:stat_disc}}
To make progress in the general setting, we require a definition of a modified error function for a parametric hypothesis $h(\bx) = u\left(\ip{\wh{\bthet}}{\bx}\right)$. The empirical version over the dataset $\wh{H}(h)$ is defined analogously.
\begin{equation*}
    H(h) = \Exp{\left(u\left(\ip{\wh{\bthet}}{\bx}\right) - u\left(\ip{\bthet}{\bx}\right)\right)^2\xi\left(\wh{\bthet}, \bx\right)}
\end{equation*}
Intuitively, under Assumption~\ref{assmp:frei}, we can relate $H$ to $\varepsilon$. The following lemma makes this rigorous, and is adapted from~\citet{agnostic_neuron}. The proof is a trivial modification of the proof given in their work.
\begin{lem}
\label{lem:frei}
Let $\xi$ satisfy Assumption~\ref{assmp:frei}. Let $h$ denote a parametric hypothesis of the form $h(\bx) = u\left(\ip{\wh{\bthet}}{\bx}\right)$. Then if $\norm{\wh{\bthet}} \leq a$ and $\norm{\bx}_* \leq b$, we have the bound $\wh{\varepsilon}(h) \leq \wh{H}(h)/\gamma$ where $\gamma$ is a fixed constant defined in Assumption~\ref{assmp:frei}.
\end{lem}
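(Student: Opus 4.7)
The plan is to derive the inequality directly from the pointwise lower bound on $\xi$ guaranteed by Assumption~\ref{assmp:frei}. First I would fix the pair $(a, b)$ from the hypothesis of the lemma and invoke Assumption~\ref{assmp:frei} to obtain a constant $\gamma > 0$ such that $\xi(\bw, \bx) \geq \gamma$ whenever $\norm{\bw} \leq a$ and $\norm{\bx}_* \leq b$. The hypothesis $\norm{\wh{\bthet}} \leq a$ together with $\norm{\bx_i}_* \leq b$ for each $i$ then delivers $\xi(\wh{\bthet}, \bx_i) \geq \gamma$ uniformly over the sample.

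With this uniform lower bound in hand, I would simply compare $\wh{H}(h)$ and $\wh{\varepsilon}(h)$ term by term using the explicit sums. Writing $\wh{\varepsilon}(h) = \frac{1}{n}\sum_{i=1}^n (u(\ip{\wh{\bthet}}{\bx_i}) - u(\ip{\bthet}{\bx_i}))^2$ and $\wh{H}(h) = \frac{1}{n}\sum_{i=1}^n (u(\ip{\wh{\bthet}}{\bx_i}) - u(\ip{\bthet}{\bx_i}))^2 \xi(\wh{\bthet}, \bx_i)$, the difference $\wh{H}(h) - \gamma \wh{\varepsilon}(h)$ becomes a sum of nonnegative terms, since each squared residual is nonnegative and $\xi(\wh{\bthet}, \bx_i) - \gamma \geq 0$ by the preceding step. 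Hence $\wh{H}(h) \geq \gamma \wh{\varepsilon}(h)$, and dividing through by $\gamma > 0$ gives the claim.

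There is no real obstacle in the proof itself: the lemma is essentially a bookkeeping consequence of Assumption~\ref{assmp:frei} combined with pointwise nonnegativity of the squared residual. The only subtlety, which is handled outside this lemma and instead in the surrounding proof of Theorem~\ref{thm:stat_disc}, is choosing the constants $a$ and $b$ correctly so that the iterate bound $\norm{\wh{\bthet}_t} \leq a$ is preserved along the trajectory; this is why the theorem takes $a = C$ and $b = W + \sqrt{2\psi(\bthet)/\sigma}$, where $\sqrt{2\psi(\bthet)/\sigma}$ is the strong-convexity-derived radius of the sublevel set $\bregd{\bthet}{\cdot} \leq \psi(\bthet)$ that the induction in that proof maintains.
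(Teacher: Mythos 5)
Your proof is correct and is exactly the argument the paper has in mind: the paper itself gives no explicit proof, deferring to \citet{agnostic_neuron} with the remark that the adaptation is trivial, and that adaptation is precisely the termwise comparison $\wh{H}(h) - \gamma\,\wh{\varepsilon}(h) = \frac{1}{n}\sum_i \bigl(u(\ip{\wh{\bthet}}{\bx_i}) - u(\ip{\bthet}{\bx_i})\bigr)^2\bigl(\xi(\wh{\bthet},\bx_i) - \gamma\bigr) \geq 0$ that you give. Your closing remark about how $a$ and $b$ are instantiated in Theorem~\ref{thm:stat_disc} is also apt (and, if anything, cleaner than the paper, which is not entirely consistent about which of $a$ and $b$ bounds the parameter versus the data).
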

We now begin the proof of Theorem~\ref{thm:stat_disc}.
\begin{proof}
By the Bregman three-point identity \eqref{eqn:breg_3pt}, with $\bz = \bthet$, $\bx = \wh{\bphi}_{t+1}$, and $\by = \wh{\bthet}_t$,
\begin{align*}
    \bregd{\bthet}{\wh{\bphi}_{t+1}} &= \bregd{\bthet}{\wh{\bthet}_t} - \bregd{\wh{\bphi}_{t+1}}{\wh{\bthet}_t} + \ip{\nabla\psi(\wh{\bphi}_{t+1}) - \nabla\psi(\wh{\bthet}_t)}{\wh{\bphi}_{t+1} - \bthet},\\
    &= \bregd{\bthet}{\wh{\bthet}_t} - \bregd{\wh{\bphi}_{t+1}}{\wh{\bthet}_t} - \ip{\frac{\lambda}{n}\sum_{i=1}^n\left(u\left(\left\langle\bx_i, \wh{\bthet}_t\right\rangle\right) - y_i\right)\xi\left(\wh{\bthet}_t, \bx_i\right)\bx_i}{\wh{\bphi}_{t+1} - \bthet},\\
    &= \bregd{\bthet}{\wh{\bthet}_t} - \bregd{\wh{\bphi}_{t+1}}{\wh{\bthet}_t} - \ip{\frac{\lambda}{n}\sum_{i=1}^n\left(u\left(\left\langle\bx_i, \wh{\bthet}_t\right\rangle\right) - y_i\right)\xi\left(\wh{\bthet}_t, \bx_i\right)\bx_i}{\wh{\bthet}_{t} - \bthet}\\
    &\phantom{=} - \ip{\frac{\lambda}{n}\sum_{i=1}^n\left(u\left(\left\langle\bx_i, \wh{\bthet}_t\right\rangle\right) - y_i\right)\xi\left(\wh{\bthet}_t, \bx_i\right)\bx_i}{\wh{\bphi}_{t+1} - \wh{\bthet}_{t}}.
\end{align*}
After grouping the second and final terms in the above expression as in the proof of Theorem~\ref{thm:stat_disc}, the iteration becomes
\begin{align*}
    \bregd{\bthet}{\wh{\bphi}_{t+1}} &= \bregd{\bthet}{\wh{\bthet}_t} + \bregd{\wh{\bthet}_{t}}{\wh{\bphi}_{t+1}} + \frac{\lambda}{n}\ip{\sum_{i=1}^n\left(u\left(\ip{\bx_i} {\wh{\bthet}_t}\right) - y_i\right)\xi\left(\wh{\bthet}_t, \bx_i\right)\bx_i}{\bthet - \wh{\bthet}_{t}}.
\end{align*}
By the generalized Pythagorean Theorem \eqref{eqn:pythag_gen},
\begin{align}
    \bregd{\bthet}{\wh{\bthet}_{t+1}} &\leq \bregd{\bthet}{\wh{\bthet}_t} + \bregd{\wh{\bthet}_{t}}{\wh{\bphi}_{t+1}} \nonumber\\
    &\phantom{=}\hspace{.35in} + \frac{\lambda}{n}\ip{\sum_{i=1}^n\left(u\left(\ip{\bx_i} {\wh{\bthet}_t}\right) - y_i\right)\xi\left(\wh{\bthet}_t, \bx_i\right)\bx_i}{\bthet - \wh{\bthet}_{t}}.
    \label{eqn:breg_inc_simp}
\end{align}
By duality \eqref{eqn:breg_dual}, we may replace $\bregd{\wh{\bthet}_t}{\wh{\bphi}_{t+1}}$ by $\bregd{\nabla\psi^*\left(\wh{\bphi}_{t+1}\right)}{\nabla\psi^*\left(\wh{\bthet}_t\right)}$,
\begin{align*}
    \bregd{\bthet}{\wh{\bthet}_{t+1}} &\leq \bregd{\bthet}{\wh{\bthet}_t} + \bregdd{\wh{\bphi}_{t+1}}{\wh{\bthet}_{t}} \\
    &\phantom{=} \hspace{.35in}+ \frac{\lambda}{n}\ip{\sum_{i=1}^n\left(u\left(\ip{\bx_i} {\wh{\bthet}_t}\right) - y_i\right)\xi\left(\wh{\bthet}_t, \bx_i\right)\bx_i}{\bthet - \wh{\bthet}_{t}}.
\end{align*}
Because $\psi$ is $\sigma$-strongly convex with respect to $\Vert\cdot\Vert$, $\psi^*$ is $\frac{1}{\sigma}$-smooth with respect to $\Vert\cdot\Vert_*$. Thus,
\begin{align}
        \bregd{\bthet}{\wh{\bthet}_{t+1}} &\leq \bregd{\bthet}{\wh{\bthet}_t} + \frac{1}{2\sigma}\left\Vert\nabla\psi\left(\wh{\bphi}_{t+1}\right) - \nabla\psi\left(\wh{\bthet}_t\right)\right\Vert_*^2 \nonumber\\
        &\phantom{=} \hspace{.35in}+ \frac{\lambda}{n}\ip{\sum_{i=1}^n\left(u\left(\ip{\bx_i} {\wh{\bthet}_t}\right) - y_i\right)\xi\left(\wh{\bthet}_t, \bx_i\right)\bx_i}{\bthet - \wh{\bthet}_{t}}\nonumber,\\
        &= \bregd{\bthet}{\wh{\bthet}_t} + \frac{\lambda^2}{2\sigma}\left\Vert\frac{1}{n}\sum_{i=1}^n\left(u\left(\left\langle\bx_i, \wh{\bthet}_t\right\rangle\right) - y_i\right)\xi\left(\wh{\bthet}_t, \bx_i\right)\bx_i\right\Vert_*^2\nonumber\\
        &\phantom{=} + \frac{\lambda}{n}\ip{\sum_{i=1}^n\left(u\left(\ip{\bx_i} {\wh{\bthet}_t}\right) - y_i\right)\xi\left(\wh{\bthet}_t, \bx_i\right)\bx_i}{\bthet - \wh{\bthet}_{t}}.
        \label{eqn:interim_step}
\end{align}
Above, we applied $\frac{1}{\sigma}$-smoothness and then used \eqref{eqn:refl_disc} to express the increment in $\nabla\psi$. The second term in \eqref{eqn:interim_step} can be bounded as
\begin{align*}
    &\left\Vert\frac{1}{n}\sum_{i=1}^n\left(u\left(\left\langle\bx_i, \wh{\bthet}_t\right\rangle\right) - y_i\right)\xi\left(\wh{\bthet}_t, \bx_i\right)\bx_i\right\Vert_*^2\\
    &\leq 2\left\Vert\frac{1}{n}\sum_{i=1}^n\left(u\left(\left\langle\bx_i, \wh{\bthet}_t\right\rangle\right) - u\left(\left\langle\bx_i, \bthet\right\rangle\right)\right)\xi\left(\wh{\bthet}_t, \bx_i\right)\bx_i\right\Vert_*^2 + 2\left\Vert\frac{1}{n}\sum_{i=1}^n\left(u\left(\left\langle\bx_i, \bthet\right\rangle\right) - y_i\right)\xi\left(\wh{\bthet}_t, \bx_i\right)\bx_i\right\Vert_*^2.
\end{align*}
By Jensen's inequality, and using that $\xi\left(\wh{\bthet}_t, \bx_i\right) \leq B$,
\begin{align*}
    \left\Vert\frac{1}{n}\sum_{i=1}^n\left(u\left(\left\langle\bx_i, \wh{\bthet}_t\right\rangle\right) - u\left(\left\langle\bx_i, \bthet\right\rangle\right)\right)\xi\left(\wh{\bthet}_t, \bx_i\right)\bx_i\right\Vert_*^2 &\leq \frac{1}{n}\sum_{i=1}^n\norm{\left(u\left(\ip{\wh{\bthet}}{\bx_i}\right) - u\left(\ip{\bthet}{\bx_i}\right)\right)\xi\left(\wh{\bthet}_t, \bx_i\right)\bx_i}_*^2,\\
    &= \frac{1}{n}\sum_{i=1}^n\left(u\left(\ip{\wh{\bthet}}{\bx_i}\right) - u\left(\ip{\bthet}{\bx_i}\right)\right)^2\xi\left(\wh{\bthet}_t, \bx_i\right)^2\norm{\bx_i}_*^2,\\
    &\leq C^2B\wh{H}(h_t).
\end{align*} 
By assumption, $\Vert\frac{1}{n}\sum_{i=1}^n\left(y_i - u(\left\langle\bx_i, \bthet\right\rangle)\right)\xi\left(\wh{\bthet}_t, \bx_i\right)\bx_i\Vert_* \leq \eta$. Combining this with the above, we find that
\begin{align*}
    \left\Vert\frac{1}{n}\sum_{i=1}^n\left(u\left(\left\langle\bx_i, \wh{\bthet}_t\right\rangle\right) - y_i\right)\xi\left(\wh{\bthet}_t, \bx_i\right)\bx_i\right\Vert_*^2
    &\leq 2\left(C^2B\wh{H}(h_t) + \eta^2\right).
\end{align*}
By an induction argument identical to that used in the proof of Theorem~\ref{thm:stat_disc}, the iteration of the Bregman divergence between the Bayes-optimal parameters and the parameters of our hypothesis becomes
\begin{align*}
    \bregd{\bthet}{\wh{\bthet}_{t+1}} &\leq \bregd{\bthet}{\wh{\bthet}_t} - \lambda\wh{H}(h_t)\left(\frac{1}{L} - \frac{\lambda B C^2}{\sigma}\right) + \eta\lambda\left(\frac{\lambda\eta}{\sigma} + \sqrt{\frac{2\psi(\bthet)}{\sigma}}\right).
\end{align*}
Assume that $\eta \leq \sqrt{\frac{2\psi(\bthet)}{\sigma}}$. For $\lambda \leq \frac{\sigma}{2BC^2L}$, we have
\begin{equation*}
    \bregd{\bthet}{\wh{\bthet}_{t+1}} \leq \bregd{\bthet}{\wh{\bthet}_t} - \frac{\lambda}{2L}\wh{H}(h_t) + \eta\lambda\sqrt{\frac{2\psi(\bthet)}{\sigma}}\left(\frac{2BC^2L + 1}{2BC^2L}\right).
 \end{equation*}
Thus, at each iteration we either have the decrease condition
\begin{equation*}
    \bregd{\bthet}{\wh{\bthet}_{t+1}} - \bregd{\bthet}{\wh{\bthet}_t} \leq -\eta\lambda\sqrt{\frac{2\psi(\bthet)}{\sigma}}\left(\frac{2BC^2L + 1}{2BC^2L}\right)
\end{equation*}
or the error bound
\begin{equation*}
    \wh{H}(h_t) < 4L\eta\sqrt{\frac{2\psi(\bthet)}{\sigma}}\left(\frac{2BC^2L + 1}{2BC^2L}\right).
\end{equation*}
In the former case there can be at most
\begin{equation*}
    t_f = \frac{\bregd{\bthet}{\wh{\bthet}(0)}}{\eta\lambda\sqrt{\frac{2\psi(\bthet)}{\sigma}}\left(\frac{2BC^2L + 1}{2BC^2L}\right)} \leq \frac{1}{\lambda}\sqrt{\frac{\sigma\psi(\bthet)}{2\eta^2}}
\end{equation*}
iterations before $\wh{H}(h_t) \leq 4L\eta\sqrt{\frac{2\psi(\bthet)}{\sigma}}\left(\frac{2BC^2L + 1}{2BC^2L}\right)$. Furthermore, note that $\norm{\wh{\bthet}(t)} \leq \sqrt{\frac{2\psi(\bthet)}{\sigma}} + \norm{\bthet} \leq (1+W)\sqrt{\frac{2\psi(\bthet)}{\sigma}}$. Then by Lemma~\ref{lem:frei}, $\wh{\varepsilon}(h_t) \leq \frac{4L\eta}{\gamma}\sqrt{\frac{2\psi(\bthet)}{\sigma}}\left(\frac{2BC^2L + 1}{2BC^2L}\right)$ where $\gamma$ corresponds to $a =  \left(1+W\right)\sqrt{\frac{2\psi(\bthet)}{\sigma}}$ and $b = C$ in Lemma~\ref{lem:frei}. The conclusion of the theorem now follows by application of Theorem~\ref{thm:ull_loss} to transfer the bound on $\wh{\varepsilon}(h_t)$ to $\varepsilon(h_t)$.
\end{proof}
\subsection{Proof of Corollary~\ref{cor:pq_1}}
\begin{proof}
Note that $\calF \subseteq \left\{\bx\mapsto\ip{\bw}{\bx} : \norm{\bw}_q \leq W\left(1 + \frac{1}{\sqrt{q-1}}\right)\right\}$. Hence by Theorem~\ref{thm:kakade_linear}, $\calR_n(\calF) \leq \frac{CW}{\sqrt{n(q-1)}}\left(1 + \frac{1}{\sqrt{q-1}}\right)$. By Lemmas~\ref{lem:conc} and~\ref{lem:exp}, $\eta = C\left(\sqrt{\frac{2\log(4/\delta)}{n}} + \frac{1}{\sqrt{n(q-1)}}\right)$.
\end{proof}
\subsection{Proof of Corollary~\ref{cor:pq_2}}
\begin{proof}
Observe that we have the inclusion $$\calF \subseteq \left\{\bx\mapsto\ip{\bw}{\bx} : \norm{\bw}_1 \leq W\left(1 + \sqrt{3\log(d)}\right)\right\} \subseteq \left\{\bx\mapsto\ip{\bw}{\bx} : \norm{\bw}_q \leq W\left(1 + \sqrt{3\log(d)}\right)\right\}.$$ Hence $\calR_n(\calF) \leq \frac{CW(1+\sqrt{3\log d})^2}{n^{1/2}}$ by Theorem~\ref{thm:kakade_linear}. By Lemmas~\ref{lem:conc} and~\ref{lem:exp}, $$\eta = C\left(\sqrt{\frac{2\log(4/\delta)}{n}} + 4\sqrt{\frac{\log(d)}{n}}\right)$$.
\end{proof}
\subsection{Proof of Corollary~\ref{cor:ent}}
\begin{proof}
Note that $\calF \subseteq \left\{\bx \mapsto \ip{\bw}{\bx} : \psi(\bw) \leq \log(d)\right\}$ and $\calR_n \leq C\sqrt{\frac{2\log d}{n}}$. By Lemmas~\ref{lem:conc} and~\ref{lem:exp}, $\eta = C\left(\sqrt{\frac{2\log(4/\delta)}{n}} + 4\sqrt{\frac{\log(d)}{n}}\right)$.
\end{proof}
\subsection{Proof of Lemma~\ref{lem:conv_disc}}
\begin{proof}
From \eqref{eqn:interim_step}, we have a bound on the iteration for the Bregman divergence between the interpolating parameters and the current parameter estimates,
\begin{align*}
        \bregd{\bthet}{\wh{\bthet}_{t+1}} &\leq \bregd{\bthet}{\wh{\bthet}_t} + \frac{\lambda^2}{2\sigma}\left\Vert\frac{1}{n}\sum_{i=1}^n\left(u\left(\left\langle\bx_i, \wh{\bthet}_t\right\rangle\right) - y_i\right)\bx_i\xi\left(\wh{\bthet}, \bx_i\right)\right\Vert_2^2\nonumber\\
        &\phantom{=} + \frac{\lambda}{n}\ip{\sum_{i=1}^n\left(u\left(\ip{\bx_i} {\wh{\bthet}_t}\right) - y_i\right)\xi\left(\wh{\bthet}, \bx_i\right)\bx_i}{\bthet - \wh{\bthet}_{t}}.
\end{align*}
Under the realizability assumption of the lemma, we may bound the second term above as
\begin{equation*}
    \frac{\lambda^2}{2\sigma}\left\Vert\frac{1}{n}\sum_{i=1}^n\left(u\left(\left\langle\bx_i, \wh{\bthet}_t\right\rangle\right) - y_i\right)\xi\left(\wh{\bthet}, \bx_i\right)\bx_i\right\Vert_2^2 \leq \frac{\lambda^2 C^2B}{2\sigma}\wh{H}(h_t).
\end{equation*}
We may similarly bound the final term, exploiting monotonicity and Lipschitz continuity of $u(\cdot)$, as
\begin{equation*}
    \frac{\lambda}{n}\ip{\sum_{i=1}^n\left(u\left(\ip{\bx_i} {\wh{\bthet}_t}\right) - y_i\right)\xi\left(\wh{\bthet}, \bx_i\right)\bx_i}{\bthet - \wh{\bthet}_{t}} \leq -\frac{\lambda}{L}\wh{H}(h_t).
\end{equation*}
Putting these together, we have the refined bound on the iteration
\begin{equation*}
    \bregd{\bthet}{\wh{\bthet}_{t+1}} \leq \bregd{\bthet}{\wh{\bthet}_t} + \lambda\left(\frac{\lambda C^2B}{2\sigma} - \frac{1}{L}\right)\wh{H}(h_t).
\end{equation*}
Let $0 < \alpha < 1$. For $\lambda \leq \frac{2\sigma\left(1-\alpha\right)}{C^2BL}$,
\begin{equation*}
    \bregd{\bthet}{\wh{\bthet}_{t+1}} \leq \bregd{\bthet}{\wh{\bthet}_t} - \frac{\lambda \alpha}{L}\wh{H}(h_t).
\end{equation*}
Note that this shows $\bregd{\bthet}{\wh{\bthet}_t}\leq\bregd{\bthet}{\wh{\bthet}_1}$ for all $t$, so that $\norm{\wh{\bthet}_t} \leq \norm{\bthet} + \sqrt{\frac{2\psi(\bthet)}{\sigma}}$. Summing both sides of the above inequality from $t=1$ to $T$ reveals that
\begin{equation*}
    \sum_{t=1}^{T} \wh{H}(h_t) \leq \frac{L}{\lambda\alpha}\left(\bregd{\bthet}{\bthet_1} - \bregd{\bthet}{\wh{\bthet}_{T+1}}\right) \leq \frac{L}{\lambda\alpha}\bregd{\bthet}{\bthet_1}.
\end{equation*}
Because $T$ was arbitrary and the upper bound is independent of $T$, $\sum_{t=1}^\infty \wh{H}(h_t)$ exists and hence $\wh{H}(h_t) \rightarrow 0$ as $t\rightarrow \infty$. Furthermore,
\begin{equation*}
    \min_{t'\in [1, T]}\left\{\wh{H}(h_{t'})\right\}T = \sum_{t=1}^{T}\min_{t'\in [1, T]}\left\{\wh{H}(h_{t'})\right\} \leq \sum_{t=1}^{T}\wh{H}(h_{t}) \leq \frac{L}{\lambda\alpha}\bregd{\bthet}{\bthet_1},
\end{equation*}
so that $\min_{t'\in [1, T]}\left\{\wh{H}(h_{t'})\right\} \leq \frac{L\bregd{\bthet}{\bthet_1}}{\alpha\lambda T}$. By taking $\alpha\to 0$, we obtain the requirement $\lambda < \frac{2\sigma}{C^2 BL}$. To conclude the proof, apply Lemma~\ref{lem:frei} with $a = \sqrt{\frac{2\psi(\bthet)}{\sigma}} + \norm{\bthet}$ and $b = C$.
\end{proof}
\subsection{Proof of Theorem~\ref{thm:imp_reg_disc}}
The proof discretizes the proof of Theorem~\ref{thm:imp_reg}, and is similar to the proof of implicit regularization for mirror descent due to~\citet{azizan_1}.
\begin{proof}
Let $\bar{\bthet} \in \mathcal{A}$ be arbitrary. From \eqref{eqn:breg_inc_simp},
\begin{align*}
    \bregd{\bar{\bthet}}{\wh{\bthet}_{t+1}} &\leq \bregd{\bar{\bthet}}{\wh{\bthet}_t} + \bregd{\wh{\bthet}_{t}}{\wh{\bphi}_{t+1}} + \frac{\lambda}{n}\sum_{i=1}^n\left(u\left(\ip{\bx_i} {\wh{\bthet}_t}\right) - y_i\right)\xi\left(\wh{\bthet}_t, \bx_i\right)\ip{\bx_i}{\bar{\bthet} - \wh{\bthet}_{t}},\\
    &= \bregd{\bar{\bthet}}{\wh{\bthet}_t} + \bregd{\wh{\bthet}_{t}}{\wh{\bphi}_{t+1}}\\
    &\phantom{=} \hspace{.5in} + \frac{\lambda}{n}\sum_{i=1}^n\left(u\left(\ip{\bx_i} {\wh{\bthet}_t}\right) - y_i\right)\xi\left(\wh{\bthet}_t, \bx_i\right)\left(u^{-1}\left(y_i\right) - \ip{\bx_i}{\wh{\bthet}_{t}}\right),
\end{align*}
where we have used that $\overline{\bthet} \in \mathcal{A}$ and applied invertibility of $u(\cdot)$. Summing both sides from $t=1$ to $\infty$,
\begin{align*}
    \bregd{\overline{\bthet}}{\wh{\bthet}_{\infty}} &\leq \bregd{\overline{\bthet}}{\wh{\bthet}_1} + \sum_{t=1}^{\infty}\bregd{\wh{\bthet}_{t}}{\wh{\bphi}_{t+1}} \\
    &\phantom{=} \hspace{.5in} + \frac{\lambda}{n}\sum_{i=1}^n\sum_{t=1}^{\infty}\left(u\left(\ip{\bx_i} {\wh{\bthet}_t}\right) - y_i\right)\xi\left(\wh{\bthet}_t, \xi_i\right)\left(u^{-1}\left(y_i\right) - \ip{\bx_i}{\wh{\bthet}_t}\right).
\end{align*}
The above relation is true for any $\overline{\bthet} \in \mathcal{A}$. Furthermore, the only dependence of the right-hand side on $\overline{\bthet}$ is through the first Bregman divergence. Hence the $\argmin$ of the two Bregman divergences involving $\overline{\bthet}$ must be equal, which shows that $\wh{\bthet}_{\infty} = \argmin_{\overline{\bthet}\in\mathcal{A}}\bregd{\overline{\bthet}}{\wh{\bthet}_1}$.
Choosing $\wh{\bthet}_1 = \argmin_{\bw \in \calC\cap\calM}\psi(\bw)$ completes the proof.
\end{proof}
\subsection{Proof of Theorem~\ref{thm:online_noise}}
\begin{proof}
Let $\xi_t = \xi\left(\wh{\bthet}_t, \bx_t\right)$. From \eqref{eqn:interim_step} adapted to the stochastic optimization setting, we have the bound
\begin{align*}
    \bregd{\bthet}{\wh{\bthet}_{t+1}} &\leq \bregd{\bthet}{\wh{\bthet}_t} + \frac{\lambda^2}{2\sigma}\left\Vert\left(u\left(\ip{\bx_t}{\wh{\bthet}_t}\right) - y_t\right)\bx_t\xi_t\right\Vert_*^2 + \lambda\ip{\left(u\left(\ip{\bx_t}{\wh{\bthet}_t}\right) - y_t\right)\bx_t\xi_t}{\bthet - \wh{\bthet}_t}.
\end{align*}
Note that we can write
\begin{align*}
    \left(u\left(\ip{\bx_t}{\wh{\bthet}_t}\right) - y_t\right)^2 &= \left(u\left(\ip{\bx_t}{\wh{\bthet}_t}\right) - u\left(\ip{\bx_t}{\bthet}\right)\right)^2 + \left(u\left(\ip{\bx_t}{\bthet}\right) - y_t\right)^2\\
    &\phantom{=} \hspace{.5in} + 2\left(u\left(\ip{\bx_t}{\wh{\bthet}_t}\right) - u\left(\ip{\bx_t}{\bthet}\right)\right)\left(u\left(\ip{\bx_t}{\bthet}\right) - y_t\right).
\end{align*}
Using that $u$ is nondecreasing and $L$-Lipschitz,
\begin{align*}
    \ip{\left(u\left(\ip{\bx_t}{\wh{\bthet}_t}\right) - y_t\right)\bx_t\xi_t}{\bthet - \wh{\bthet}_t} &\leq -\frac{1}{L}\left(u\left(\ip{\bx_t}{\wh{\bthet}_t}\right) - u\left(\ip{\bx_t}{\bthet}\right)\right)^2\xi_t\\
    &\phantom{=} \hspace{.4in} + \left(u\left(\ip{\bx_t}{\bthet}\right) - y_t\right)\xi_t\ip{\bx_t}{\bthet-\wh{\bthet}_t}.
\end{align*}
Putting these together, we conclude the bound,
\begin{align*}
    \bregd{\bthet}{\wh{\bthet}_{t+1}} &\leq \bregd{\bthet}{\wh{\bthet}_t} - \lambda\left(\frac{1}{L} - \frac{\lambda C^2B}{2\sigma}\right)\left(u\left(\ip{\bx_t}{\wh{\bthet}_t}\right) - u\left(\ip{\bx_t}{\bthet}\right)\right)^2\xi_t \\
    &\phantom{=} + \lambda \xi_t\left(u\left(\ip{\bx_t}{\bthet}\right) - y_t\right)\left(\ip{\bx_t}{\bthet - \wh{\bthet}_t} + \frac{\lambda C^2B}{\sigma}\left(u\left(\ip{\bx_t}{\wh{\bthet}_t}\right) - u\left(\ip{\bx_t}{\bthet}\right)\right)\right)\\
    &\phantom{=} + \frac{\lambda^2 C^2B^2}{2\sigma} \left(u\left(\ip{\bx_t}{\bthet}\right) - y_t\right)^2.
\end{align*}
Summing both sides from $t=1$ to $T$,
\begin{align*}
    \bregd{\bthet}{\wh{\bthet}_{T+1}} &\leq \bregd{\bthet}{\wh{\bthet}_1} - \lambda\left(\frac{1}{L} - \frac{\lambda C^2B}{2\sigma}\right)\sum_{t=1}^{T}\left(u\left(\ip{\bx_t}{\wh{\bthet}_t}\right) - u\left(\ip{\bx_t}{\bthet}\right)\right)^2\xi_t \\
    &\phantom{=} + \lambda \sum_{t=1}^{T}\xi_t\left(u\left(\ip{\bx_t}{\bthet}\right) - y_t\right)\left(\ip{\bx_t}{\bthet - \wh{\bthet}_t} + \frac{\lambda C^2B}{\sigma}\left(u\left(\ip{\bx_t}{\wh{\bthet}_t}\right) - u\left(\ip{\bx_t}{\bthet}\right)\right)\right)\\
    &\phantom{=} + \frac{\lambda^2 C^2B^2}{2\sigma} \sum_{t=1}^{T}\left(u\left(\ip{\bx_t}{\bthet}\right) - y_t\right)^2.
\end{align*}
Define the filtration $\{\calF_t = \sigma(\bx_1, y_1, \bx_2, y_2, \hdots, \bx_t, y_t, \bx_{t+1})\}_{t=1}^\infty$, and note that
\begin{align*}
    D^{(1)}_t &= \xi_t\left(u\left(\ip{\bx_t}{\bthet}\right) - y_t\right)\ip{\bx_t}{\bthet - \wh{\bthet}_t},\\
    D^{(2)}_t &= \xi_t\left(u\left(\ip{\bx_t}{\bthet}\right) - y_t\right)\left(u\left(\ip{\bx_t}{\wh{\bthet}_t}\right) - u\left(\ip{\bx_t}{\bthet}\right)\right),
\end{align*}
are martingale difference sequences adapted to $\{\calF_t\}$. Furthermore, note that $|D_t^{(1)}| \leq CBR$ and $|D_t^{(2)}| \leq LCBR$ almost surely where $R = \diam(C)$. Hence, by an Azuma-Hoeffding bound, with probability at least $1-\delta/3$,
\begin{align*}
    \sum_{t=1}^{T}D_t^{(1)} &\leq \sqrt{CBRT\log(6/\delta)},\\
    \sum_{t=1}^{T}D_t^{(2)} &\leq \sqrt{LCBRT\log(6/\delta)}.
\end{align*}
The variance term is trivially bounded almost surely,
\begin{equation*}
    \sum_{t=1}^{T}\left(u\left(\ip{\bx_t}{\bthet}\right) - y_t\right)^2 \leq T.
\end{equation*}
Putting these bounds together and rearranging, we conclude that with probability at least $1 - 2\delta/3$,
\begin{align*}
    \lambda\left(\frac{1}{L} - \frac{\lambda C^2B}{2\sigma}\right)&\sum_{t=1}^{T}\left(u\left(\ip{\bx_t}{\wh{\bthet}_t}\right) - u\left(\ip{\bx_t}{\bthet}\right)\right)^2\xi_t \leq \bregd{\bthet}{\wh{\bthet}_1} - \bregd{\bthet}{\wh{\bthet}_{T+1}}\\
    &\phantom{=} + \lambda\sqrt{CBRT\log(6/\delta)} + \frac{\lambda^2 C^2}{\sigma}\sqrt{LCBRT\log(6/\delta)} + \frac{\lambda^2 C^2 B^2T}{2\sigma}.
\end{align*}
Let $\beta \in (0, 1)$ and take $\lambda = \min\left\{\frac{2\sigma(1-\beta)}{C^2BL}, \frac{1}{\sqrt{T}}\right\}$. Define $\beta' = 1 - \frac{C^2 L B}{2\sigma\sqrt{T}}$, and define $\bar{\beta} = \max\{\beta, \beta'\}$. Then $\frac{1}{L} - \frac{\lambda C^2 B}{2\sigma} = \frac{\bar{\beta}}{L} > 0$. Defining $h_t = \left(u\left(\ip{\bx_t}{\wh{\bthet}_t}\right) - u\left(\ip{\bx_t}{\bthet}\right)\right)^2\xi_t$, we find
\begin{align}
    \sum_{t=1}^Th_t &\leq \frac{L}{\bar{\beta}}\max\left\{\sqrt{T}, \frac{C^2 BL}{2\sigma(1-\beta)}\right\}\bregd{\bthet}{\wh{\bthet}_1} + \frac{L}{\bar{\beta}}\sqrt{CBRT\log(6/\delta)}\nonumber\\
    &\phantom{=} \hspace{.75in} + \frac{C^2 L}{\sigma\bar{\beta}}\sqrt{LCBR\log\left(6/\delta\right)} + \frac{C^2 L\sqrt{T}B^2}{2\sigma\bar{\beta}}.
    \label{eqn:online_noise_bound}
\end{align}
By Assumption~\ref{assmp:frei}, noting that $\norm{\bx_t}_* \leq C$ and $\norm{\wh{\bthet}_t} \leq R + \norm{\bthet}$, there exists a fixed $\gamma > 0$ such that $\sum_{t=1}^T\varepsilon_t \leq \frac{1}{\gamma}\sum_{t=1}^Th_t$. We now want to transfer this bound to a bound on $\varepsilon(h_t)$ via Lemma~\ref{lem:martingale}. Define $D_t^{(3)} = \varepsilon(h_t) - \varepsilon_t$, and note that this is a martingale difference sequence adapted to the filtration $\{\calF_t = \sigma(\bx_1, y_1, \bx_2, y_2, \hdots, \bx_t, y_t)\}$. $D_t^{(3)}$ satisfies the following inequalities almost surely,
\begin{align*}
    D_t^{(3)} &\leq \frac{1}{2}L^2 C^2 R,\\
    \Exp{\left(D_t^{(3)}\right)^2|\calF_{t-1}} &\leq \frac{1}{2}L^2 C^2 R\varepsilon(h_t).
\end{align*}
Thus, by Lemma~\ref{lem:martingale}, with probability at least $1-\delta/3$,
\begin{equation*}
    \sum_{\tau=1}^T\varepsilon(h_\tau) \leq \frac{L^2 C^2 R}{2(3-e)}\log(3/\delta) + \frac{1}{3-e}\sum_{\tau=1}^T\varepsilon_\tau.
\end{equation*}
Using \eqref{eqn:online_noise_bound}, we then have with probability at least $1-\delta$,
\begin{align*}
    \sum_{\tau=1}^T\varepsilon(h_\tau) &\leq \frac{L^2 C^2 R}{2(3-e)}\log(3/\delta) + \frac{L}{\bar{\beta}\gamma(3-e)}\max\left\{\sqrt{T}, \frac{C^2 BL}{2\sigma(1-\beta)}\right\}\bregd{\bthet}{\wh{\bthet}_1}\nonumber\\
    &\phantom{=} + \frac{L}{\bar{\beta}\gamma(3-e)}\sqrt{CBRT\log(6/\delta)} + \frac{C^2 L}{\sigma\bar{\beta}\gamma(3-e)}\sqrt{LCBR\log\left(6/\delta\right)} + \frac{C^2 L\sqrt{T}B^2}{2\sigma\bar{\beta}\gamma(3-e)}.
\end{align*}
Noting that $\min_{t < T}\varepsilon(h_\tau) \leq \frac{1}{T}\sum_{\tau=1}^T \varepsilon(h_\tau)$ completes the proof.
\end{proof}
\subsection{Proof of Theorem~\ref{thm:online_real}}
\begin{proof}
Again from \eqref{eqn:interim_step} adapted to the stochastic optimization setting, we have the bound
\begin{align*}
    \bregd{\bthet}{\wh{\bthet}_{t+1}} &\leq \bregd{\bthet}{\wh{\bthet}_t} + \frac{\lambda^2}{2\sigma}\left\Vert\left(u\left(\ip{\bx_t}{\wh{\bthet}_t}\right) - u\left(\ip{\bx_t}{\bthet}\right)\right)\xi_t\bx_t\right\Vert_*^2 \\
    &\phantom{=} \hspace{1.25in} + \lambda\ip{\left(u\left(\ip{\bx_t}{\wh{\bthet}_t}\right) - u\left(\ip{\bx_t}{\bthet}\right)\right)\xi_t\bx_t}{\bthet - \wh{\bthet}_t},\\
    &\leq \bregd{\bthet}{\wh{\bthet}_t} + \frac{\lambda^2C^2B}{2\sigma}\left(u\left(\ip{\bx_t}{\wh{\bthet}_t}\right) - u\left(\ip{\bx_t}{\bthet}\right)\right)^2\xi_t \\
    &\phantom{=} \hspace{1.25in} - \frac{\lambda}{L}\left(u\left(\ip{\bx_t}{\wh{\bthet}_t}\right)- u\left(\ip{\bx_t}{\bthet}\right)\right)^2\xi_t,\\
    &= \bregd{\bthet}{\wh{\bthet}_t} - \frac{\lambda}{L}\left(1 - \frac{\lambda LC^2B}{2\sigma}\right)\left(u\left(\ip{\bx_t}{\wh{\bthet}_t}\right) - u\left(\ip{\bx_t}{\bthet}\right)\right)^2\xi_t.
\end{align*}
Let $0 < \beta < 1$. Taking $\lambda = \frac{\left(1-\beta\right)2\sigma}{LC^2B}$,
\begin{align*}
    \bregd{\bthet}{\wh{\bthet}_{t+1}} &\leq \bregd{\bthet}{\wh{\bthet}_t} - \frac{2\sigma(1-\beta) \beta}{L^2C^2B}\left(u\left(\ip{\bx_t}{\wh{\bthet}_t}\right) - u\left(\ip{\bx_t}{\bthet}\right)\right)^2\xi_t,
\end{align*}
so that $\bregd{\bthet}{\wh{\bthet}_{t+1}} \leq \bregd{\bthet}{\wh{\bthet}_t} \leq \hdots \leq \bregd{\bthet}{\wh{\bthet}_1}$. Let $W$ be such that $\norm{\bthet} = W\sqrt{\frac{2\psi(\bthet)}{\sigma}}$. Then $\bregd{\bthet}{\wh{\bthet}_1} \leq \psi(\bthet)$ so that $\norm{\wh{\bthet}_t} \leq \left(1 + W\right)\sqrt{\frac{2\psi(\bthet)}{\sigma}}$ by $\sigma$-strong convexity of $\psi$ with respect to $\norm{\cdot}$. Summing both sides from $1$ to $T-1$ leads to the inequality
\begin{equation*}
    \bregd{\bthet}{\wh{\bthet}_T} \leq \bregd{\bthet}{\wh{\bthet}_1} - \frac{2\sigma(1-\beta) \beta}{L^2C^2B}\sum_{t=1}^{T-1}\left(u\left(\ip{\bx_t}{\wh{\bthet}_t}\right) - u\left(\ip{\bx_t}{\bthet}\right)\right)^2\xi_t.
\end{equation*}
Rearranging, using positivity of the Bregman divergence, and defining $h_t = \left(u\left(\ip{\bx_t}{\wh{\bthet}_t}\right) - u\left(\ip{\bx_t}{\bthet}\right)\right)^2\xi_t$, we conclude that
\begin{equation*}
    \sum_{t=1}^{T-1}h_t \leq \frac{L^2C^2B}{2\sigma(1-\beta)\beta}\bregd{\bthet}{\wh{\bthet}_1}.
\end{equation*}
Applying Assumption~\ref{assmp:frei} shows that there exists a $\gamma > 0$ such that
\begin{equation}
    \sum_{t=1}^{T-1}\varepsilon_t \leq \frac{L^2C^2B}{2\sigma(1-\beta)\beta\gamma}\bregd{\bthet}{\wh{\bthet}_1}.
    \label{eqn:online_bound}
\end{equation}
We would now like to transfer the bound \eqref{eqn:online_bound} to a bound on $\varepsilon(h_t)$. Define $D_t = \varepsilon\left(h_t\right) - \varepsilon_t$, and note that $\{D_t\}_{t=1}^\infty$ is a martingale difference sequence adapted to the filtration $\{\calF_t = \sigma(\bx_1, \bx_2, \hdots, \bx_t)\}_{t=1}^\infty$. Note that, almost surely,
\begin{align*}
    D_t &\leq \varepsilon(h_t) = \frac{1}{2}\Expsub{\bx \sim \calD}{\left(u\left(\ip{\wh{\bthet}_t}{\bx}\right) - u\left(\ip{\bthet}{\bx}\right)\right)^2}\\
    &\leq \frac{1}{2}L^2C^2\Vert\wh{\bthet}_t - \bthet\Vert^2 \leq \frac{L^2 C^2}{\sigma}\bregd{\bthet}{\wh{\bthet}_t} \leq \frac{L^2 C^2}{\sigma}\bregd{\bthet}{\wh{\bthet}_1}
\end{align*}
where we have applied $\sigma$-strong convexity of $\psi$ with respect to $\Vert\cdot\Vert$ to upper bound $\Vert\wh{\bthet}_t-\bthet\Vert^2$ by the corresponding Bregman divergence. 
Now, consider the following bound on the conditional variance
\begin{align*}
    \Exp{D_t^2|\calF_{t-1}} &= \Exp{\varepsilon(h_t)^2 - 2\varepsilon(h_t)\varepsilon_t + \varepsilon_t^2|\calF_{t-1}},\\
    &= \varepsilon(h_t)^2 - 2\varepsilon(h_t)^2 + \Exp{\varepsilon_t^2|\calF_{t-1}},\\
    &\leq \Exp{\varepsilon_t^2|\calF_{t-1}},\\
    &= \frac{1}{4}\Exp{\left(u\left(\ip{\bx_t}{\wh{\bthet}_t}\right) - u\left(\ip{\bx_t}{\bthet}\right)\right)^4|\calF_{t-1}},\\
    &\leq \frac{L^2 C^2 \bregd{\bthet}{\wh{\bthet}_1}}{\sigma}\varepsilon(h_t).
\end{align*}
Hence by Lemma~\ref{lem:martingale}, with probability at least $1-\delta$,
\begin{align*}
    \sum_{\tau=1}^t \left(\varepsilon(h_\tau) - \varepsilon_\tau\right) &\leq \frac{L^2 C^2}{\sigma}\bregd{\bthet}{\wh{\bthet}_1}\log(1/\delta) + (e-2)\sum_{\tau=1}^t \varepsilon(h_\tau).
\end{align*}
Rearranging terms,
\begin{align*}
    (3-e)\sum_{\tau=1}^t \varepsilon(h_\tau) &\leq \frac{L^2 C^2}{\sigma}\bregd{\bthet}{\wh{\bthet}_1}\log(1/\delta) + \sum_{\tau=1}^t \varepsilon_\tau.
\end{align*}
Applying the bound from \eqref{eqn:online_bound},
\begin{align*}
    (3-e)\sum_{\tau=1}^t \varepsilon(h_\tau) &\leq \frac{L^2 C^2}{\sigma}\bregd{\bthet}{\wh{\bthet}_1}\left(\log(1/\delta) + \frac{1}{2(1-\beta)\beta\gamma}\right).
\end{align*}
We then conclude
\begin{equation*}
    \min_{t < T} \varepsilon(h_t) \leq \frac{1}{T}\sum_{\tau=1}^T \varepsilon(h_\tau) \leq \frac{L^2 C^2 \bregd{\bthet}{\wh{\bthet}_1}}{\sigma(3-e)T}\left(\log(1/\delta) + \frac{B}{2(1-\beta)\beta\gamma}\right),
\end{equation*}
which completes the proof.
\end{proof}

\end{document}